\documentclass{article}

\usepackage{booktabs}
\usepackage{makecell} 
 \usepackage[preprint]{neurips_2026}

\usepackage[utf8]{inputenc} % allow utf-8 input
\usepackage[T1]{fontenc}    % use 8-bit T1 fonts
\usepackage{hyperref}       % hyperlinks
\usepackage{url}            % simple URL typesetting
\usepackage{booktabs}       % professional-quality tables
\usepackage{amsfonts}       % blackboard math symbols
\usepackage{nicefrac}       % compact symbols for 1/2, etc.
\usepackage{microtype}      % microtypography
\usepackage{xcolor}         % colors
\usepackage{wrapfig}
\usepackage{subcaption}

\usepackage{amsmath,amsfonts,bm,mathtools}

\def\eqref#1{equation~\ref{#1}}
\def\1{\bm{1}}

\newcommand{\defeq}{\coloneqq}

\DeclareMathAlphabet{\mathsfit}{\encodingdefault}{\sfdefault}{m}{sl}
\SetMathAlphabet{\mathsfit}{bold}{\encodingdefault}{\sfdefault}{bx}{n}

\usepackage{amsmath}
\usepackage{amssymb}
\usepackage{mathtools}
\usepackage{amsthm}
\usepackage{mathrsfs}

\usepackage{thmtools}
\usepackage{thm-restate}

\usepackage[capitalize,noabbrev]{cleveref}

\theoremstyle{plain}
\newtheorem{theorem}{Theorem}[section]

\newtheorem{lemma}[theorem]{Lemma}
\newtheorem{corollary}[theorem]{Corollary}
\theoremstyle{definition}
\newtheorem{definition}[theorem]{Definition}
\newtheorem{assumption}[theorem]{Assumption}
\theoremstyle{remark}

\DeclareMathOperator{\rank}{rank}
\DeclareMathOperator{\cl}{cl}
\DeclareMathOperator{\LSE}{LSE}

\usepackage[textsize=tiny]{todonotes}

\title{A Unified Geometric Framework for Weighted Contrastive Learning}

\author{%
  Raphaël Vock \quad
  Edouard Duchesnay \quad
  Benoit Dufumier \\
  GAIA Lab, NeuroSpin, CEA, CNRS \\
  Université Paris-Saclay, Gif-sur-Yvette, France \\
  \texttt{\{raphael.vock,edouard.duchesnay,benoit.dufumier\}@cea.fr}
}

\begin{document}

\maketitle

\begin{abstract}
    Contrastive learning (CL) aims to preserve relational structure between samples by learning representations that reflect a similarity graph. Yet, the geometry of the resulting embeddings remains poorly understood. Here we show that weighted InfoNCE objectives can be interpreted as Distance Geometry Problems, where the weighting scheme specifies the target geometry to be realized by the representation. This viewpoint yields exact characterizations of the optimal embeddings for several supervised and weakly supervised objectives. In supervised classification, both SupCon and Soft SupCon (a dense relaxation of it where pairs from distinct classes have small non-zero similarity) collapse samples within each class to a single prototype. However, while balanced SupCon recovers the classical regular simplex geometry, class imbalance breaks this symmetry: SupCon induces non-uniform inter-class similarities depending on class sizes, whereas Soft SupCon preserves a regular simplex geometry regardless of class imbalance. In continuous-label settings, our framework reveals a different failure mode: $y$-Aware CL generally cannot attain its entropic optimum unless the labels lie on a hypersphere, exposing a mismatch between Euclidean label weights and spherical latent similarity. By contrast, geometrically consistent choices such as Euclidean–Euclidean weighting or $\mathbb{X}$-CLR admit unique optimal embeddings. Our results show that the choice of weighting scheme determines whether contrastive learning is geometrically realizable, degenerate, or inconsistent, providing a principled framework for designing contrastive objectives.
\end{abstract}

\section{Introduction}

\begin{figure}[h]
    \centering
    \includegraphics[width=.9\linewidth]{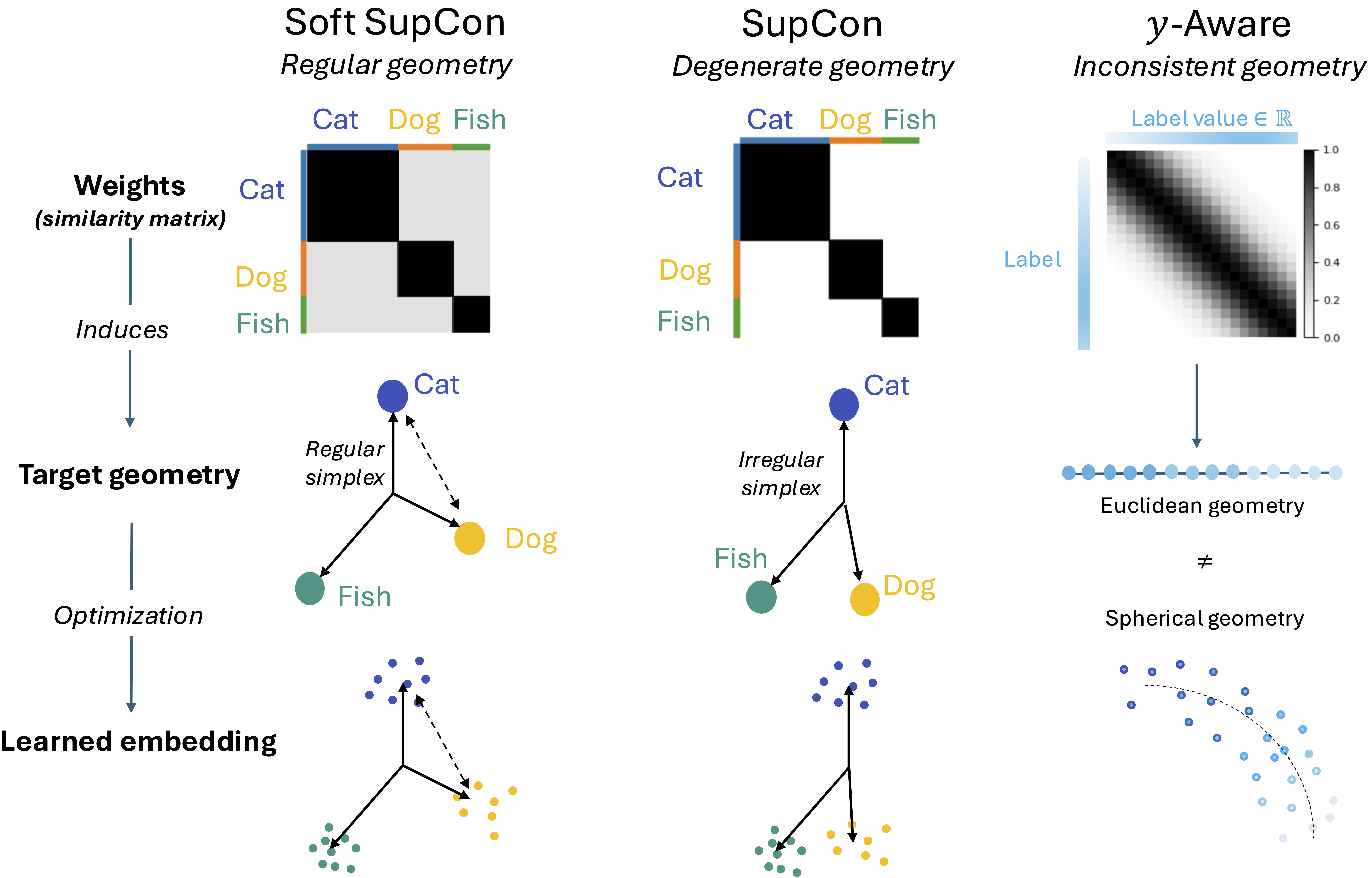}
    \caption{The weighting matrix (top) defines pairwise similarities that induce a target geometry (middle), which the embedding attempts to realize (bottom). Left: dense weights (Soft SupCon) yield a regular, symmetric geometry. Middle: sparse weights (SupCon) produce degenerate solutions, where class imbalance distorts prototype geometry. Right: continuous Euclidean label similarities ($y$-Aware) induce a geometry that is inconsistent with cosine similarities, leading to distortion.}
    \label{fig1}
\end{figure}

% Context: Empirical success CL but poor geometrical understanding
Contrastive Learning (CL) plays a pivotal role in current supervised and self-supervised deep representation learning models. It was popularized through several successful frameworks: SimCLR~\cite{chen2020simple} and MoCo for self-supervised learning of visual representations, SupCon~\cite{khosla2020supervised} for supervised classification, and CLIP~\cite{radford2021learning} for vision-language representation learning. Later on, it was adapted to perform supervised regression~\cite{zha2023rank, barbano2023contrastive}, handling continuous targets and weakly supervised learning~\cite{dufumier2021contrastive, schneider2023learnable} when auxiliary information (or \textit{meta-data}) is available during training as a means of constraining representations. Despite their empirical success, the geometry of the representations learned by contrastive objectives remains poorly understood. In particular, it is unclear what structure these objectives are implicitly trying to realize in the embedding space, and whether this structure can always be achieved.

% Context: CL from a geometric perspective
From a geometric perspective, CL can be viewed as attempting to realize a set of pairwise relationships between samples in a low-dimensional space. These relationships are specified implicitly by the weighting scheme used in the loss, which encodes prior knowledge about which samples should be similar or dissimilar. This raises a fundamental question: \emph{does there exist a representation satisfying these constraints, and if so, is it unique?}

%In essence, contrastive learning aims to preserve the relationship between input samples in an abstract representation space. From a graph-based perspective~\cite{cabannes2023active}, it learns to attract samples in the same neighborhood while repelling dissimilar samples in order to respect a similarity graph. This is a fundamental shift from previous reconstruction-based or cross-entropy-based objectives for which the mapping between the input and output is explicitly learned. In the language of \cite{shepard1975internal, shepard1981psychological}, CL learns a ``second-order isomorphism", i.e. the relationships between the external objects in the world, akin to our mental representations. 

% Current theoretical limitation: lack of precise characterization  
%In order to learn the similarity graph between samples, several weighted versions of the InfoNCE loss have been proposed depending on the prior knowledge available during training: class labels~\cite{khosla2020supervised}, latent class labels~\cite{arora2019theoretical}, textual descriptions~\cite{sobal2024}, and meta-data~\cite{dufumier2021contrastive}.

In this work, we show that weighted contrastive learning can be interpreted as a Distance Geometry Problem~\cite{liberti2014euclidean} (DGP), where the weighting scheme defines a target pairwise geometry. Under this view, learning representations amounts to finding a geometric realization of this target structure. This perspective reveals three regimes: some objectives define realizable geometries with unique solutions, others lead to degenerate solutions, and some are geometrically inconsistent.

%In this work, we take a step forward by giving a general formulation of the weighted InfoNCE objective and relating it to the Distance Geometry Problem (DGP)~\cite{liberti2014euclidean}, well-known in data visualization (MDS~\cite{chen2008multidimensional}) and wireless sensor network localization, among others areas. 

In supervised classification, we prove that both SupCon and a soft relaxation of it collapse samples within each class, but differ under class imbalance: SupCon induces class-size-dependent prototype configurations, whereas the soft variant preserves a regular simplex geometry independent of imbalance. In continuous-label settings, we show that $y$-Aware CL~\cite{dufumier2021contrastive} is generally inconsistent, while geometrically consistent formulations such as $\mathbb{X}$-CLR~\cite{sobal2024} admit realizable and unique optima.

Beyond theory, this perspective provides practical guidance for designing contrastive objectives by matching the geometry induced by the weights with that of the embedding space. We also introduce metrics to evaluate convergence to the predicted optima and validate our results experimentally.

In summary, we make the following contributions:

% \begin{itemize}
%   \item We show that state-of-the-art losses proposed for supervised and weakly-supervised contrastive learning (SupCon, $\mathbb{X}$-CLR, $y$-Aware CL) can be interpreted as solving a Distance Geometry Problem associated with a dissimilarity function induced by a weighting scheme.
%   \item We characterize the existence and uniqueness of the solution, i.e. the geometry of the optimal embedding, using the well-known theory of Euclidean Distance Matrices (EDM).
%   \item We provide a novel lower bound for the value of the loss and provide three new metrics, all geometric in nature, as a principled means to evaluate the convergence of models and the optimality of their representations. 
%   \item We provide empirical evidence for the theory at different scales and across datasets (MNIST, CIFAR, ImageNet).
% \end{itemize}

\begin{itemize}
    \item We show that weighted InfoNCE defines a target pairwise geometry and that minimizing the loss amounts to solving a DGP.
    \item We prove that both SupCon and Soft SupCon (a smooth relaxation in which inter-class pairs are assigned a small but non-zero similarity) collapse each class to a prototype, but differ under imbalance: SupCon produces class-size-dependent prototype geometry, whereas Soft SupCon preserves regular simplex geometry.
    \item We show that the label-space geometry and latent-space similarity must be matched for the optimum to be realizable (e.g. as in $\mathbb{X}$-CLR). As a consequence, we prove that $y$-Aware CL generally cannot attain its entropic lower bound because it combines Euclidean label distances with spherical latent similarity.
    \item We provide a novel lower bound for the value of the loss and provide three new metrics, all geometric in nature, as a principled means to evaluate the convergence of models and the optimality of their representations. 
\end{itemize}

\section{Related Work}

Theoretical aspects of CL have mainly been studied from three perspectives: information theory, probability and graph theory, and metric learning. Our approach is in essence a geometric one and mostly aligns with this last view of CL. 

% First MI view of CL (Oord, Hjelm, Bachman) with some bounds (Poole) on the loss : not enough to characterize the geometry or explain dowsntream performance (Tschanenn). 
\textbf{Mutual information perspective.} Early modern contrastive objectives were motivated by the fact that they can be understood as tractable proxies for mutual information (MI) maximization. Contrastive Predictive Coding~\cite{oord2018cpc} introduced the InfoNCE loss and showed how negative sampling yields a scalable lower bound on MI in sequential prediction settings. In parallel, Deep InfoMax proposed MI maximization between global and local feature statistics for unsupervised representation learning~\cite{hjelm2018deepinfomax}, and related multi-view MI objectives were developed for self-supervised learning~\cite{bachman2019amdim}. Subsequent work clarified the behavior and limitations of variational MI bounds (including InfoNCE) through bias–variance tradeoffs~\cite{poole2019variational}. However, MI does not fully explain representation quality or geometry: empirical evidence~\cite{tschannen2020mutualinfo} shows that downstream performance strongly depends on architectural and estimator-induced inductive biases, rather than on MI estimation alone.

% Probabilistic view of CL: most of them in self-supervised case, Saunshi treated it with the concept of latent classes then HaoChen refined it with the notion of augmentation graph and its partition, a weaker formulation. Saunshi later argued it was impossible to characterize the optimum representation explaining downstream performance if no constraints is set on the family of encoders. Wang & Isola treated the ssl case by splitting the loss into two parts, one aligment term and uniformity term. however, as argued by the authors, it is not possible to find an encoder that is both perfectly aligned and uniform with a finite number of data, limiting their analysis. 
\textbf{Probabilistic and graph-based view.} Another line of research models positives as samples sharing an unobserved semantic factor (latent class), yielding generalization guaranties for downstream classification from contrastive pretraining~\cite{arora2019theoretical}. More recent analyses shift from strong assumptions on latent-class sampling to augmentation-driven positive pairs by introducing an augmentation graph whose spectral structure is related to contrastive objectives~\cite{haochen2021provable, haochen2022theoretical}. Stronger results were obtained on downstream performances with this formulation by assuming semantic clustering of this augmentation graph. Meanwhile, an attempt to unify contrastive and non-contrastive approaches was proposed~\cite{balestriero2022contrastive} based on a ``similarity graph'', a rough approximation of the augmentation graph. A complementary view is given in ~\cite{wang2020alignment} which interprets contrastive learning as trading off an alignment term (pulling positives together) and a uniformity term (uniformly spreading representations out on a hypersphere), while also noting the inherent limitations to learning this trade-off with limited data.

%At the same time, it has been argued that understanding contrastive learning—and in particular relating loss minimization to downstream performance—requires explicitly accounting for inductive biases such as the model class and optimization, since function-class-agnostic analyses can become vacuous in realistic regimes~\cite{saunshi2022inductive}. 

% More geometrical perspective of supervised contrastive learning is given by Graff, where they show that optimal hyperspherical representations of SupCon loss for a regular simplex. This is the first caracterization of the geometry of a contrastive loss in the supervised case with discrete labels. 
\textbf{Metric Learning.} Before the MI and probabilistic/graph-based perspectives of CL, early works on dimensionality reduction~\cite{hadsell2006dimensionality} proposed a more geometric formulation of CL with subsequent pairwise and triplet-based losses, which explicitly enforce geometric structure in embedding space. Modern self-supervised~\cite{chen2020simple, he2020momentum} CL frameworks can be seen as scalable extensions of this metric learning view that implicitly shape the geometry of representations on the unit sphere. In the discrete supervised setting, SupCon~\cite{khosla2020supervised} extends self-supervised objectives using class labels to define positive samples. A more explicit geometric characterization was provided by~\cite{graf2021dissecting}, which analyzes the loss-minimizing configurations of SupCon and states that, under idealized conditions, class representations approach a regular simplex arrangement on the hypersphere. In the regression case, kernel-based variants~\cite{dufumier2021contrastive, barbano2023contrastive} and rank-based variants~\cite{zha2023rank} of the InfoNCE objective have been derived, but as far as we are aware a geometrical characterization of their optima has not been given until now.

% Overall, while contrastive learning has been studied actively in self-supervised and supervised cases, the geometrical realization of its optimal representation is still not well understood, especially compared to more classical frameworks with cross-entopy loss (discrete case) or l1/l2 loss (continuous case). 
Overall, while contrastive learning has been studied extensively from MI and probabilistic/graph-based viewpoints, a precise characterization of the geometry of optimal representations (including existence, uniqueness and the resulting distance structure) remains as of yet unexplored.

\section{Analysis of Weighted Contrastive Learning}

\subsection{Unified formulation of weighted Contrastive Learning losses}

\paragraph{Problem setup.} The goal of representation learning~\cite{bengio2013representation} is to learn a mapping $f: \mathcal{X} \rightarrow \mathcal{Z}$ from input data $X=(x_i)_{i\in [1..n]} \in \mathcal{X}^n \subseteq \mathbb{R}^{n\times p}$ such that its representation $Z=(z_i)_{i\in [1..n]}:= f(X) \in \mathcal{Z}^n \subseteq \mathbb{R}^{n\times q}$ retains useful information for a variety of downstream tasks. 
% In all that follows, $\|\cdot\|$ denotes the Euclidean norm, $\cos$ is the cosine similarity between a pair nonzero vectors, $I_n$ the $n\times n$ identity matrix, and $E_n$ the $n\times n$ matrix with unit coefficients. 

%To show the relationship between InfoNCE and MDS, we will write the InfoNCE loss as a simple cross-entropy loss between two distributions. Then, we will demonstrate that these distributions match when an underlying MDS problem is solved with a specific distance metric (or equivalently similarity metric). 

\begin{definition}[$w$-InfoNCE loss]
    Given a similarity matrix $W=(w_{ij}) \in \mathbb{R}^{n\times n}$ such that $w_{ij} \ge 0$ for $i\neq j$, and $S=(s_{ij})$ where $s_{ij} = s(z_i, z_j)=s(f(x_i), f(x_j))$ for $s: \mathbb R^q\times \mathbb R^q \to \mathbb R$ some measure of similarity in the latent space, the \textit{weighted InfoNCE loss} is defined as
    \begin{equation*}
        \mathcal{L}_{\mathrm{NCE}}^W := -\frac{1}{n}\sum_{i=1}^n \sum_{j \neq i} \frac{w_{ij}}{\sum_{k\neq i} w_{ik}} \log\left(\frac{\exp{s_{ij}}}{\sum_{k \neq i} \exp{s_{ik}}} \right)
    \end{equation*}
\end{definition}

\textbf{Remark.} Originally, SupCon, $y$-Aware and $\mathbb{X}$-CLR all define the InfoNCE loss in terms of several augmentations of every given sample. We argue this case can be recovered from the above formulation by concatenating several data augmentations in a single matrix $X$.

Let us we recall the weighting schemes that correspond to several well-known frameworks:
\begin{itemize}
    %\item SimCLR~\cite{chen2020simple}: $w_{ij}=1$ if $(x_i, x_j)$ are transformations of the same sample and 0 otherwise. 
    \item SupCon~\cite{khosla2020supervised}: $w_{ij}=1$ if $y_i=y_j$ and 0 otherwise for a labeled dataset $(X, Y)$ with $Y\in [1..C]^n$ and $C$ the number of classes.
    \item \textit{y}-Aware~\cite{dufumier2021contrastive}: $w_{ij}=\exp{\left(-\|y_i-y_j\|^2\right)}$ for a weakly-labeled dataset $(X, Y)$ with $Y\in \mathbb{R}^{n\times \ell}$ the meta-data. 
    \item $\mathbb{X}$-CLR~\cite{sobal2024}: $w_{ij}=\exp(\cos(y_i, y_j)/\tau')$ for an image-caption dataset $(X,Y)$ with $Y$ the textual descriptions of the images encoded by a pretrained text encoder. 
\end{itemize}
In each of the above cases, similarity in the latent space is measured with cosine similarity: $s_{ij}=\cos(z_i, z_j)/\tau$, with $\tau$ a temperature hyperparameter.

% We make one important assumption on $S$ and $W$ before stating our main theorems:

\begin{assumption}[Symmetry of $S$ and $W$]
    The similarity function $s$ and similarity matrix $W$ are symmetric, so $s_{ij}=s_{ji}$ and $w_{ij}=w_{ji}$ for all $i, j\in[1..n]$. 
    \label{hyp:symmetry}
\end{assumption}

We divide the analysis into two parts. First, we study the \emph{soft} weighting regime, where $W$ is strictly positive (§~\ref{sec:entropy_lower_bound}--\ref{sec:sup_regression}). We then turn to the \textit{hard} or \textit{sparse} regime, where $W$ may contain zeros (§~\ref{sec:sparse_weights}--\ref{sec:supcon}).

\subsection{Entropic lower bound}
\label{sec:entropy_lower_bound}
% \begin{assumption}[Strict positivity of $W$]\label{assumption:w_positive}
%     Off-diagonal terms in $W$ have strictly positive values: $w_{ij}>0$ for $i\neq j$.
%     \label{hyp:w_positive}
% \end{assumption}

Under a strict-positiveness assumption, we can state the entropic lower bound of the $w$-InfoNCE loss (proof in Appendix~\ref{proof:optimal_embedding}):
\begin{restatable}[Entropic lower bound of $w$-InfoNCE]{theorem}{StatementInfonceOptimum}\label{th:infonce_optimum}
    Under \cref{hyp:symmetry} and assuming $w_{ij}>0$ for $i\neq j$, viewing the weighted InfoNCE loss $\mathcal L_\mathrm{NCE}^W$ as a function of $S=(s_{ij})$ over the space of symmetric similarity matrices, the global minimum is attained precisely when $$s_{ij}^* = \log(w_{ij})+c \qquad \text{for}\ i \neq j$$ for some constant $c \in \mathbb R$. Moreover, that minimum equals
    \begin{equation}\label{eq:info_nce_min}
        \min_S\mathcal L_\mathrm{NCE}^W = -\frac{1}{n}\sum_{i \neq j} \frac{w_{ij}}{\sum_{k\neq i} w_{ik}} \log\left(\frac{w_{ij}}{\sum_{k \neq i} w_{ik}} \right).
    \end{equation}
\end{restatable}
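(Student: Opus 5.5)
The plan is to rewrite the loss, row by row, as a cross-entropy and then apply Gibbs' inequality. For each anchor $i$ define two probability distributions on $\{j \neq i\}$:
\[
p_i(j) = \frac{w_{ij}}{\sum_{k\neq i} w_{ik}}, \qquad q_i(j) = \frac{\exp s_{ij}}{\sum_{k\neq i}\exp s_{ik}}.
\]
Both are well defined and have full support, because $w_{ij}>0$ for $i\neq j$. With this notation,
\[
\mathcal L_\mathrm{NCE}^W = \frac1n\sum_i H(p_i,q_i) = \frac1n\sum_i \bigl[H(p_i) + \mathrm{KL}(p_i\|q_i)\bigr].
\]
Gibbs' inequality gives $\mathrm{KL}(p_i\|q_i)\ge 0$ for every $i$. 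Hence $\mathcal L_\mathrm{NCE}^W \ge \frac1n\sum_i H(p_i)$, which is exactly the right-hand side of \eqref{eq:info_nce_min}. Equality holds if and only if $q_i = p_i$ for every $i$.

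Next I characterize when $q_i = p_i$ for all $i$. Row $i$ matches exactly when $s_{ij} = \log w_{ij} + c_i$ for all $j\neq i$, with $c_i$ a row-dependent constant coming from the softmax normalization. So the unconstrained minimizers (no symmetry imposed on $S$) are precisely $s_{ij} = \log w_{ij} + c_i$.

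The main step, and the one I expect to be the real obstacle, is to show that symmetry forces all the $c_i$ to be equal. Suppose $S$ is symmetric. Since $W$ is symmetric by \cref{hyp:symmetry}, we get
\[
\log w_{ij} + c_i = s_{ij} = s_{ji} = \log w_{ji} + c_j,
\]
so $c_i = c_j$ for every pair $i\neq j$. This requires $n\ge 2$; when $n\ge 3$ every pair is directly comparable, so all $c_i$ equal a common $c$.

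Conversely, take $s_{ij} = \log w_{ij} + c$ for $i \neq j$, with diagonal entries arbitrary since they do not enter the loss. This matrix is symmetric and makes every row match, so the bound is attained. The minimum over symmetric $S$ therefore equals the entropic bound, and the minimizers are exactly the stated family. I will note explicitly that only off-diagonal entries are determined, which is consistent with the phrase ``for $i\neq j$'' in the statement.
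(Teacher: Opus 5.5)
Your proposal is correct and is essentially the paper's proof: the paper combines your row-wise distributions into a single joint distribution $p_W(i,j)$, $p_S(i,j)$ on $[1..n]^2$ (your $p_i(j)/n$ and $q_i(j)/n$), applies Gibbs' inequality, and then uses the symmetry of $S$ and $W$ as you do, to show that the row constant $\exp(s_{ij})/w_{ij}$ does not depend on $i$. Your caveat about $n\ge 3$ is unnecessary, since $c_i=c_j$ follows directly for every pair $i\neq j$ as soon as $n\ge 2$.
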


This result echoes what has been previously obtained using a probabilistic framework~\cite{poole2019variational}, with $s$ being the ``critic'' discriminating positive pairs against negative ones. In our case, we obtain a geometric characterization of $Z^*$ which is more precise than previous works\footnote{In~\cite{poole2019variational}, the constant $c$ depends on $i$.} (in part thanks to Assumption~\ref{hyp:symmetry}) and which does not depend on a dichotomy between positive and negative samples as in the classical view of contrastive learning. %Note that previous work implicitly took Assumption~\ref{hyp:w_positive} for granted without discussing its validity in real-case scenarios.

\subsection{Connection with the Distance Geometry Problem}

We can now express the relationship between the $w$-InfoNCE loss and the Distance Geometry Problem (DGP)~\cite{liberti2014euclidean} as immediate corollaries of \cref{th:infonce_optimum}:

\begin{restatable}{corollary}{StatementEuclideanCorollary}\label{corr:optimal_embedding}
    When latent similarity is given by $s_{ij} = - \|z_i - z_j\|^2$ and $w_{ij}=\exp(-d_{ij})$ for a symmetric dissimilarity matrix $D=(d_{ij})$, attaining the entropic lower bound of $\mathcal L_{\mathrm{NCE}}^W$ stated in \cref{th:infonce_optimum} amounts to solving the DGP
    \begin{equation}
        \|z_i^* - z_j^*\|^2 = d_{ij} + c \qquad \textrm{for } i\neq j,
        \label{eq:mds_pb}
    \end{equation}
    for any constant $c\in \mathbb{R}$.
    \label{corr:infonce_eq_mds}
\end{restatable}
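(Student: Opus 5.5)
This is a direct substitution into \cref{th:infonce_optimum}, so the plan is to verify its hypotheses and then translate its characterization of the minimizers. First we check the assumptions.
\begin{itemize}
\item The choice $w_{ij}=\exp(-d_{ij})$ is strictly positive for every $i\neq j$, because the exponential is positive. This holds whatever the sign of $d_{ij}$, so no nonnegativity of $D$ is needed.
\item $W$ is symmetric because $D$ is.
\item The latent similarity $s_{ij}=-\|z_i-z_j\|^2$ is symmetric in $(z_i,z_j)$.
\end{itemize}
Hence \cref{hyp:symmetry} holds and \cref{th:infonce_optimum} applies. It gives the value of the entropic lower bound in \eqref{eq:info_nce_min}. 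It also says that, as a function of a symmetric matrix $S$, the loss attains this bound exactly when $s_{ij}=\log(w_{ij})+c'$ for $i\neq j$, for some constant $c'\in\mathbb R$.

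Next we pass from $S$ to $Z$. The loss depends on $Z$ only through $S(Z)=(-\|z_i-z_j\|^2)_{ij}$, and $S(Z)$ is always symmetric. So an embedding $Z^*$ attains the entropic lower bound if and only if $S(Z^*)$ is one of the minimizing similarity matrices described above. The bound attained over $Z$ may be strictly larger than the bound over all symmetric $S$, but that is precisely the realizability question the corollary encodes. The statement only claims that attaining the bound is \emph{equivalent to} solving the DGP, not that the DGP always has a solution.

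Finally we substitute. We have $\log w_{ij}=-d_{ij}$, so the optimality condition reads $-\|z_i^*-z_j^*\|^2=-d_{ij}+c'$. Setting $c=-c'$ gives $\|z_i^*-z_j^*\|^2=d_{ij}+c$ for all $i\neq j$, which is \eqref{eq:mds_pb}. As $c'$ ranges over $\mathbb R$, so does $c$. Conversely, any $Z^*$ satisfying \eqref{eq:mds_pb} for some $c$ yields $s_{ij}=\log w_{ij}-c$, which is a minimizer by the theorem.

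No step is a real obstacle. The one point needing care is the phrasing "for any constant $c$": it should be read as "for some $c\in\mathbb R$". Solvability of \eqref{eq:mds_pb} may hold only for certain values of $c$, for example those making $(d_{ij}+c)$ a Euclidean distance matrix of embedding dimension at most $q$. The corollary defers that question to later analysis.
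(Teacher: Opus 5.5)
Your proposal is correct and takes the same approach as the paper, which gives no separate argument and treats this as an immediate corollary of \cref{th:infonce_optimum}: substitute $s_{ij}=-\|z_i-z_j\|^2$ and $\log w_{ij}=-d_{ij}$ into $s_{ij}=\log w_{ij}+c'$, then flip the sign of the constant. Your reading of ``for any constant $c$'' as ``for some $c\in\mathbb R$'' is also how the paper uses the corollary in the proof of \cref{th:mds_solution}.
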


\begin{restatable}{corollary}{StatementSphericalCorollary}
    When $s_{ij} = \cos(z_i,z_j)/\tau$ and $w_{ij}=\exp(-{d_{ij}})$ for a symmetric dissimilarity matrix $D=(d_{ij})$, attaining the entropic lower bound of $\mathcal L_{\mathrm{NCE}}^W$ amounts to solving the cosine problem
    \begin{equation}
        \cos(z_i, z_j) = -\tau d_{ij} + c'\qquad \textrm{for } i \neq j,
        \label{eq:spherical_mds_pb}
    \end{equation}
    for any constant $c' \in \mathbb R$.
\label{corr:infonce_eq_spherical_mds}
\end{restatable}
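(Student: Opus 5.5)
This is a direct specialization of \cref{th:infonce_optimum}. First I check its hypotheses. With $w_{ij}=\exp(-d_{ij})$ we have $w_{ij}>0$ for all $i\neq j$ automatically. Symmetry of $D$ gives symmetry of $W$, and $s_{ij}=\cos(z_i,z_j)/\tau$ is symmetric because cosine similarity is symmetric. So \cref{hyp:symmetry} holds, and the theorem applies to $\mathcal L_{\mathrm{NCE}}^W$ viewed as a function of the symmetric matrix $S$.

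The theorem then says the entropic lower bound \eqref{eq:info_nce_min} is attained if and only if $s_{ij}=\log(w_{ij})+c$ for $i\neq j$, for some constant $c\in\mathbb R$. Substituting gives
\begin{equation*}
\frac{\cos(z_i,z_j)}{\tau}=-d_{ij}+c .
\end{equation*}
Multiplying by $\tau>0$ yields $\cos(z_i,z_j)=-\tau d_{ij}+c'$ with $c'=\tau c$. Conversely, any $c'$ corresponds to $c=c'/\tau$. So the two families of conditions coincide, parametrized by $c'\in\mathbb R$.

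The one point needing care is the relation between optimizing over $S$ and optimizing over embeddings $Z$. The loss depends on $Z$ only through $S$. Hence for every $Z$ the loss is at least the minimum over symmetric $S$, which is the entropic bound. Equality holds exactly when the realized $S(Z)$ is one of the minimizers characterized by the theorem. This is the sense of ``amounts to'': the bound is attained by some $Z$ if and only if $Z$ solves the cosine problem \eqref{eq:spherical_mds_pb} for some $c'$.

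I would add that solvability is not claimed. Cosines lie in $[-1,1]$ and cosine matrices must be Gram matrices of unit vectors, so \eqref{eq:spherical_mds_pb} may have no solution. When it has none, the bound is simply not attained. This realizability question is deferred to later results, and I would not treat it as an obstacle to the corollary itself.
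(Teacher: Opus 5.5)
Your proposal is correct and follows the paper's route. The paper presents this as an immediate corollary of \cref{th:infonce_optimum}: substitute $\log w_{ij} = -d_{ij}$ and $s_{ij} = \cos(z_i,z_j)/\tau$, then absorb $\tau$ into the constant ($c' = \tau c$). Your checks of the hypotheses and your passage from minimizing over $S$ to minimizing over $Z$ just make explicit what the paper leaves implicit.
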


These corollaries show that minimizing $w$-InfoNCE amounts to finding a low-dimensional embedding with prescribed similarity, up to an additive constant $c$ or $c'$. We will go on to show that a mild condition on $n$ and $\rank(D)$ suffice to entail that $c=0$ or $c'=1$.

\subsection{Existence and uniqueness of the optimal embedding}

Having established the link between DGP and weighted InfoNCE, we ask ourselves the question of existence and uniqueness of solutions to these two families of problems. The answers involve a classical notion from computational geometry~\cite{alfakih2018euclidean}. 

\begin{definition}(EDM)
    An $n\times n$ matrix $D=(d_{ij})$ is called an \textit{Euclidean Distance Matrix} (EDM) if there exist points $z_1^*,\dots,z_n^*\in \mathbb{R}^q$ such that
    \begin{equation}
        \|z_i^*-z_j^*\|^2=d_{ij} \qquad \text{for}\ i,j \in [1..n].
    \end{equation}
    We call the points $z_1^*, \ldots, z^*_n$ a \emph{$q$-dimensional realization} of $D$. The \emph{embedding dimension} is defined as the smallest dimension $r$ of a Euclidean realization, or equivalently as the dimension of the affine span of any given Euclidean realization. Distinct Euclidean realizations of an EDM in a given dimension are congruent, i.e. they differ by a Euclidean isometry. A minimal Euclidean realization (i.e. in $\mathbb{R}^r$) is called a set of \emph{generating points} of $D$. 
    Finally, we say that $D$ is \textit{spherical} if those generating points can be chosen such that $\|z_1\|=\cdots=\|z_n\|=\rho$ for some radius $\rho > 0$. The smallest such $\rho$ is called the \textit{radius} of a spherical EDM.
\end{definition}

% The following result, proved in Appendix~\ref{proof:mds_solution}, states that given enough data, the dissimilarities $(d_{ij})$ must form an EDM and the constant $c$ must vanish in order the $w$-InfoNCE loss to attain its entropic lower bound.

\begin{restatable}[Euclidean DGP]{theorem}{StatementDGPSolution}
    \label{th:mds_solution}
    Assume that the number of points $n$ is strictly greater than $2q+3$. Assume that $s_{ij} = \exp(-\|z_i-z_j\|^2)$ and $W=(\exp(-d_{ij}))$ for $D=(d_{ij})$ an $n\times n$ dissimilarity matrix with $\rank(D)\le q+2$. Then the entropic lower bound of $\mathcal L_{\mathrm{NCE}}^W$ is attained iff $D$ is EDM of embedding dimension $r\le q$, up to diagonal elements. In that case, the minimum $Z^*$ satisfies $\|z^*_i-z^*_j\|=d_{ij}$ for all $i,j \in [1..n]$ and is essentially unique (i.e. up to Euclidean isometry).
\end{restatable}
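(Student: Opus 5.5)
We read the hypothesis as in \cref{corr:infonce_eq_mds}, i.e.\ $s_{ij}=-\|z_i-z_j\|^2$, and the conclusion as $\|z_i^*-z_j^*\|^2=d_{ij}$; we assume without loss of generality that $d_{ii}=0$, which is what ``up to diagonal elements'' allows. By \cref{th:infonce_optimum} and \cref{corr:infonce_eq_mds}, the entropic lower bound is attained iff there exist $Z\in\mathbb{R}^{n\times q}$ and $c\in\mathbb R$ with $\|z_i-z_j\|^2=d_{ij}+c$ for all $i\neq j$. Equivalently, the matrix
\[
D_c := D + c(E_n-I_n),
\]
with $E_n$ the all-ones matrix, is an EDM of embedding dimension at most $q$. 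The ``if'' direction is immediate: if $D$ is an EDM with embedding dimension $r\le q$, take $c=0$ and a generating set padded with zero coordinates to lie in $\mathbb{R}^q$. Uniqueness up to isometry then follows from the congruence of Euclidean realizations recalled in the definition of an EDM. The whole work is therefore to show that $c=0$ is forced.

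\textbf{Key rank identity.} Write $G=ZZ^\top$ for the Gram matrix of a realization, so $\rank G\le q$ and $G\succeq 0$. With $u=\mathrm{diag}(G)$ and $\mathbf 1$ the all-ones vector, we have $D_c=\mathbf 1u^\top+u\mathbf 1^\top-2G$. Substituting into the definition of $D_c$ gives
\[
cI_n = D + c\,\mathbf 1\mathbf 1^\top - \mathbf 1u^\top - u\mathbf 1^\top + 2G .
\]
The middle three terms form a symmetric matrix supported on $\mathrm{span}(\mathbf 1,u)$, so they have rank at most $2$. Hence the right-hand side has rank at most $(q+2)+2+q=2q+4$. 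If $c\neq 0$, the left-hand side has rank $n$, so $n\le 2q+4$.

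\textbf{Main obstacle: gaining one more unit.} The hypothesis only gives $n\ge 2q+4$, so the crude count above is one short. I would close the gap with an inertia (Sylvester) argument rather than a plain rank count. Consider the two signs of $c$.
\begin{itemize}
\item If $c>0$, then $cI_n$ has $n$ positive eigenvalues. On the right-hand side, $2G\succeq 0$ contributes at most $q$ positive directions. The rank-$\le 2$ symmetric block contributes at most one positive direction unless it is degenerate, since a form $\mathbf 1 a^\top+a\mathbf 1^\top$ has signature $(1,1)$. The matrix $D$ contributes at most $q+2$.
\item If $c<0$, the same count applies to negative eigenvalues. Now $2G$ contributes none, and the block contributes at most $2$.
\end{itemize}
In either case the positive (respectively negative) inertia of the right-hand side is at most $2q+3<n$, a contradiction, so $c=0$. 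If this count turns out to be tight in some subcase (for instance, when the $\mathrm{span}(\mathbf 1,u)$ block is a multiple of $\mathbf 1\mathbf 1^\top$), I would handle it by restricting all forms to the subspace $\mathbf 1^\perp$. There the terms involving $\mathbf 1$ vanish, while $cI$ restricted to $\mathbf 1^\perp$ still has rank $n-1$. This yields $n-1\le (q+2)+q$, i.e.\ $n\le 2q+3$, exactly contradicting the hypothesis $n>2q+3$.

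With $c=0$ established, $D=D_0$ is itself an EDM realized by $Z^*$ in $\mathbb{R}^q$. Hence its embedding dimension is $r\le q$, and the realization is unique up to Euclidean isometry, as noted above.
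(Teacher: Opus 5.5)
Your proof is correct, but its main line differs from the paper's. The paper compresses by the centering matrix $J = I_n - \frac1n E_n$, the orthogonal projector onto $\mathbf 1^\perp$. Using $J(E_n-I_n)J=-J$ and $\rank(JD_cJ)=r\le q$ (from the EDM characterization lemma), it gets $q \ge \rank(JDJ - cJ) \ge (n-1)-(q+2)$. This contradicts $n>2q+3$ in one line, with no case split on the sign of $c$. Your uncompressed count falls one short because the block $\mathbf 1 a^\top + a\mathbf 1^\top$, with $a=\frac c2\mathbf 1-u$, costs rank $2$. Compressing to $\mathbf 1^\perp$ costs one unit on the left ($\rank(cJ)=n-1$) but removes that block entirely, a net gain of one; that is why the paper's count closes. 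Your inertia argument recovers the missing unit differently and validly, using subadditivity of positive and negative inertia. The block has at most one eigenvalue of each sign, and $G\succeq 0$ contributes nothing to the negative side, so for $c<0$ you reach a contradiction as soon as $n>q+3$. The "at most one of each sign" bound also holds in the degenerate case $a\parallel\mathbf 1$, so the tight subcase you worry about never arises. For $c<0$ the block likewise contributes at most one negative direction, not two. With your stated bound of $2$, the case $c<0$ fails to close when $q=0$; that case is trivial, and the sharper count costs nothing. Your fallback restriction to $\mathbf 1^\perp$ is exactly the paper's argument, and it is already a complete proof in every case. Indeed $JD_cJ=-2JGJ$ gives $\rank(JD_cJ)\le q$ directly, without citing the lemma. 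You could therefore lead with it and drop the sign split. The inertia route remains a self-contained alternative that needs neither centering nor the EDM characterization.
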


Next, we consider the case of Corollary~\ref{corr:infonce_eq_spherical_mds} where the similarity metric between latent vectors is given by cosine similarity. This is the most common use-case in CL~\cite{khosla2020supervised, dufumier2021contrastive, sobal2024} and it has a rigidity property analogous  to \cref{th:mds_solution}, which we prove in Appendix~\ref{proof:spherical_mds_solution}: 
\begin{restatable}[Spherical DGP]{theorem}{StatementSphericalDGPSolution}
    \label{th:spherical_dgp_solution}
    Assume that $n > 2q+4$. Assume that $s_{ij}=\exp(\cos(z_i,z_j)/\tau)$ and $w_{ij}=\exp(-d_{ij})$ for $D=(d_{ij})$ an $n\times n$ dissimilarity matrix with $\rank(D)\le q+1$. Then the entropic lower bound of $\mathcal L_{\mathrm{NCE}}^W$ is attained by some embedding $Z^*\in \mathbb{R}^{n \times q}$ iff $D$ is a spherical EDM (up to diagonal elements) with embedding dimension $r\le q$ and radius $\rho\le 1/\sqrt{2\tau}$ (with equality when $r=q$). In that case, $\cos(z_i,z_j)=1-\tau d_{ij}$ for all $i,j \in [1..n]$ and $Z^*$ is essentially unique (i.e. the normalized vectors $\widetilde z_{i}:=z_i^*/\|z_i^*\|$ are unique up to linear Euclidean isometry).
\end{restatable}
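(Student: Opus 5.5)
By \cref{corr:infonce_eq_spherical_mds}, the entropic lower bound is attained by some $Z^*$ iff there is a constant $c'$ with $\cos(z_i^*,z_j^*) = c' - \tau d_{ij}$ for all $i\neq j$. (We read $s_{ij}$ as $\cos(z_i,z_j)/\tau$, which the exponential in the statement merely packages.) The argument reduces this cosine problem to a Euclidean one on the unit sphere. Set $\widetilde z_i := z_i^*/\|z_i^*\|$ and $G=(\widetilde z_i^\top\widetilde z_j)$, a Gram matrix of rank at most $q$ with unit diagonal. Then
\[
\|\widetilde z_i-\widetilde z_j\|^2 = 2-2\cos(z_i^*,z_j^*) = 2\tau d_{ij} + 2(1-c'),
\]
so $\widetilde D := 2\tau D + 2(1-c')(E_n - I_n)$ is a spherical EDM of radius $1$ whose generating points lie in $\mathbb R^q$ (here $E_n$ is the all-ones matrix). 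The whole proof therefore hinges on showing $c'=1$. Once that holds, $2\tau D$ agrees off the diagonal with the EDM of unit-norm points, so $D$ is a spherical EDM of radius $\rho\le 1/\sqrt{2\tau}$. The radius equals $1/\sqrt{2\tau}$ when $r=q$, because the points then span $\mathbb R^q$ affinely and the circumscribed sphere is forced to be the unit sphere.

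\textbf{Key step ($c'=1$).} We compare ranks. A spherical EDM of embedding dimension $\le q$ has rank at most $q+1$, since its centered Gram part has rank $\le q$ and the spherical structure adds one more rank; equivalently $\widetilde D = 2(E_n - G)$ with $\rank G\le q$. On the other hand $\widetilde D = 2\tau D + 2(1-c')(E_n-I_n)$. If $c'\neq 1$, then
\[
2(1-c')\,I_n = 2\tau D + 2(1-c')E_n - \widetilde D,
\]
and the right-hand side has rank at most $(q+1)+1+(q+1) = 2q+3$. We can sharpen this by writing $\widetilde D = 2E_n - 2G$, which merges the two $E_n$ terms and gives rank $\le (q+1)+1+q = 2q+2$. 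The left-hand side has rank $n > 2q+4$, a contradiction. Hence $c'=1$. The hypothesis $n>2q+4$ comfortably covers this bound, and also covers any slack arising from how diagonal entries of $D$ are treated: diagonal entries are irrelevant ("up to diagonal elements"), so we work with $D$ modified on the diagonal and absorb that change into an extra diagonal term. This is exactly the step where the hypothesis $\rank(D)\le q+1$ enters.

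\textbf{Converse and uniqueness.} For the converse, take generating points of $D$ on a sphere of radius $\rho\le 1/\sqrt{2\tau}$ centered at the origin in $\mathbb R^r\subseteq\mathbb R^q$. We must produce unit vectors satisfying $\|\widetilde z_i-\widetilde z_j\|^2=2\tau d_{ij}$. When $\rho<1/\sqrt{2\tau}$, first scale the points by $\sqrt{2\tau}$ to get radius $\sqrt{2\tau}\,\rho\le 1$. If $r<q$, lift them by a constant orthogonal offset onto the unit sphere: this preserves pairwise distances and uses the extra dimension. Then $\cos = 1-\tau d_{ij}$, and \cref{th:infonce_optimum} gives attainment. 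For uniqueness, the Gram matrix $G = E_n - \tau D$ (off the diagonal, with unit diagonal) is fully determined, and Gram matrices determine vectors up to orthogonal transformation. So the $\widetilde z_i$ are unique up to linear isometry, while the norms $\|z_i^*\|$ are free.

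\textbf{Main obstacle.} The delicate point is the rank bookkeeping in the key step. It requires bounding the rank of a spherical EDM ($\le r+1$) and handling the diagonal ambiguity so that the inequality against $n>2q+4$ is valid. If the first attempt leaves an off-by-one gap, the fallback is to restrict to the off-diagonal pattern: we use the fact that a matrix $\alpha I_n + M$ with $\rank M\le k$ has rank $\ge n-k$, so $\alpha\neq 0$ would force $n\le 2q+4$.
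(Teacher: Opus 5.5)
Your proposal is correct and is essentially the paper's proof: you normalize, note that $2\tau D+2(1-c')(E_n-I_n)$ is the EDM of unit vectors in $\mathbb R^q$, force $c'=1$ by a rank count against $n$, lift onto the sphere for the converse, and read uniqueness off the Gram matrix. Your count via $\widetilde D=2E_n-2G$ is slightly sharper than the paper's use of $\rank(D')\le r+2$, and you get sphericity geometrically rather than from the PSD characterization lemma. One caveat, which the paper's ``WLOG zero diagonal'' shares: a nonzero diagonal cannot be absorbed as you suggest. For example, $D=aE_n$ has rank $1$, yet the bound is attained by a collapsed embedding with $c'=1+\tau a$. So the hypothesis $\rank(D)\le q+1$ must be read for the zero-diagonal representative of $D$.
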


\subsection{Supervised classification: Soft SupCon}\label{ss:soft-supcon}

We now show that the Soft SupCon weighting scheme arises as a direct application of \cref{th:spherical_dgp_solution}.

\begin{corollary}[Soft SupCon]\label{cor:soft_supcon}
Let $y_i \in [1..C]$ be one of $C$ discrete classes associated with each sample and define $w_{ij}=1$ if $y_i=y_j$ and $w_{ij}=\varepsilon \in (0,1)$ otherwise. 
Assume $n>2q+4$ and consider $s_{ij}=\exp(\cos(z_i,z_j)/\tau)$. 
If $C \le q$ and $\tau \le (C-1)/(-C\log\varepsilon)$, then the entropic lower bound of $\mathcal L_{\mathrm{NCE}}^W$ is attained by some $Z^*\in\mathbb{R}^{n\times q}$, which is unique up to linear Euclidean isometry of the normalized embeddings. At the optimum, one has $\cos(z_i,z_j)=1$ when $y_i=y_j$, and $\cos(z_i,z_j)=1+\tau\log\varepsilon$ otherwise. 
In particular, all points from the same class collapse to a single representation, and the $C$ classes form a simplex with uniform inter-class similarity $\beta=1+\tau\log\varepsilon$. 
The simplex is regular when $\tau^*=(C-1)/(-C\log\varepsilon)$, corresponding to $\beta^*=-1/(C-1)$.
\end{corollary}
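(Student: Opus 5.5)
We apply \cref{th:spherical_dgp_solution} with $d_{ij}=-\log w_{ij}$, that is, $d_{ij}=0$ when $y_i=y_j$ and $d_{ij}=\delta:=-\log\varepsilon>0$ otherwise. The first step is to check the rank hypothesis. Let $M\in\{0,1\}^{n\times C}$ be the class-membership matrix. Then $D=\delta(\1\1^\top - MM^\top)$, so $\rank(D)\le C+1\le q+1$, as required. The hypothesis $n>2q+4$ is assumed directly. The theorem then reduces the claim to showing that $D$, up to its diagonal, is a spherical EDM with embedding dimension $r\le q$ and radius $\rho\le 1/\sqrt{2\tau}$, with equality when $r=q$.

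Next we construct generating points explicitly. Let $e_1,\dots,e_C$ be the vertices of a regular simplex centred at the origin with $\|e_a-e_b\|^2=\delta$ for $a\neq b$. These vertices span an affine space of dimension $C-1$, lie on a sphere of squared radius $\rho_0^2=\delta(C-1)/(2C)$, and are realized in $\mathbb R^{C-1}$. Setting $z_i=e_{y_i}$ gives $\|z_i-z_j\|^2=d_{ij}$, so $D$ is a spherical EDM with $r=C-1\le q-1<q$. The radius condition $\rho_0^2\le 1/(2\tau)$ is equivalent to $\tau\le (C-1)/(-C\log\varepsilon)$, which is exactly the hypothesis. When the inequality is strict, we must make sure the strict-inequality case $r<q$ is the one that applies, and that is the situation here because $r=C-1<q$.

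The step we expect to be the main obstacle is the radius convention. The minimal radius of a spherical EDM of dimension $r<q$ can be enlarged by lifting the points off-centre into an extra coordinate: placing them at height $h$ in $\mathbb R^{C}\subseteq\mathbb R^q$ gives radius $\sqrt{\rho_0^2+h^2}$. Choosing $h$ so that the radius equals $1/\sqrt{2\tau}$ yields points on that sphere with $\cos(z_i,z_j)=1-\tau d_{ij}$. We would present this directly as a sanity check rather than relying solely on the theorem's condition. Such a lift exists precisely when $\rho_0\le 1/\sqrt{2\tau}$, which recovers the threshold on $\tau$.

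With that in place, \cref{th:spherical_dgp_solution} gives the remaining conclusions. The bound is attained, the normalized embeddings are unique up to linear isometry, and $\cos(z_i,z_j)=1-\tau d_{ij}$. This equals $1$ within a class, so the points of each class coincide after normalization, and it equals $\beta=1+\tau\log\varepsilon$ across classes. Finally, the normalized class prototypes form a regular simplex on the unit sphere, centred at the origin, exactly when their pairwise cosine is $-1/(C-1)$. Solving $1+\tau\log\varepsilon=-1/(C-1)$ gives $\tau^*=(C-1)/(-C\log\varepsilon)$, which is the boundary case $h=0$ where the radius equality holds.
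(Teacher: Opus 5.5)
Your route is the same as the paper's, which simply invokes \cref{th:spherical_dgp_solution}. The rank check, the regular-simplex realization with $\rho_0^2=\delta(C-1)/(2C)$, and the lift to height $h$ are all correct. The trouble is in two algebraic steps.

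\textbf{First step: the radius condition.} The condition $\rho_0^2\le 1/(2\tau)$ reads $\tau\,\delta\,(C-1)/C\le 1$, i.e. $\tau\le C/\bigl(-(C-1)\log\varepsilon\bigr)$. It does not read $\tau\le (C-1)/(-C\log\varepsilon)$ as you claim. This slip is harmless for the first part of the corollary. Since $(C-1)/C<C/(C-1)$, the stated hypothesis still \emph{implies} the radius condition, though it is not equivalent to it. So existence, essential uniqueness, $\cos(z_i,z_j)=1$ within classes and $\cos(z_i,z_j)=1+\tau\log\varepsilon$ across classes all follow as you describe. The stated threshold is merely sufficient, not sharp.

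\textbf{Second step: the regular-simplex temperature.} Solving $1+\tau\log\varepsilon=-1/(C-1)$ gives $\tau\,\delta=C/(C-1)$, i.e. $\tau^*=C/\bigl(-(C-1)\log\varepsilon\bigr)=1/(2\rho_0^2)$. This is exactly your $h=0$ boundary case, so your own geometric picture contradicts the formula you wrote down.

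At the $\tau^*$ given in the statement, $1/(2\tau^*)=C\delta/\bigl(2(C-1)\bigr)>\rho_0^2$. Hence $h>0$ and $\beta=1-(C-1)/C=1/C\neq -1/(C-1)$. For instance, $C=10$, $\varepsilon=e^{-1}$, $\tau=0.9$ gives $\beta=0.1$ rather than $-1/9$. The prototypes then form an equilateral but non-centred simplex, not a regular simplex in the paper's sense.

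So the last clause of the corollary, as written, is false: the formula for $\tau^*$ is inverted. No argument can establish it. The correct statement has feasibility threshold and regular-simplex temperature both equal to $\tau^*=C/\bigl(-(C-1)\log\varepsilon\bigr)$. You should have caught this discrepancy rather than reproducing the paper's formula through an arithmetic error.
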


% This result shows that Soft SupCon induces a regular simplex geometry that is independent of class sizes, in contrast with the sparse SupCon regime analyzed later.

\subsection{Supervised multivariate regression}
\label{sec:sup_regression}
Next, we state three results corresponding to various weighting schemes that can be encountered when using continuously labeled data (see Appendix~\ref{proof:sup_continuous_case} for the proofs). Let $(X, Y)$ be a labeled dataset with $\ell$ continuous labels $Y\in \mathbb{R}^{n\times \ell}$. We assume throughout that $n>2q+4$ so that \cref{th:mds_solution} and \ref{th:spherical_dgp_solution} can be applied. Our first result concerns the case where Euclidean distance is used as a dissimilarity function in both label space and representation space.

\begin{restatable}[Euclidean $w$-InfoNCE]{theorem}{StatementContinuousLabels}\label{th:continuous_labels}
    Assume $\ell \leq q$. If $w_{ij}=\exp({-\|y_i-y_j\|^2})$ and $s_{ij}=-\|z_i-z_j\|^2$, the $w$-InfoNCE loss attains its entropic lower bound in $Z^*=(\widetilde y_i)_{i=1}^n$, where $\widetilde y_i=(y_i, 0,\dots,0)\in \mathbb{R}^q$ for $i\in [1..n]$, which is essentially unique.
\end{restatable}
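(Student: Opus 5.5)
The plan is to apply \cref{th:infonce_optimum} through \cref{corr:infonce_eq_mds}, and then to invoke the rigidity statement of \cref{th:mds_solution} for uniqueness. The weights are $w_{ij}=\exp(-\|y_i-y_j\|^2)>0$, so the strict-positivity hypothesis of \cref{th:infonce_optimum} holds. \cref{hyp:symmetry} is immediate, since both $W$ and $s_{ij}=-\|z_i-z_j\|^2$ are symmetric. Setting $d_{ij}=\|y_i-y_j\|^2$, \cref{corr:infonce_eq_mds} says that a configuration $Z$ attains the entropic lower bound if and only if $\|z_i-z_j\|^2=d_{ij}+c$ for all $i\neq j$ and some constant $c$.

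\textbf{Existence.} I take $Z^*=(\widetilde y_i)$, obtained by padding each $y_i$ with $q-\ell$ zeros; this is possible because $\ell\le q$. Zero-padding preserves Euclidean distances, so $\|\widetilde y_i-\widetilde y_j\|^2=\|y_i-y_j\|^2=d_{ij}$, which is the DGP with $c=0$. Hence $s^*_{ij}=\log w_{ij}$, and \cref{th:infonce_optimum} shows that the minimum value \eqref{eq:info_nce_min} is attained at $Z^*$.

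\textbf{Essential uniqueness.} I would invoke \cref{th:mds_solution}, which needs three inputs. First, $n>2q+3$ holds because we assume $n>2q+4$. Second, $D$ is an EDM, realized by $y_1,\dots,y_n\in\mathbb R^\ell$, with embedding dimension at most $\ell\le q$. Third, $\rank(D)\le q+2$: writing $D=\mathbf 1 a^\top+a\mathbf 1^\top-2YY^\top$ with $a_i=\|y_i\|^2$ gives $\rank(D)\le \ell+2\le q+2$. \cref{th:mds_solution} then yields that every minimizer satisfies $\|z_i-z_j\|^2=d_{ij}$ exactly, so $c=0$, and that the minimizer is unique up to Euclidean isometry. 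So any minimizer is congruent to $(\widetilde y_i)$.

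\textbf{Main obstacle.} The only delicate point is a mismatch of conventions in \cref{th:mds_solution}. Its statement writes $s_{ij}=\exp(-\|z_i-z_j\|^2)$, while the present result uses $s_{ij}=-\|z_i-z_j\|^2$, and its conclusion is written with $\|z_i-z_j\|$ rather than its square. I would read \cref{th:mds_solution} in the convention of \cref{corr:infonce_eq_mds}, where the loss exponentiates $s$ and $d_{ij}$ plays the role of a squared distance, and state this interpretation explicitly. The substantive content used is the rigidity that rules out $c\neq0$: if $d_{ij}+c$ were also realizable in $\mathbb R^q$, the rank bound together with $n>2q+3$ would force $c=0$. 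I rely on \cref{th:mds_solution} for exactly this step rather than reproving it.
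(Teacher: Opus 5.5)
Your proposal is correct and follows essentially the same route as the paper: you check that $D=(\|y_i-y_j\|^2)$ is an EDM of embedding dimension at most $\ell\le q$ with $\rank(D)\le q+2$, then invoke \cref{th:mds_solution} under the standing assumption $n>2q+4$ to get existence and essential uniqueness, with the zero-padded labels as an explicit realization. Two differences are minor: your direct rank bound via $D=\mathbf 1 a^\top+a\mathbf 1^\top-2YY^\top$ replaces the paper's implicit appeal to \cref{th:alfakih}, and your reading of the inconsistent conventions in the statement of \cref{th:mds_solution} (squared distances, $s_{ij}=-\|z_i-z_j\|^2$) is the correct one.
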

Our next result is a negative one and concerns the Euclidean--spherical $w$-InfoNCE loss, i.e. the $y$-Aware loss~\cite{dufumier2021contrastive}.

\begin{restatable}[$y$-Aware]{theorem}{StatementYAware}
\label{th:y_aware}
  Even if $\ell \leq q$, the $y$-Aware InfoNCE loss, which is the $w$-InfoNCE loss with $s_{ij}=\cos(z_i,z_j)/\tau$ and $w_{ij}=\exp(-\|y_i-y_j\|^2)$, does not attain the entropic lower bound unless the $y_i$ happen to lie on a hypersphere in $\mathbb{R}^\ell$.
\end{restatable}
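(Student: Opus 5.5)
The plan is to reduce the statement to \cref{th:spherical_dgp_solution} with $d_{ij}=\|y_i-y_j\|^2$. We prove the contrapositive: assuming the entropic lower bound is attained by some $Z^*$, we show the $y_i$ lie on a sphere.

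\textbf{Step 1 (characterize attainment).} By \cref{th:infonce_optimum}, attainment means $\cos(z_i^*,z_j^*)/\tau=-\|y_i-y_j\|^2+c$ for all $i\neq j$ and some constant $c$. Write $\widetilde z_i=z_i^*/\|z_i^*\|$. Then $\|\widetilde z_i-\widetilde z_j\|^2=2-2\cos(z_i^*,z_j^*)$, so
\[
\|\widetilde z_i-\widetilde z_j\|^2 = 2\tau\|y_i-y_j\|^2+(2-2\tau c) \qquad (i\neq j).
\]
Hence the matrix $D'=(2\tau\|y_i-y_j\|^2+\kappa(1-\delta_{ij}))$, with $\kappa=2-2\tau c$, is a spherical EDM of radius $1$ in $\mathbb R^q$.

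\textbf{Step 2 (force $\kappa=0$).} Here we use the rank/cardinality hypotheses, exactly as in the rigidity part of \cref{th:spherical_dgp_solution}. The matrix $D=(\|y_i-y_j\|^2)$ has rank at most $\ell+2\le q+2$. Its Gram form has rank at most $\ell$, and we need $\rank(D)\le q+1$. Rather than quote the theorem blindly, I would repeat its argument.

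Since $\widetilde Z$ lies in $\mathbb R^q$, the Gram matrix $G=\widetilde Z\widetilde Z^\top=\mathbf 1\mathbf 1^\top-\tfrac12 D'$ has rank at most $q$. Substituting $D'=2\tau D+\kappa(E_n-I_n)$ gives
\[
G=(1-\tfrac{\kappa}{2})\mathbf 1\mathbf 1^\top-\tau D+\tfrac{\kappa}{2}I_n .
\]
The term $(1-\kappa/2)\mathbf 1\mathbf 1^\top-\tau D$ has rank at most $\ell+3$. If $\kappa\neq0$, then $G$ is a rank-$\le \ell+3$ perturbation of a nonzero multiple of $I_n$, so $\rank G\ge n-\ell-3\ge n-q-3$. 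With $n>2q+4$ this exceeds $q$, a contradiction. Hence $\kappa=0$, i.e.\ $c=1/\tau$.

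This is the step I expect to be the main obstacle, since it is where the hypothesis $n>2q+4$ (assumed throughout the section) is consumed. If the bookkeeping gives $\ell+3$ instead of $q+2$, the bound $n>2q+4$ still suffices because $\ell\le q$.

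\textbf{Step 3 (conclude sphericity).} With $\kappa=0$ we have $\|\widetilde z_i-\widetilde z_j\|^2=2\tau\|y_i-y_j\|^2$ for all $i,j$. So the map $\sqrt{2\tau}\,y_i\mapsto\widetilde z_i$ preserves all pairwise distances. By congruence of EDM realizations (the EDM definition), it extends to a Euclidean isometry from the affine span of the $y_i$ into $\mathbb R^q$. The $\widetilde z_i$ lie on the unit sphere in $\mathbb R^q$, so their affine span meets that sphere in a sphere of that flat. Pulling this back through the isometry, the points $y_i$ lie on a sphere of radius at most $1/\sqrt{2\tau}$ inside their affine span. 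That sphere extends to a hypersphere in $\mathbb R^\ell$ containing all the $y_i$, contradicting the assumption that they are not cospherical.

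Equivalently, Steps 2--3 amount to invoking \cref{th:spherical_dgp_solution}, which requires $D$ to be a spherical EDM. The squared-distance matrix of $y_1,\dots,y_n$ is spherical iff the $y_i$ are cospherical. Generic labels are not, so the entropic lower bound is not attained. Finally, when the condition fails no other choice of $c$ can rescue attainment, because Step 2 excludes every $\kappa\neq0$.
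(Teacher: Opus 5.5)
Your proof is correct and follows the paper's outline. The paper's proof is a one-line appeal to \cref{th:spherical_dgp_solution} (via the spherical-EDM characterization): attainment forces $\cos(z_i^*,z_j^*)=1-\tau\|y_i-y_j\|^2$ by a rank count using $n>2q+4$, and then $D=(\|y_i-y_j\|^2)$ must be spherical. You differ in two places:
\begin{enumerate}
\item You run the rank count on the Gram matrix $G=\widetilde Z\widetilde Z^\top$, which has rank $\le q$, rather than on $D'$, which has rank $\le q+2$.
\item You get cosphericity by pulling the unit sphere back through the isometry $\sqrt{2\tau}\,y_i\mapsto\widetilde z_i$, rather than through the PSD criterion for spherical EDMs.
\end{enumerate}

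This buys something concrete. The paper's proof only checks $\rank(D)\le q+2$, but the theorem it cites assumes $\rank(D)\le q+1$. That hypothesis fails exactly in the interesting case: $\ell=q$ with affinely spanning, non-cospherical labels, where $\rank(D)=q+2$. The theorem's proof does survive this if one uses $\rank(E_n-I_n)=n$, but the citation as written does not apply. Your bound $\rank G\ge n-\ell-3$ only needs $\rank(D)\le\ell+2$, so it handles that case directly and in fact requires only $n>q+\ell+3$.

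The paper's route is shorter and reuses a general criterion (sphericity iff $\rank(D)=r+1$). Yours is self-contained and elementary, and it yields the radius bound $1/\sqrt{2\tau}$ along the way.
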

% The intuition behind this negative result is that points on the hypersphere $\mathbb{S}^{q-1}$ need not be isometric to a set of points $\mathbb{R}^\ell$.

On the other hand, the spherical--spherical case, as in the $\mathbb{X}$-CLR loss~\cite{sobal2024}, does attain the entropic lower bound and offers a spherical counterpart to \cref{th:continuous_labels}.

\begin{restatable}[$\mathbb{X}$-CLR]{theorem}{StatementXCLR}
\label{th:x_clr}
    If $\ell < q$ and $\tau \leq \tau'$, the $\mathbb{X}$-CLR loss~\cite{sobal2024}, which is the $w$-InfoNCE loss with $s_{ij}=\cos(z_i,z_j)/\tau$ and $w_{ij}=\exp(\cos(y_i,y_j)/\tau')$ attains the entropic lower bound in the points $z_i^*=(y_i/\|y_i\|, \sqrt{\tau'/\tau-1}, 0, \dots,0)$ for $i\in [1..n]$. Moreover, that minimum is essentially unique.
\end{restatable}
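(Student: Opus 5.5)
We plan to argue directly from \cref{th:infonce_optimum}: the entropic lower bound is attained exactly when $s_{ij}=\log w_{ij}+c$ for $i\neq j$. With $s_{ij}=\cos(z_i,z_j)/\tau$ and $\log w_{ij}=\cos(y_i,y_j)/\tau'$, this becomes
\begin{equation*}
\cos(z_i,z_j)=\frac{\tau}{\tau'}\cos(y_i,y_j)+\tau c \qquad (i\neq j).
\end{equation*}
The proof therefore has two parts. First, we verify that the proposed points $z_i^*$ satisfy this relation for a suitable constant. Second, we show that any minimizer coincides with them up to linear isometry.

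\textbf{Existence.} Write $u_i=y_i/\|y_i\|$ and $a=\sqrt{\tau'/\tau-1}$, which is real because $\tau\le\tau'$. Since $\ell<q$, the vector $z_i^*=(u_i,a,0,\dots,0)$ fits in $\mathbb R^q$. Each $z_i^*$ has squared norm $1+a^2=\tau'/\tau$, and $\langle z_i^*,z_j^*\rangle=\cos(y_i,y_j)+a^2$. Hence
\begin{equation*}
\cos(z_i^*,z_j^*)=\frac{\tau}{\tau'}\cos(y_i,y_j)+\frac{\tau}{\tau'}a^2=\frac{\tau}{\tau'}\cos(y_i,y_j)+1-\frac{\tau}{\tau'}.
\end{equation*}
This is the required relation with $c=(1-\tau/\tau')/\tau$, so the bound is attained.

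\textbf{Uniqueness.} Set $\widetilde z_i=z_i/\|z_i\|$. We aim to apply the rigidity statement of \cref{th:spherical_dgp_solution} with dissimilarity $d_{ij}=-\cos(y_i,y_j)/\tau'=(\|u_i-u_j\|^2-2)/(2\tau')$. This $D$ is an affine function of a spherical EDM of rank at most $\ell+2\le q+1$, and it differs from that EDM only by a constant matrix, which is harmless up to diagonal terms. So the theorem should give $\cos(z_i,z_j)=1-\tau d_{ij}+\text{const}$ with the constant forced.

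If we instead argue directly, the Gram matrix of the $\widetilde z_i$ is forced off the diagonal to equal $\frac{\tau}{\tau'}U U^\top+\gamma(E_n-I_n)$, where $U$ has rows $u_i$ and $E_n$ is the all-ones matrix. The diagonal equals $1$, which gives
\begin{equation*}
G=\frac{\tau}{\tau'}UU^\top+\gamma E_n+\Bigl(1-\frac{\tau}{\tau'}-\gamma\Bigr)I_n .
\end{equation*}
This matrix must be positive semidefinite of rank at most $q$. Because $n>2q+4>q+1\ge\rank(UU^\top+E_n)$, the coefficient of $I_n$ must vanish. Otherwise $G$ would be a low-rank perturbation of a nonzero multiple of the identity, with rank greater than $q$ when that coefficient is positive and with a negative eigenvalue when it is negative. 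So $\gamma=1-\tau/\tau'$, and the Gram matrix is uniquely determined. Equal Gram matrices imply that the $\widetilde z_i$ are unique up to a linear isometry.

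The main obstacle is this rank argument pinning down the additive constant. Specifically, we must show that $\frac{\tau}{\tau'}UU^\top+\gamma E_n$ has rank at most $\ell+1\le q$, and that adding $\mu I_n$ with $\mu\neq0$ produces either rank exceeding $q$ or an indefinite matrix when $n$ is large. Taking the eigenvalues of the low-rank part to be nonnegative when $\gamma\ge0$, and handling $\gamma<0$ separately, should settle it. This mirrors the proof of \cref{th:spherical_dgp_solution}, which we would invoke to avoid redoing the case analysis.
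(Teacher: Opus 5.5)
Your proof is correct. The existence check matches the paper's: it also just verifies that the $z_i^*$ satisfy the cosine relation with $c'=1-\tau/\tau'$. Your first uniqueness route is likewise exactly the paper's argument: shift $D$ by the constant $1/\tau'$ to get the spherical EDM $\|u_i-u_j\|^2/(2\tau')$, then invoke the rigidity part of \cref{th:spherical_dgp_solution}.

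Your second, direct route is genuinely different and also valid. You bypass spherical EDMs and their radius entirely. Instead you write the Gram matrix of the normalized embeddings as $G=\frac{\tau}{\tau'}UU^\top+\gamma E_n+\mu I_n$ and eliminate $\mu$ by a rank count. This version is self-contained: it does not depend on the spherical-EDM lemma, which the paper states without proof, or on the radius bookkeeping. It is also simpler than your last paragraph fears. No case split on the sign of $\mu$ is needed, and splitting on $\gamma$ is a red herring. Whenever $\mu\neq 0$,
\[
\rank(G)\ \ge\ \rank(\mu I_n)-\rank\Bigl(\tfrac{\tau}{\tau'}UU^\top+\gamma E_n\Bigr)\ \ge\ n-(\ell+1)\ >\ q.
\]
This needs only $n>q+\ell+1$, so $n>2q$ suffices instead of $n>2q+4$. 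In exchange, the paper's route presents $\mathbb{X}$-CLR as a special case of the general spherical DGP theory, which is its expository aim.
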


In practice~\cite{sobal2024}, the dimension of text embeddings in $\mathbb{X}$-CLR is $\ell=768$ for CLIP and the latent dimension $q=2048$ for ResNet50. It was also found that $\tau'\approx\tau$ gives the best classification accuracy on ImageNet.  

\textbf{Practical implications.} \cref{th:y_aware} and~\ref{th:x_clr} suggest that the pair of similarity functions used to define $(W, S)$ should be chosen consistently to allow for a realizable optimum. The spherical--spherical or Euclidean-Euclidean weighting schemes are therefore preferable over the Euclidean--spherical scheme (as in $y$-Aware).

\subsection{Sparse weights and limiting properties}
\label{sec:sparse_weights}

The assumption of positive weights, i.e. $w_{ij} > 0$ in~\cref{th:infonce_optimum}, may be relaxed to $w_{ij} \geq 0$ assuming that $\sum_{ik}w_{ik}>0$ so that $\mathcal L_{\mathrm{NCE}}^W$ is well defined: we term this the \textit{sparse case}. Although any sparse matrix $W$ can be understood as a limit of a sequence of positive matrices $(W_m)$, the optimum of $\mathcal L_{\mathrm{NCE}}^W$ need not be the limit of the optimum of $\mathcal L_{\mathrm{NCE}}^{W_m}$ because in general convergence is not uniform. Moreover, the configuration of points which attain the minimum may behave qualitatively different in the limit.

In Appendix~\ref{appdx:sparse_suboptimality} we prove that in the sparse case, the entropic lower bound continues to hold and that it is sharp. We give a necessary and sufficient for it to be optimal (i.e. impossible to improve upon). We go on to show that it is optimal in the case of the so-called Euclidean SupCon (i.e. the $w$-InfoNCE loss with a hard SupCon weighting scheme and Euclidean similarities $s_{ij}=-\|z_{i}-z_j\|^2$) but that quasi-optima can be produced in any number of qualitatively distinct configurations. This comes in stark contrast to the soft case where optima are essentially unique (\cref{th:mds_solution}).

\subsection{SupCon}
\label{sec:supcon}

We now consider the case of SupCon proper (i.e. the $w$-InfoNCE loss with a hard SupCon weighting scheme and spherical similarities $s_{ij}=\cos(z_i,z_j)/\tau$). Using a compactness argument, we see in Appendix~\ref{appdx:supcon} that the entropic lower bound is sub-optimal. We go on to prove the following theorem:
\begin{restatable}{theorem}{StatementSupconMinimum}\label{th:supcon-minimum}
Let $C$ denote the number of discrete classes and $\ell_c$ the size of the $c$th class. Assume that $C \leq q$.
\begin{enumerate}
    \item The SupCon loss has a unique global minimum over the matrices of the form $G/\tau$, with $G$ an $n \times n$ cosine matrix.
    \item That minimum has collapsed representations and can be expressed in terms of class prototypes $\mu_1, \ldots, \mu_C \in \mathbb S^{q-1}$ which are unique up to linear Euclidean isometry.
    \item  If $c \neq c'$, the value of $\cos(\mu_c,\mu_{c'})$ depends only on $(\ell_c, \ell_{c'})$.
\end{enumerate}
\end{restatable}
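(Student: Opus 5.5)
The loss will be viewed as a function of the cosine matrix $G=(g_{ij})$. We minimize it over the set $\mathcal G_q$ of $n\times n$ positive semidefinite matrices with unit diagonal and rank at most $q$. This set is compact, and we first relax the rank constraint to the full elliptope $\mathcal E_n$ of PSD matrices with unit diagonal, which is convex.

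With SupCon weights, the $i$th term of the loss is
\[
-\frac{1}{\ell_{y_i}-1}\sum_{j\ne i,\,y_j=y_i} \frac{g_{ij}}{\tau} \;+\; \log\sum_{k\ne i} e^{g_{ik}/\tau}.
\]
The first part is linear in $G$ and the second is a log-sum-exp of linear forms. So $\mathcal L(G/\tau)$ is convex on $\mathcal E_n$, and a minimizer exists by compactness. We will not try to prove strict convexity directly. Uniqueness will instead come from a symmetry-plus-collapse argument.

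\textbf{Step 1 (collapse).} Consider the permutations of indices within each class. The loss is invariant under this group, so averaging any minimizer $G$ over the group produces a minimizer $\bar G$ that is block-constant: $\bar g_{ij}=a_c$ within class $c$ and $\bar g_{ij}=b_{cc'}$ across classes $c\ne c'$. Next we show that the minimizer must have $a_c=1$.

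Take a block-constant PSD $G$ and increase the within-class entries to $1$. This amounts to replacing the Gram vectors $z_i$ by their class means, renormalized. Concretely, with class means $m_c$ satisfying $\|m_c\|^2=a_c$, we set $\mu_c=m_c/\|m_c\|$, so that $b_{cc'}$ is replaced by $b_{cc'}/\sqrt{a_ca_{c'}}$. This mapping is awkward to control, so the plan is instead to argue by KKT on the reduced problem.

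The reduced variables are $(a_c, b_{cc'})$, and PSD-ness of $G$ is equivalent to PSD-ness of the $C\times C$ matrix $B$ with $B_{cc}=a_c$ and off-diagonal entries $b_{cc'}$, together with the condition $a_c\le 1$. The derivative of the loss in $a_c$ is
\[
-\frac{1}{\tau}\Bigl(1-\text{softmax mass on same class}\Bigr)<0,
\]
so the loss strictly decreases in each $a_c$. Increasing $a_c$ only adds to the diagonal of $B$ and keeps it PSD. Hence $a_c=1$ at any minimizer, which gives collapse. The main obstacle is that uniqueness of the non-averaged minimizer needs more than this. To handle it, apply strict monotonicity in the following way: if some minimizer $G$ had $\min$ within-class $g_{ij}<1$, then its symmetrization would have $a_c<1$. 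By convexity that symmetrization is still a minimizer, which contradicts what we just showed. Hence every minimizer is collapsed. This is item 2 once we add the prototype Gram matrix $M=(\cos(\mu_c,\mu_{c'}))$.

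\textbf{Step 2 (reduced problem and uniqueness).} With collapse, the loss is a function $F(M)$ of the $C\times C$ correlation matrix $M$:
\[
F(M)=\sum_c \frac{\ell_c}{n}\Bigl[-\frac1\tau+\log\Bigl((\ell_c-1)e^{1/\tau}+\sum_{c'\ne c}\ell_{c'}e^{M_{cc'}/\tau}\Bigr)\Bigr].
\]
This function is convex in $M$. It is strictly convex along any direction that changes the off-diagonal entries, because each log-sum-exp is strictly convex in directions that are not constant across its non-constant arguments. Directions shifting all of a row's $M_{cc'}$ equally remain to be checked. These are excluded because each $M_{cc'}$ appears in the rows of both $c$ and $c'$, and row $c$ also contains the fixed entry $e^{1/\tau}$ that breaks translation invariance. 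The feasible set of $M$ is convex, so the minimizer $M^*$ is unique. Moreover, $C\le q$ guarantees that any $C\times C$ correlation matrix is realizable in $\mathbb S^{q-1}$, so the rank constraint is inactive and the relaxation is tight. The prototypes are then determined by $M^*$ up to linear isometry, and the full $G^*$ is unique, which gives item 1.

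\textbf{Step 3 (dependence on $(\ell_c,\ell_{c'})$).} For item 3, we would exploit symmetry again. Any permutation $\sigma$ of classes preserving sizes leaves $F$ invariant, so uniqueness forces $M^*_{\sigma(c)\sigma(c')}=M^*_{cc'}$. This alone yields dependence only on the sizes together with the orbit of the pair, and that is what item 3 asserts if "depends only on $(\ell_c,\ell_{c'})$" is read as invariance under size-preserving relabelings. If a stronger reading is intended, we would use the KKT stationarity of $F$ with the PSD multiplier to get an explicit formula, which is where I expect difficulty.
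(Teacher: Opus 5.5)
Your argument is correct in outline, but it follows a different route from the paper's.

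\textbf{How the paper argues.} It averages over the full stabilizer $\mathfrak S_W$, which includes within-class permutations and permutations of equal-size classes. It then uses the equality case of Jensen for $\LSE$ (\cref{lemma:lse-strict}) to show that any minimizer differs from its symmetrization by $c(E_n-I_n)$, and hence is itself $W$-symmetric. Collapse follows from monotonicity in $\alpha_c$. Uniqueness follows because two collapsed minimizers differ by such a shift, which forces $c=0$. Item 3 is read off from \cref{lemma:blockwise}.

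\textbf{How you argue.} You obtain collapse of \emph{every} minimizer directly: a within-class entry below $1$ would make the symmetrized minimizer have $a_c<1$, contradicting monotonicity. You then get uniqueness from strict convexity of the reduced function $F$ on $C\times C$ correlation matrices. There the fixed term $(\ell_c-1)e^{1/\tau}$ removes the shift direction along which the full problem is flat.

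\textbf{Trade-off.} Your version yields an explicit strictly convex program in the $\binom{C}{2}$ inter-class cosines, which is convenient for actually computing the optimum. The paper's version gets $W$-symmetry, uniqueness and item 3 from a single Jensen-equality argument.

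\textbf{Two corrections.}

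First, your PSD reduction is wrong as stated. Let $U$ be the $n\times C$ class-indicator matrix. Then $U^\top G U$ has diagonal entries $\ell_c+\ell_c(\ell_c-1)a_c$. So a block-constant $G$ with unit diagonal is PSD iff two conditions hold:
\begin{itemize}
\item $a_c\le 1$ for all $c$;
\item the matrix $\widetilde B$ with off-diagonal entries $b_{cc'}$ and diagonal entries $(1+(\ell_c-1)a_c)/\ell_c$ is PSD.
\end{itemize}
Those diagonal entries are the squared norms of the class means, not $a_c$; the same slip appears in your remark that $\|m_c\|^2=a_c$. For a counterexample, take $\ell_1=\ell_2=2$, $a_1=a_2=0$, $b_{12}=1/2$: then $G$ is PSD while your $B$ is indefinite. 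The step you actually need still survives. Since $\widetilde B_{cc}\le 1$, raising its diagonal to $1$ preserves PSD, so $UMU^\top$ is feasible and the monotonicity argument goes through.

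Second, your hedge on item 3 is unnecessary. Let $(c,c')$ and $(d,d')$ be ordered pairs of distinct classes with $\ell_d=\ell_c$ and $\ell_{d'}=\ell_{c'}$. The classes other than $c,c'$ and those other than $d,d'$ have the same multiset of sizes. So some size-preserving permutation of classes sends $c\mapsto d$ and $c'\mapsto d'$. The orbit of a pair is therefore exactly the set of pairs with the same sizes. Your uniqueness-plus-symmetry argument thus already proves item 3 as stated, with no KKT analysis required.
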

In particular we see that SupCon suffers from representation collapse which generalizes both \cite{nguyen2024neural} and \cite{behnia2024supervised} in that we make no assumptions on $\tau>0$ nor do we require an infinite number of samples.

In the case of balanced classes, we easily recover the result from \cite{graf2021dissecting} that SupCon optima correspond to regular simplices (Appendix~\ref{appdx:supcon}, \cref{th:graf2021}). Moreover, Hard and Soft SupCon have similar optima: both collapse classes and yield uniform inter-class similarities. For Hard SupCon, $\beta_{\mathrm{hard}}=-1/(C-1)$ depends on $C$; for Soft SupCon, $\beta_{\mathrm{soft}}=1+\tau\log\varepsilon$ depends on $(\tau,\varepsilon)$ but not on $C$. Choosing $\tau^*=(C-1)/(-C\log\varepsilon)$ yields a regular simplex.

%It seems plausible that the technique of parameter reduction we used in our proof may be applied to the imbalanced case, but for now we leave this as an avenue to be explored in future work.

%Looking back at \S~\ref{ss:soft-supcon} it becomes apparent that in the balanced case, hard and soft SupCon have similar optima. Indeed, both suffer from class collapse and their optimal representations have uniform inter-class similarities. In the hard case, inter-class similarity is $\beta_{\mathrm{hard}}=-1/(C-1)$ which depends on $C$ but not on $\tau$. In the soft case, inter-class similarity is $\beta_{\mathrm{soft}}=1+\tau \log(\varepsilon)$ which depends both on $\tau$ and the soft negative weight $\varepsilon \in (0,1)$, but not on $C$. Choosing the maximal feasible temperature of $\tau^*=(C-1)/(-C\log\varepsilon)$ recovers hard SupCon's regular simplex geometry.

%This resemblance breaks down under imbalance classes. Indeed, class imbalance breaks SupCon's symmetry and causes its optima to have non-uniform inter-class similarities, tending to group minority classes closer together. Interestingly this is not the case for soft SupCon whose optima are utterly indifferent to class imbalance.

This equivalence breaks under class imbalance: SupCon produces non-uniform inter-class similarities, grouping minority classes more closely, whereas Soft SupCon is agnostic to class imbalance.

\subsection{Metrics}\label{ss:metrics}
    To empirically evaluate whether the similarity structure of the learned encodings $Z=(z_1;\ldots; z_n) \in \mathbb R^{n \times q}$ are close to the theoretical optimum $Z^* = (z_1^*; \ldots;z_n^*) \in \mathbb{R}^{n \times q}$ as predicted by any of our theoretical results, we introduce three metrics.

% The first is defined in terms of the difference between the loss and its lower bound as expressed in Theorem~\ref{th:infonce_optimum}:

\begin{definition}
    The \textit{$w$-InfoNCE entropic loss gap} is defined as the nonnegative quantity $\Delta_W:=({\mathcal{L}_{\mathrm{NCE}}^W}/{\min_S \mathcal L_\mathrm{NCE}^W})-1$,
    where $\min_S \mathcal L_\mathrm{NCE}^W$ is the entropic lower bound stated in \cref{th:infonce_optimum}.
%$$\Delta_W:=\frac{\mathcal{L}_{\mathrm{NCE}}^W}{\min_S \mathcal L_\mathrm{NCE}^W}-1,$$
\end{definition}
Note that a small value of $\Delta_W$ is a necessary but not sufficient condition for the $z_i$ to be close to the optimum $z_i^*$. This motivates the introduction of a second metric, related to Kruskal's stress, which directly compares the pairwise similarities of the $z_i$ versus those of the $z_i^*$.
\begin{definition}
    Let $s:\mathbb R^q \times \mathbb R^q \to \mathbb R$ denote a similarity function in latent space, e.g. $s(x,y) = -\|x-y\|^2$ or $s(x,y) = \cos(x,y)$. The \textit{coefficient of similarity} between $Z$ and $Z^*$ is defined as the $r^2$ of $s(z_i, z_j)$ as a predictor of $s(z_i^*, z_j^*)$:
    \begin{align*}
        r^2_{s\textrm{-sim}}(Z, Z^*) &:= 1-\frac{\widehat \mu_{ij} \left[(s(z_i, z_j) - s(z_i^*, z_j^*))^2\right]}{\widehat \sigma^2_{ij}[s(z_i^*, z_j^*)]}
    \end{align*}
where $\widehat \mu_{ij}$ and $\widehat \sigma^2_{ij}$ denote, respectively, sample mean and sample variance over the set of pairs $(i,j) \in [1..n]^2$.
\end{definition}
Note by standard results that if $s$ is the negative squared Euclidean distance then $r^2_{s\textrm{-sim}}=1$ iff $Z$ and $Z'$ differ by a Euclidean isometry. The same result holds if $s$ is cosine similarity and $Z$ and $Z^*$ are taken on the unit hypersphere. Note that the computation of $r^2_{s\textrm{-sim}}$ is $O(n^2)$ in time, and therefore impractical for large $n$. This motivates the following definition, which measures similarity at an intrinsic level in sub-quadratic $O(n\log n)$ time.

% \begin{definition}
%     Let $(\Omega, b) \in O_q(\mathbb R) \times \mathbb R^n$ be the solution to the \textit{affine orthogonal Procrustes problem}:
%     $$\begin{cases}
%     \textrm{for} &(\Omega, b)\in \mathbb R^{q\times q} \times \mathbb R^n,\\
%     \textrm{minimize} &\sum_{i=1}^n \|(\Omega z_i +b) - z_i^*\|^2\\
%     \textrm{subject to} & \Omega^T\Omega= I_q.
%     \end{cases}$$
% \end{definition}
% We then define the \textit{Procrustes similarity} as the $r^2$ of the best fit:
% $$r^2_{\mathrm{Proc}}(Z, Z^*) := 1-\frac{\widehat \mu_i \left[\|(\Omega z_i + b) - z_i^*\|^2\right]}{\widehat \sigma_i\left[z_i^*\right]}.$$

\begin{definition}
Let $(\Omega,b)$ minimize $\sum_{i=1}^n \|\Omega z_i + b - z_i^*\|^2$ over $\Omega \in O_q(\mathbb{R})$, $b \in \mathbb{R}^q$. The \textit{Procrustes similarity} is defined by
\[
r^2_{\mathrm{Proc}}(Z,Z^*)
= 1 - \frac{\widehat{\mu}_i \left[\|\Omega z_i + b - z_i^*\|^2\right]}
{\widehat{\sigma}_i^2[z_i^*]},
\]
\end{definition}
i.e. the coefficient of determination after optimal rigid alignment. Note that $r^2_{\mathrm{Proc}}(Z, Z^*) = 1$ iff $Z$ and $Z^*$ differ by a Euclidean isometry.

% \begin{restatable}{theorem}{StatementMetricTheorem}
%     Assume that $n > 2q+3$ and $s$ is either
% \end{restatable}

\section{Experiments}
Armed with the metrics defined in \S~\ref{ss:metrics}, we conduct a series of experiments to validate our theoretical claims and verify whether representations learned with $w$-InfoNCE loss on practical datasets are in fact near their geometric optima. Implementation details are provided in Appendix~\ref{appdx:exp_details}.

\subsection{Weighted InfoNCE on MNIST: discrete, continuous and mixed regime}
\label{sec:w_infonce_mnist}

\begin{wrapfigure}[13]{r}{0.5\textwidth}
    \vspace{-10pt}
    \centering
    \includegraphics[width=\linewidth]{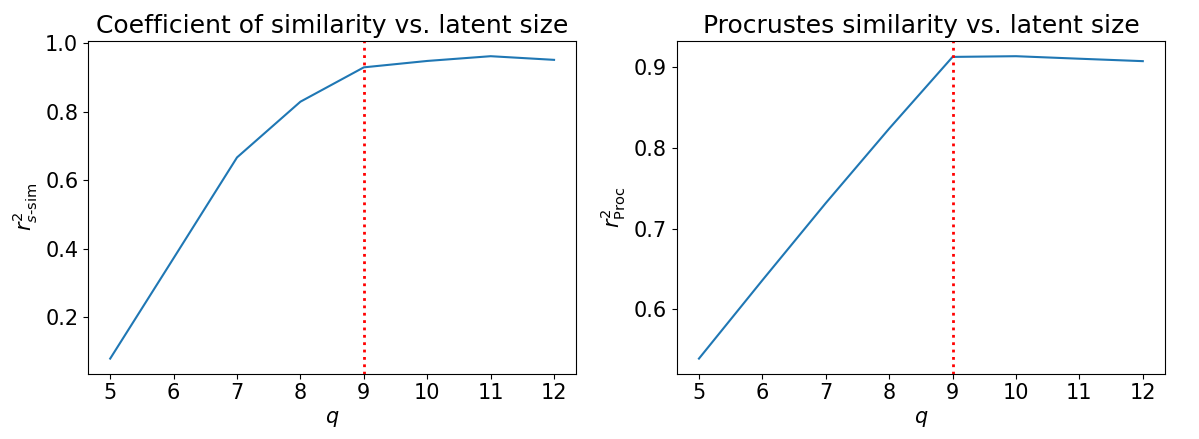}
    \caption{Euclidean representations of MNIST learned with the discretely weighted InfoNCE loss. Left: coefficient of similarity between $Z$ and regular 9-simplex versus latent dimension $q$. Right: Procrustes similarity versus $q$.}
    \vspace{-10pt}
    \label{fig:mnist_cat}
\end{wrapfigure}

We train a ConvNet with the $w$-InfoNCE loss on MNIST~\cite{mnist} across three regimes: 
(i) discrete classification,
(ii) continuous regression, and
(iii) mixed discrete--continuous labels.

\textbf{Discrete case.} We first consider the discrete classification setting, using Soft SupCon weights (as in \S~\ref{ss:soft-supcon}) with the value $\varepsilon=e^{-1}$ between negative samples. 
The optimal geometry corresponds to a regular 9-simplex, realizable in $\mathbb{R}^q$ for $q \geq 9$. 
As shown in Fig.~\ref{fig:mnist_cat}, both $r^2_{s\text{-sim}}$ and $r^2_{\mathrm{Proc}}$ increase with $q$ up to a sharp transition at $q=9$, where the learned representation becomes nearly isometric ($r^2_{\mathrm{Proc}} \approx 0.9$).

\begin{figure}[h]
    \centering
    %\hfill
    \begin{subfigure}{0.49\columnwidth}
        \centering
        \includegraphics[width=\linewidth]{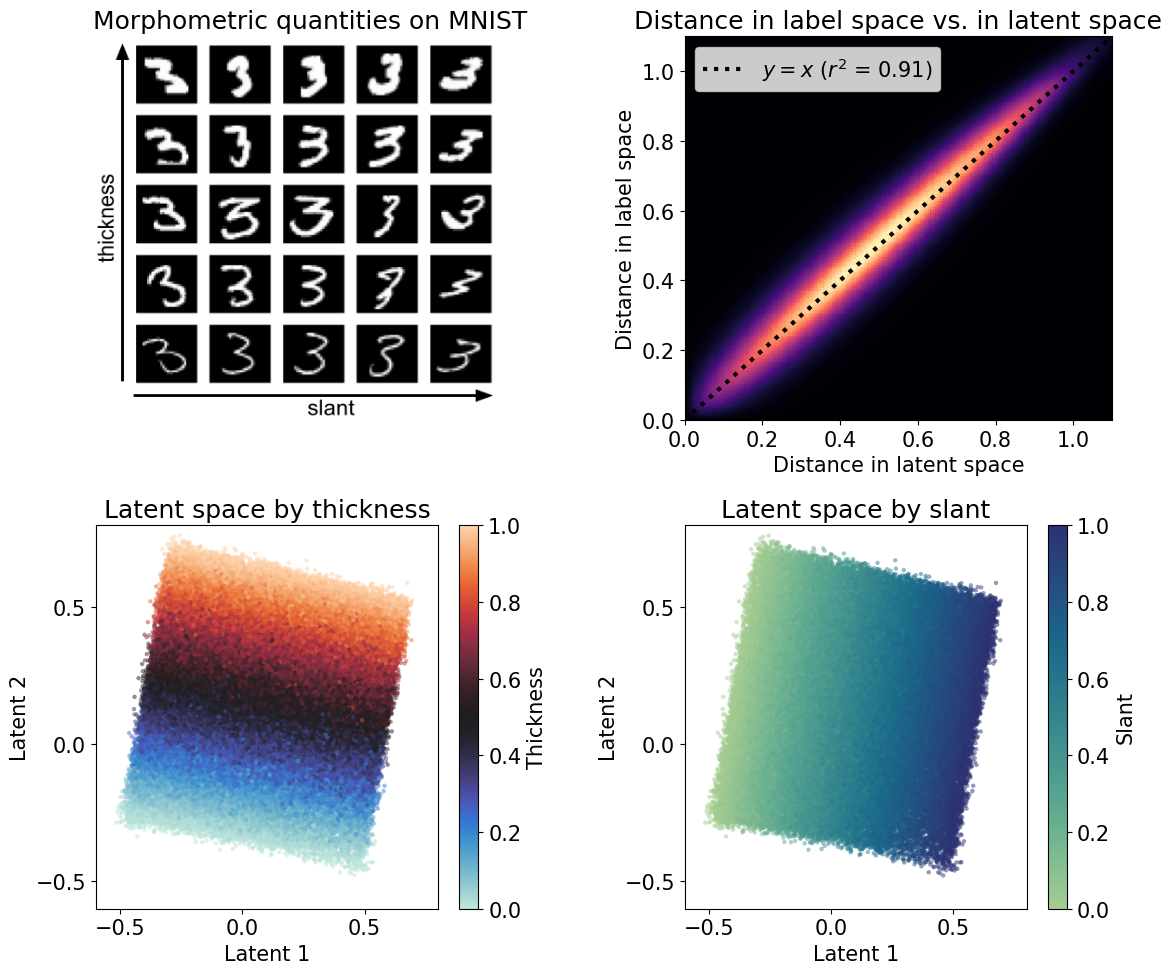}
        \caption{2D Euclidean representations of MNIST learned using $w$-InfoNCE with continuous 2D labels. Top: morphometric weights and distance comparison. Bottom: learned embeddings colored by thickness (left) and slant (right).}
        \label{fig:mnist_2d}
    \end{subfigure}
    \hfill
    \begin{subfigure}{0.49\columnwidth}
        \includegraphics[width=\linewidth]{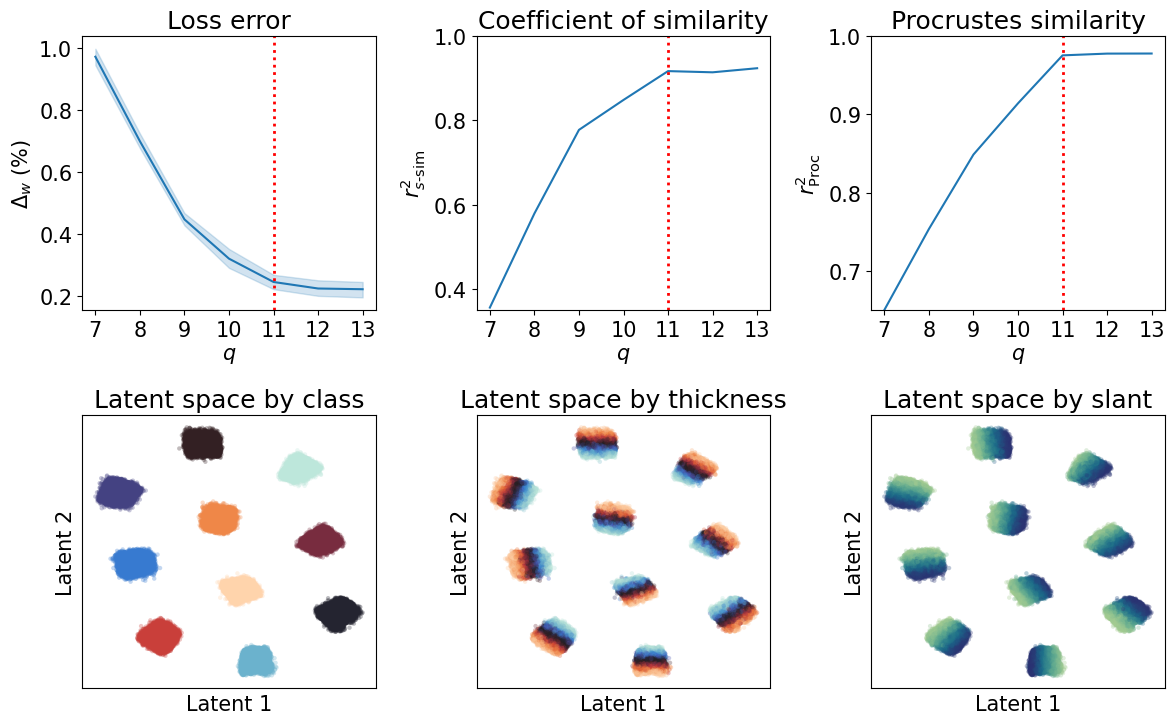}
        \caption{$2D$ Euclidean representations of MNIST using a mixed discrete--continuous weighted InfoNCE loss. Top row: loss error, coefficient of similarity and Procrustes similarity as a function of $q \in [7..13]$. Bottom row: embeddings colored by class, thickness and slant.}
        \label{fig:mnist_11d}
    \end{subfigure}
    \caption{MNIST representations under different weighting schemes.}
    \label{fig:mnist_combined}
\end{figure}

\textbf{Multivariate regression.} We then consider a continuous regression task based on two morphological attributes (thickness and slant). 
Training with weights derived from Euclidean distances in label space yields embeddings that closely approximate label geometry ($r^2=0.91$), with strong Procrustes alignment ($r^2_{\mathrm{Proc}} \approx 0.96$; Fig.~\ref{fig:mnist_2d}).

\textbf{Mixed discrete-continuous labels.} Finally, we study the mixed discrete--continuous regime by combining class labels with morphological features. 
The optimal structure corresponds to the product of a 9-simplex and a unit square, realizable for $q \geq 11$. 
As shown in Fig.~\ref{fig:mnist_11d}, performance improves up to $q=11$. 
In low dimension ($q=2$), the representation is locally isometric within classes ($r^{2,\mathrm{local}}_{\mathrm{Proc}} \approx 0.93$) but not globally ($r^{2,\mathrm{global}}_{\mathrm{Proc}} \approx 0.46$).

Additional $\mathbb{X}$-CLR experiments in Appendix~\ref{ss:cifar}, using both one-hot and semantic label embeddings, further illustrate that consistent label–embedding geometries yield realizable optima.

\subsection{Class imbalance breaks SupCon's regularity, but not Soft SupCon's}\label{sec:class_imbalance}

\begin{wrapfigure}[19]{r}{0.6\textwidth}
    \vspace{-10pt}
    \centering
    \includegraphics[width=\linewidth]{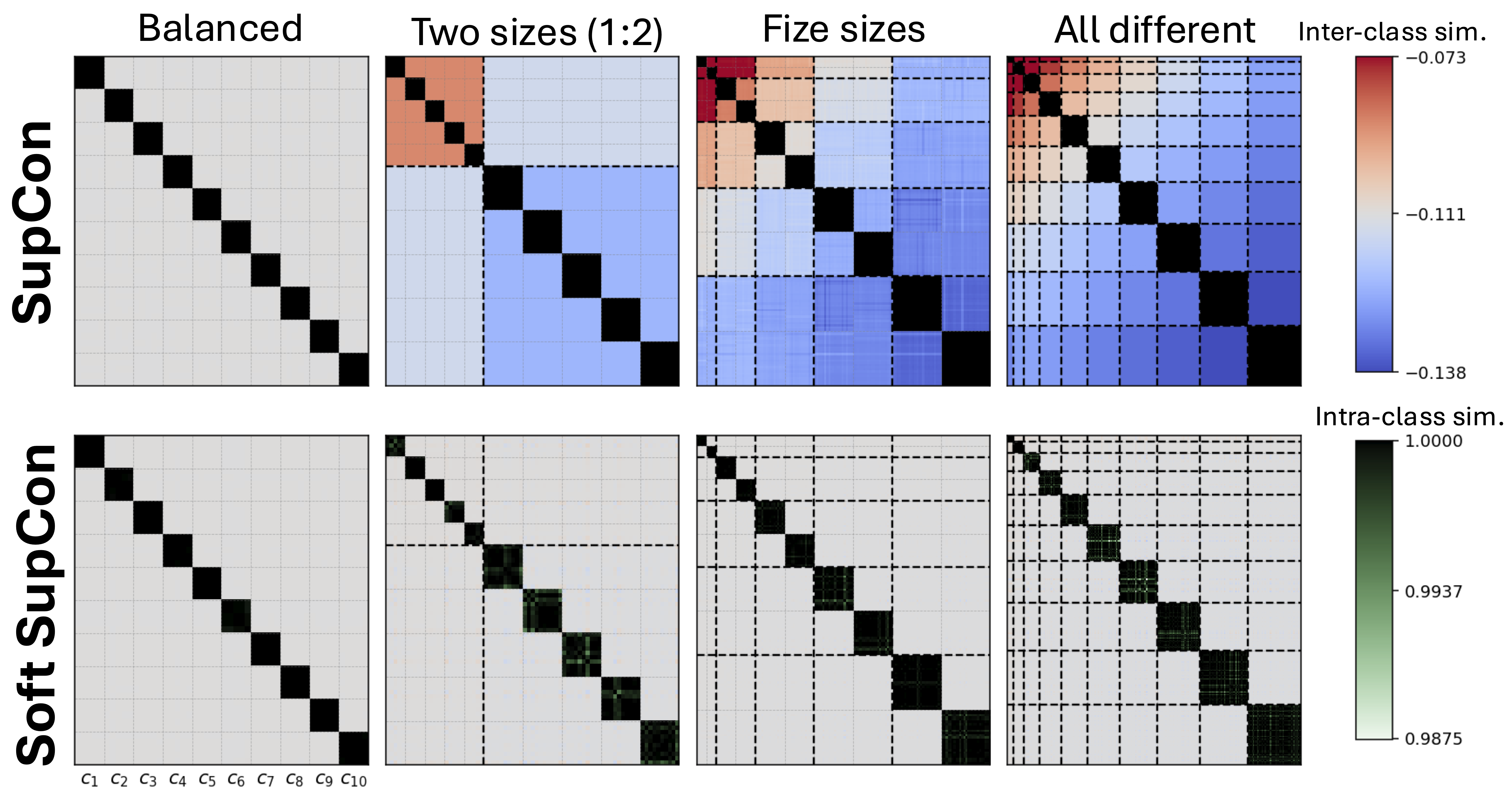}
    \caption{Similarity heatmaps representing $\cos(z_i,z_j)$ for 10-dimensional embeddings learned with SupCon (top) and Soft SupCon (bottom) for  $C=10$ more or less imbalanced classes (columns). Gray off-diagonal blocks indicate an inter-class similarity $\beta^*=-1/(C-1)\approx -0.111$ equivalent to a regular simplex.}
    \vspace{-10pt}
    \label{fig:supcon_imbalanced}
\end{wrapfigure}

To confirm the predictions made in \S~\ref{sec:supcon} regarding Hard and Soft SupCon, we optimized the $w$-InfoNCE loss with a variety of SupCon-like weighting schemes. We synthesized SupCon-like weight matrices (see Appendix~\ref{appdx:sparse}) with \textit{either} hard zeros \textit{or} the soft weight $\varepsilon=e^{-1}$ on the off-diagonal blocks. We considered $C=10$ classes with an assortment of relative sizes ranging from balanced classes to the case where all $10$ classes have a different size, in the proportions $1:2:\cdots:10$. In all cases we observed class collapse. In the soft case we chose the optimal temperature $\tau^*=(10-1)/10$ which, as predicted in \cref{cor:soft_supcon}, yields a regular simplex \textit{even} in the imbalanced case. In contrast, class imbalance breaks the regularity of the optimal SupCon embeddings. The third part of \cref{th:supcon-minimum} is confirmed since for SupCon the inter-class similarity associated with an off-diagonal block can be seen to depend only on that block's shape.

\subsection{Inconsistency of $y$-Aware and its correction}\label{sec:y_aware}

\begin{wrapfigure}[13]{r}{0.5\textwidth}
    \vspace{-10pt}
    \centering
    \includegraphics[width=\linewidth]{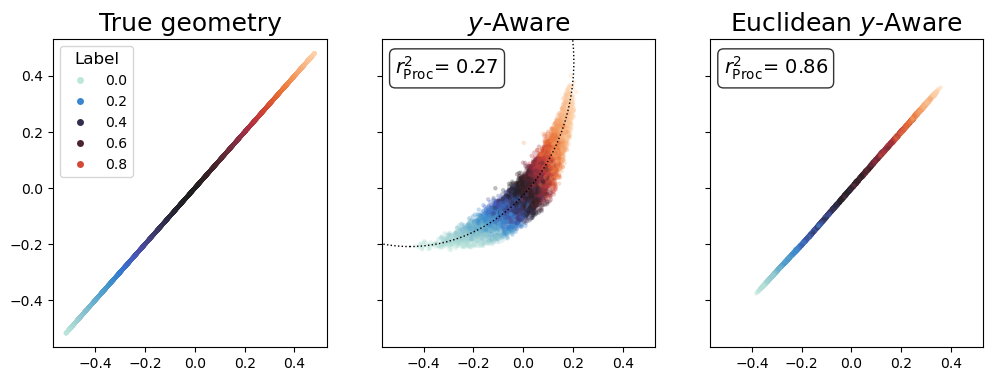}
    \caption{Euclidean representations of MNIST in $\mathbb{R}^2$ using digit's thickness as continuous label. Left: true 1D geometry of the problem. Center: $y$-Aware representation using $\cos$ as similarity function in $\mathcal{Z}$. Right: the corrected $y$-Aware representations learned using Euclidean distance instead.}
    \vspace{-10pt}
    \label{fig:yaware_1d_mnist}
\end{wrapfigure}

We showed in~\cref{th:y_aware} that the $y$-Aware weighting scheme generally does not reach its lower bound unless the labels lie on a hypersphere. To address this, we proposed a solution by using Euclidean distance in both the label and latent spaces. We confirm these predictions on MNIST with line thickness as a continuous label (\S~\ref{sec:w_infonce_mnist}). As shown in~\cref{fig:yaware_1d_mnist}, cosine similarity in $\mathcal{Z}$ constrains the embeddings to a spherical geometry. Replacing it with negative squared Euclidean distance (consistent with the label space) allows the encoder to recover the correct geometry ($r^2_{\mathrm{Proc}}=0.85$).

\section{Discussion and Open Problems}

This work advances the understanding of weighted contrastive learning in both weakly supervised and supervised settings by explicitly characterizing the global optimal representation using a rigorous geometric framework. We derived conditions on weighting schemes to guarantee the existence of optimal realizations and introduced three novel metrics for assessing them. We made several rigorous predictions which we went on to confirm in practical applications.

\textbf{Limitations.} The main limitation of this work is that we assumed an explicit 
dissimilarity matrix encoding prior knowledge about the problem through labels or meta-data. In the self-supervised case (e.g. SimCLR~\cite{chen2020simple}), $D$ is a rough approximation of an \emph{implicit} dissimilarity matrix across samples. Characterizing this matrix is difficult and should use empirical evidence and realistic assumptions~\cite{saunshi2022inductive} as the basis for theoretical results. We leave this as an avenue for future research.

%SimCLR: imposes a regular $n$-simplex in a low-dimensional space. Impossible to achieve when $n$ is very large compared to embedding dimension (typically $n=1$M vs $q=1$k). To by pass this problem, one future work is to see SSL InfoNCE as an approximation of $w$-InfoNCE with particular structure for $w$ based on the encoder and the augmentation strategy. Leave this as future work. 

Overall, we believe that our geometric framework offers a fresh perspective on CL and SSL algorithms and a complementary approach to the mainstream probabilistic and graph-based views. We hope that our theoretical analysis of CL will be used by practitioners to better refine their weighting schemes, as well as by theorists to deepen their understanding of CL and SSL.

% Acknowledgements should only appear in the accepted version.
\bibliographystyle{plainnat}
\bibliography{biblio}

%%%%%%%%%%%%%%%%%%%%%%%%%%%%%%%%%%%%%%%%%%%%%%%%%%%%%%%%%%%%
% APPENDIX

\newpage
\appendix
\section{Entropic lower bound}
\label{proof:optimal_embedding}

\StatementInfonceOptimum*

\begin{proof}
    We define two 2D discrete distributions over $[1..n]^2$ via the density functions 
    \begin{align*}
        p_W(i, j) = \frac{\delta_{i\neq j}w_{ij}}{n\sum_{k \neq i} w_{ik}},\qquad p_S(i,j) = \frac{\delta_{i\neq j}\exp{s_{ij}}}{n\sum_{k \neq i} \exp{s_{ik}}}.
    \end{align*}
   Then it is easy to see that $\mathcal L_\mathrm{NCE}^W$ equals the cross-entropy of $p_S$ relative to $p_W$:

    \begin{equation*}
        \mathcal{L}_{\mathrm{NCE}}^W = H(p_W, p_S) := -\sum_{i, j} p_W(i, j) \log p_S(i, j) %$= H(p_w) + D_{KL}(p_w \| p_s) = H(p_w, p_s)
    \end{equation*}
    where we make use of the convention that $0\log 0 = 0$. By Gibbs' inequality, $H(p_W) \leq H(p_W, p_S)$, where $H(p_W)$ is the entropy of $p_W$, with equality iff $p_W = p_S$. Suppose this is the case, then for each $i \neq j$, $$\frac{w_{ij} }{\sum_{i \neq k} w_{ik}} = \frac{\exp{s_{ij}}}{\sum_{k \neq i} \exp{s_{ik}}},$$ which is equivalent to
    \begin{equation}\label{eq:sw}
        \frac{\exp s_{ij}}{w_{ij}} = \frac{\sum_{k \neq i} \exp{s_{ik}}}{\sum_{i \ne k}w_{ik}}.
    \end{equation}
    Note that the RHS of \cref{eq:sw} does not depend on $j$. But since the LHS is symmetric, the RHS must not depend on $i$ either. This means that (\ref{eq:sw}) is in fact a constant $C$. Since $w_{ij}>0$ when $i\neq j$, $C>0$. Thus $s_{ij} = \log(w_{ij})+c$ where $c = \log C$, as desired. Conversely, one easily verifies that if $s_{ij} = \log(w_{ij})+c$ then $p_W = p_S$, hence $\mathcal L_\mathrm{NCE}^W = H(p_W, p_S)$ is minimal.

    Finally note that the RHS of (\ref{eq:info_nce_min}) is precisely $H(p_W)$.
\end{proof}

\section{Characterizations of the DGP}
\subsection{Euclidean DGP}

We state a classical characterization of EDM matrices and their embedding dimensions. The proof can be found in \cite{alfakih2018euclidean} (Theorems 3.2 and 3.8).
\begin{lemma}\label{th:alfakih}
    A symmetric matrix $D \in \mathbb R^{n \times n}$ with vanishing diagonal entries is an EDM if and only if the centered Gram matrix $B := (-1/2)JDJ$, where $J =I_n-(1/n)E_n$ is the centering matrix, is positive semi-definite (PSD). If such is the case then the embedding dimension $r$ of $D$ is equal to $r=\rank(B)$, and moreover $\rank(D)$ is either $r+1$ or $r+2$.
\end{lemma}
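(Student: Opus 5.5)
This lemma is classical, and the paper explicitly defers it to \cite{alfakih2018euclidean}. Still, here is how I would prove it directly; everything rests on the Gram-matrix correspondence.

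\emph{Necessity.} Suppose $D$ is an EDM with realization $z_1,\dots,z_n\in\mathbb R^q$. Translating the points does not change $D$, so I may assume the centroid is zero. Let $Z$ be the matrix with rows $z_i$. Then
\[
d_{ij}=\|z_i\|^2+\|z_j\|^2-2\langle z_i,z_j\rangle ,
\]
so $D=g\mathbf 1^\top+\mathbf 1 g^\top-2ZZ^\top$, where $g_i=\|z_i\|^2$. Since $J\mathbf 1=0$ and $JZ=Z$ (the points are centered), we get $-\tfrac12 JDJ=ZZ^\top\succeq0$.

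\emph{Sufficiency.} Suppose $B=-\tfrac12 JDJ\succeq0$. Factor $B=ZZ^\top$ with $Z\in\mathbb R^{n\times r}$ and $r=\rank B$. It remains to show that $\|z_i-z_j\|^2=d_{ij}$. Expanding gives
\[
\|z_i-z_j\|^2=b_{ii}+b_{jj}-2b_{ij}=(e_i-e_j)^\top B(e_i-e_j).
\]
Because $e_i-e_j$ is orthogonal to $\mathbf 1$, we have $J(e_i-e_j)=e_i-e_j$. Hence this equals $-\tfrac12(e_i-e_j)^\top D(e_i-e_j)=d_{ij}$, using symmetry and the vanishing diagonal of $D$.

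\emph{Embedding dimension.} In any realization, the centered coordinates satisfy $ZZ^\top=B$. The affine span of the points equals the span of the centered vectors, so its dimension is $\rank Z=\rank B$. This gives $r=\rank B$, and it is independent of the chosen realization.

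\emph{Rank of $D$.} This is the only step needing real care, and it is where I expect the main obstacle. From $D=g\mathbf 1^\top+\mathbf 1 g^\top-2B$, subadditivity of rank gives $\rank D\le r+2$. For the lower bound, I would first restrict $D$ to the subspace $\mathbf 1^\perp$. There $JDJ=-2B$, so $D$ already has rank at least $r$. To show that $\rank D\ge r+1$, I would argue that $D$ cannot vanish on $\mathbf 1$ together with everything $B$ misses. Concretely, take the bordered matrix
\[
\begin{pmatrix}0&\mathbf 1^\top\\ \mathbf 1&D\end{pmatrix},
\]
whose rank is known (via the Cayley--Menger determinant) to be $r+2$. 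Deleting one row and one column lowers rank by at most $2$, which gives $\rank D\ge r$. To sharpen this to $r+1$, I would use that $D\neq0$ has zero diagonal and is nonnegative. By the Schoenberg argument, $D$ is negative semidefinite on $\mathbf 1^\perp$, with exactly one positive eigenvalue, since $\mathbf 1^\top D\mathbf 1>0$ whenever not all points coincide. So $D$ has $r$ negative eigenvalues, coming from $B$, plus one positive eigenvalue, and therefore $\rank D\ge r+1$. The interlacing and signature bookkeeping in this last step is the part I would check most carefully.
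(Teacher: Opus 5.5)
Your proof is correct. It differs from the paper, which gives no proof and cites Theorems 3.2 and 3.8 of Alfakih. You reconstruct Schoenberg's criterion from the Gram identity $D=g\mathbf 1^\top+\mathbf 1 g^\top-2ZZ^\top$, and you obtain Gower's rank bound from an inertia count. The citation buys brevity. Your version makes the lemma self-contained and shows where each hypothesis is used: symmetry and zero diagonal in sufficiency, nonnegativity in the rank bound.

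Two remarks on the last step.

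First, the bordered Cayley--Menger detour is unnecessary. It only gives $\rank D\ge r$, which you already had from $JDJ=-2B$. The inertia argument suffices on its own, stated as follows. Since $B\mathbf 1=0$, the range of $B$ is an $r$-dimensional subspace of $\mathbf 1^\perp$ on which $x^\top Dx=-2x^\top Bx<0$ for $x\neq 0$. By Courant--Fischer, $D$ therefore has \emph{at least} $r$ negative eigenvalues. You wrote ``$r$'', but it need not be exactly $r$: an $(r+1)$-st negative eigenvalue is precisely the case $\rank D=r+2$. Together with the positive eigenvalue coming from $\mathbf 1^\top D\mathbf 1>0$, this gives $\rank D\ge r+1$.

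Second, that positivity needs $D\neq 0$, and this is not a removable technicality. For $D=0$ (all $n$ points coincident) one has $r=0$ and $\rank D=0$, so the lemma as stated is false in that case. You should state the exclusion explicitly rather than slip it in as ``$D\neq 0$''. The exception is harmless for the paper: every later use of this lemma (the Euclidean and spherical DGP theorems, the regression results) relies only on $r=\rank(B)$ or on the upper bound $\rank D\le r+2$, and both hold unconditionally.
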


\label{proof:mds_solution}

\StatementDGPSolution*

\begin{proof}
Recall that $E_n$ denotes the $n\times n$ matrix with unit coefficients and $I_n$ the $n\times n$ identity matrix, and $J \defeq I_n - (1/n) E_n$ is the so-called \textit{centering matrix}. Since $\mathcal L_{\mathrm{NCE}}^W$ does not depend on the diagonal of $D$, we may assume WLOG that the diagonal of $D$ is identically zero.

$(\Rightarrow)$ Assume there exists $Z^*\in \mathbb{R}^{n\times q}$ which attains the entropic lower bound. By \cref{corr:infonce_eq_mds} there is a constant $c$ such that $\|z_i-z_j\|^2=d_{ij}+c$ for all $i\neq j$. It follows that $D'=(d_{ij}+c\delta_{i\neq j})=D+c(E_n-I_n)$ is an EDM with embedding dimension $r\le q$. Assume for the sake of a contradiction that $c\neq 0$. Since $D'$ is an EDM with embedding dimension $r$, it follows from Theorem~\ref{th:alfakih} that $r=\rank(JD'J)$. But $J(E_n-I_n)J = -J$, hence $JD'J = JDJ - cJ$. Since $\rank(cJ) = \rank(J) = n-1$ and $\rank(JDJ) \leq \rank(D) \leq q+2$, it follows that
$$r = \rank(JD'J) = \rank(JDJ - cJ) \geq \left| \rank(JDJ) -  \rank(cJ)\right| =(n-1)-(q+2) > q$$
which contradicts the fact that $r \leq q$. This proves that $c=0$, hence $D=D'$ is an EDM with embedding dimension $r \leq q$.

$(\Leftarrow)$ Conversely, if $D$ is an EDM with embedding dimension $r \leq q$ then any realization $Z^* \in \mathbb R^{n \times q}$ solves \cref{eq:mds_pb} with $c=0$.

Finally, note that a solution that any two solutions to Equation~(\ref{eq:mds_pb}) in $\mathbb R^q$ with $c=0$ have the same EDM, hence they differ by a Euclidean isometry.
\end{proof}

\subsection{Spherical DGP}
\label{proof:spherical_mds_solution}

We state without proof a useful characterization of spherical EDMs:

\begin{lemma}
\label{th:spherical_edm}
    An $n \times n$ EDM $D$ of embedding dimension $r\le n-2$ is spherical iff $\rank(D)=r+1$, or equivalently if there exists a scalar $\beta$ such that $\beta E_n-D$ is PSD. Moreover, the smallest $\beta$ with this property is $\beta = 2\rho^2$, where $\rho$ is the radius of $D$.
\end{lemma}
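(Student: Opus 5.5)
The plan is to work with an explicit realization and read everything off the identity
\[
D = a\1^T + \1 a^T - 2PP^T, \qquad a_i := \|p_i\|^2 ,
\]
which holds for any $q$-dimensional realization $p_1,\dots,p_n$ of $D$ with $P=(p_1;\ldots;p_n)$. I will fix a set of generating points in $\mathbb R^r$ and translate them so that $\sum_i p_i=0$. Then $P\in\mathbb R^{n\times r}$ has rank $r$ and its columns are orthogonal to $\1$. The three conditions are each rewritten in terms of $a$, $P$ and $\1$. The rank lower bound $\rank(D)\ge r+1$ comes from \cref{th:alfakih}.

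\textbf{Spherical $\Rightarrow$ the other two.} Suppose the generating points lie on a sphere of radius $\rho$, and put its center at the origin. Then $a=\rho^2\1$, so $D=2\rho^2E_n-2PP^T$. Hence $2\rho^2E_n-D=2PP^T$ is PSD, and $\beta=2\rho^2$ works. Moreover $\rank(D)\le \rank(PP^T)+1=r+1$, and combined with \cref{th:alfakih} this gives $\rank(D)=r+1$.

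\textbf{$\beta E_n-D$ PSD $\Rightarrow$ spherical, and minimality of $\beta$.} Suppose $\beta E_n - D$ is PSD. Factor $\tfrac12(\beta E_n-D)=QQ^T$. Because $d_{ii}=0$, we get $\|q_i\|^2=\beta/2$, and therefore
\[
\|q_i-q_j\|^2=\beta-(\beta-d_{ij})=d_{ij}.
\]
So the $q_i$ realize $D$ on a sphere of radius $\sqrt{\beta/2}$. This realization may live in a space of dimension larger than $r$. However, its affine span has dimension $r$, and it is congruent to the generating points. The intersection of a sphere with an affine $r$-flat is a sphere in that flat, of radius at most $\sqrt{\beta/2}$. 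Hence $D$ is spherical with $\rho\le\sqrt{\beta/2}$, i.e.\ $\beta\ge 2\rho^2$. Together with the previous paragraph, the smallest admissible $\beta$ is exactly $2\rho^2$.

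\textbf{$\rank(D)=r+1$ $\Rightarrow$ spherical.} This is the step I expect to be the main obstacle. Decompose
\[
a=Pu+\alpha\1+v, \qquad v\perp \operatorname{span}(\1,\text{columns of }P).
\]
If $v=0$, translate the points by $u/2$:
\[
\|p_i-u/2\|^2=a_i-p_i^Tu+\|u\|^2/4=\alpha+\|u\|^2/4,
\]
which is constant in $i$, so the points are spherical. If $v\neq0$, which is possible since $r\le n-2$, then $\1$, the columns of $P$, and $v$ are linearly independent. Writing $D=[\1,P,v]\,M\,[\1,P,v]^T$ gives
\[
M=\begin{pmatrix}2\alpha & u^T & 1\\ u & -2I_r & 0\\ 1 & 0 & 0\end{pmatrix}.
\]
Here $v$ couples only with $\1$, so expanding along the last row and column yields $\det M=\pm\det(-2I_r)\neq0$. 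Hence $\rank(D)=r+2$, a contradiction. Therefore $v=0$ and $D$ is spherical, which closes the cycle of equivalences.
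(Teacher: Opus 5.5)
Your proof is correct. It does not follow the paper, which gives no argument for this lemma and simply cites Theorems~4.2 and~4.3 of Alfakih's monograph; you give a self-contained linear-algebra proof instead. Writing $D=a\1^T+\1a^T-2PP^T$ for a centered generating configuration, you identify sphericity with the condition $a\in\operatorname{span}(\1,\operatorname{col}P)$, i.e.\ your $v=0$. The bordered matrix $M$ then shows that $v\neq 0$ forces $\rank(D)=r+2$. You get both the $\beta$-criterion and the value $2\rho^2$ from the Gram factorization $\tfrac12(\beta E_n-D)=QQ^T$, which realizes $D$ on a sphere of radius $\sqrt{\beta/2}$, together with the fact that slicing a sphere by the $r$-dimensional affine hull can only shrink the radius.

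The citation buys brevity. Your route buys transparency about where each hypothesis enters. For instance, it shows that $r\le n-2$ is never actually used: when $r=n-1$ the vector $v$ is forced to vanish, every such EDM is spherical, and $\rank(D)=n=r+1$, so the statement still holds. With one more line you could also drop the appeal to \cref{th:alfakih} for $\rank(D)\ge r+1$. When $v=0$ one has $D=[\1,P]\,M'\,[\1,P]^T$ with $M'=\begin{pmatrix}2\alpha & u^T\\ u & -2I_r\end{pmatrix}$, and by a Schur complement $\det M'=(-2)^r\bigl(2\alpha+\tfrac12\|u\|^2\bigr)=(-2)^r\,2\rho^2\neq 0$, so $\rank(D)=r+1$ exactly.

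The only fact you use tacitly is that the sphere through generating points spanning $\mathbb R^r$ is unique, since its center is fixed by $r+1$ affinely independent points. This is what makes the ``smallest'' radius in the paper's definition attained, which your first paragraph relies on. It is immediate, but worth a sentence.
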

\begin{proof}
     See \cite{alfakih2018euclidean}, Theorem~4.2 and 4.3.
\end{proof}
\StatementSphericalDGPSolution*
\begin{proof}
Assume WLOG that the diagonal of $D$ is identically zero.

   $(\Rightarrow)$ Assume there exists $Z^*$ which attains the entropic lower bound, then by \cref{corr:infonce_eq_spherical_mds} there exists $c' \in \mathbb R$ such that $\cos(z_i,z_j)=-\tau d_{ij}+c'$ for each $i \neq j$. Denote by $\widetilde z_i = z_i/\|z_i\|$ the normalized realizations. Then for each $i \neq j$,
    \begin{align*}
        \|\widetilde z_i - \widetilde z_j\|^2 &= \|\widetilde z_i\|^2+\|\widetilde z_j\|^2-2\langle \widetilde z_i,\widetilde z_j\rangle \\
        &= 1 + 1 - 2\cos(z_i, z_j) \\
        &= 2(\tau d_{ij}+1-c').
    \end{align*}
    Thus, $D':=2\tau D+2(1-c')(E_n-I_n)$ is an EDM with embedding dimension $r \le q$. In particular $\rank(D') \leq r+2\leq q+2$. It follows that $c'=1$, else we would have
    $$q+2\geq \rank(D') \geq |\rank(D)-\rank(E_n-I_n)| \geq (n-1)-(q+1)>q+2,$$ a contradiction. Thus, $c'=1$ and $D=D'/(2\tau)$ is an EDM with embedding dimension $r$.
    
Note that $2E_n-D'=2\widetilde{Z}\widetilde{Z}^T$ since for each $i,j$, $\langle \tilde z_i, \tilde z_j\rangle = \cos(z_i, z_j) = 1 - \tau d_{ij}$. In particular, $2E_n - D'$ is PSD. Since $r \leq q \leq n-2$ (as $n>2q+4$), this means by \cref{th:spherical_edm} that $D$ is spherical, and moreover that its radius $\rho'$ satisfies $\rho' \leq 1$. It follows that $D = D'/2\tau$ is spherical and its radius $\rho$ satisfies $\rho =\rho'/\sqrt{2\tau} \leq 1/\sqrt{2\tau}$. If $r=q$ then the $\widetilde z_i$ are generating points of $D'$, hence $\rho'=1$ and $\rho=1/\sqrt{2\tau}$. 

$(\Leftarrow)$ Conversely, assume that $D$ a spherical EDM with embedding dimension $r\le q$ and radius $\rho$.
    
Assume first that $r=q$ and $\rho=1/\sqrt{2\tau}$. Thus, there exists a spherical realization $z_1,\dots,z_n \in \mathbb{R}^q$ such that $\|z_i-z_j\|^2=d_{ij}$ and $\|z_i\|=1/\sqrt{2\tau}$. It follows that for all $i,j$,
\begin{align*}
    \cos(z_i, z_j) &= 2\tau \langle z_i, z_j\rangle\\
    &= \tau(\|z_i\|^2+\|z_j\|^2-\|z_i-z_j\|^2) \\ 
    &= 1 - \tau d_{ij},
\end{align*}
so $Z$ is a solution to~\cref{eq:spherical_mds_pb} with $c'=1$. 
    
On the other hand, if $r<q$ and $\rho \le 1/\sqrt{2\tau}$ then there exists a realization $p_1,\dots, p_n \in \mathbb{R}^r$ of $D$, with $\|p_i\|=\rho$ and $\|p_i - p_j\|^2 = d_{ij}$. Setting $$z_i := \left(p_i, \sqrt{\frac{1}{2\tau}-\rho^2}, 0,\dots,0\right)\in \mathbb{R}^q$$ which obviously have the same EDM as the $p_i$, and since $\|z_i\|^2=\|p_i\|^2+1/(2\tau)-\rho^2=1/(2\tau)$ we can conclude as before that $Z$ is a solution to~\cref{eq:spherical_mds_pb}.
    
Note finally that any two solutions to~\cref{eq:spherical_mds_pb} must satisfy $c'=1$, hence they have the same cosine matrix, thus their normalizations differ by a linear Euclidean isometry.
\end{proof}

\section{Practical cases}
\subsection{Supervised multivariate regression case}
\label{proof:sup_continuous_case}

\StatementContinuousLabels*

\begin{proof}
    Since $y_i\in \mathbb{R}^\ell$ and $\ell \le q$, $D$ is an EDM with embedding dimension at most $q$, so $\rank(D)\le q+2$. By \cref{th:mds_solution} there exists an essentially unique embedding $Z^*$ which attains the entropic lower bound. The stated $Z^*$ is one such embedding and any other differs from it by a Euclidean isometry.  
\end{proof}

\StatementYAware*

\begin{proof}
   Note that $D=(\|y_i-y_j\|^2)$ is an EDM with embedding dimension $r \leq q$ (because $y_i\in\mathbb{R}^\ell$ and $\ell \le q$) so $\rank(D)\le q+2$. If there exists an embedding that attains the entropic lower bound then according to~\cref{th:spherical_edm}, $D$ is a spherical EDM so the $y_i$ lie on a hypersphere.
\end{proof}

\StatementXCLR*

\begin{proof}
    
   Assume WLOG that the $y_i$ are normalized. Borrowing the notation from \cref{th:spherical_dgp_solution}, let $D=(d_{ij})$ with $d_{ij}=-\cos(y_i, y_j)/\tau'$ for $i,j \in [1..n]$ be the dissimilarity matrix associated to $W$, with $w_{ij}=\exp(-d_{ij})$. Since the $y_i$ are normalized, it holds that
$$d_{ij} = \frac{1}{\tau'}\left(\frac{\|y_i-y_j\|^2}{2}-1\right).$$
Thus, $d'_{ij}:=d_{ij}+1/\tau'$ is a spherical EDM with embedding dimension $r \leq \ell$ and radius $\rho \leq 1/\sqrt{2\tau'}$. Since $D$ and $D'$ differ by an additive constant and owing to \cref{corr:infonce_eq_spherical_mds}, replacing $D$ by $D'$ does not change the set of points that attain the entropic lower bound of the $w$-InfoNCE loss. Since $\rank(D')= \ell+1<q+1$ and $\rho\leq 1/\sqrt{2\tau}$ (as $\tau \leq \tau'$), it follows from \cref{th:spherical_dgp_solution}, that there is an essentially unique solution that attains the entropic lower bound for the dissimilarity matrix $D'$ (and hence for $D$). 

It is easy to see that $z_i^*=(y_i, \sqrt{\tau'/\tau-1}, 0, \ldots, 0)$ is one such solution since it satisfies (\ref{eq:spherical_mds_pb}) with $c'=1 - \tau/\tau'$.

\end{proof}

\section{Sparse weights}\label{appdx:sparse}
Previously, we restricted our attention to the case where $w_{ij} > 0$ for any $i\neq j$ because it leads to a simple characterization of the minimum of $\mathcal L_{\mathrm{NCE}}^w$ (\cref{th:infonce_optimum}). However, relaxing that condition to $w_{ij} \geq 0$ is of interest since in practical applications, $W$ is often sparse. Note, however, that $W$ must contain sufficiently many nonzero entries so as to avoid divisions by $0$ in the expression $\mathcal L_{\mathrm{NCE}}^W$. Specifically, we require that $\sum_{k\neq i} w_{ik} > 0$, which is equivalent to saying that each line or column in $W$ has at least one nonzero off-diagonal entry. Let us call symmetric nonnegatively-valued matrices with this property \textit{well-conditioned}.

We now turn our attention to the sparse weight matrices $W=(w_{ij})$ encountered in practical applications, namely the SupCon loss~\cite{khosla2020supervised}, where $w_{ij}$ is defined as $1$ if samples $i$ and $j$ are in the same class, and $0$ otherwise. By assuming WLOG that a given class is represented by a contiguous set of indices, we can represent $W$ in block form as follows:
\begin{equation}\label{eq:supcon-like}
W = \begin{pmatrix}
\mathbf{1}_{\ell_1 \times \ell_1} & \mathbf{0} & \cdots & \mathbf{0} \\
\mathbf{0} & \mathbf{1}_{\ell_2 \times \ell_2} & \cdots & \mathbf{0} \\
\vdots & \vdots & \ddots & \vdots \\
\mathbf{0} & \mathbf{0} & \cdots & \mathbf{1}_{\ell_C \times \ell_C}
\end{pmatrix} \in \mathbb R^{n \times n},\end{equation}
where $C$ is the number of classes, $\ell_c$ the size of the $c$th class, and $\mathbf 1_{\ell_c \times \ell_c}=E_{\ell_c}$ the $\ell_c \times \ell_c$ matrix of ones. We will always assume that $\ell_c \geq 2$, so that $W$ is well-conditioned. We call matrices of this form \textit{SupCon-like}.

In what follows, we denote by $\mathscr S$ some space of $n \times n$ dissimilarity matrices where $S$ is allowed to vary. We make only one assumption about $\mathscr S$, namely that $\exp(\mathscr S)$, i.e. the set of matrices in $\mathscr S$ transformed pointwise by the exponential function, is bounded. In practice, $\mathscr S$ will be chosen as one of the following sets, which correspond, respectively, to the Euclidean and spherical setting:
\begin{align*}
    \mathscr S_{\mathrm{Eucl.}} := &\left\{-D \mid D \textrm{ an EDM of embedding dimension} \leq q\right\},\\
    \mathscr S_{\mathrm{sph.}} := &\left\{\frac{1}{\tau}G \mid G \textrm{ a cosine matrix of rank} \leq q\right\}.
\end{align*}

\subsection{A condition for the optimality of the entropic lower bound}\label{appdx:sparse_suboptimality}
In the proof of \cref{th:infonce_optimum}, the existence of a minimizer crucially relied on the assumption that $w_{ij}$ has positive entries  in order for $\log(w_{ij})$ to be well-defined. Recall the expression for the minimum value, namely
\begin{equation}\label{eq:info_nce_min_bis}
        H(p_W) = -\frac{1}{n}\sum_{i \neq j} \frac{w_{ij}}{\sum_{k\neq i} w_{ik}} \log\left(\frac{w_{ij}}{\sum_{k \neq i} w_{ik}} \right),
    \end{equation}
where we borrow the notations $p_W$ and $H(-)$ used in the proof of that theorem. Remark that the RHS of (\ref{eq:info_nce_min_bis}) has a well-defined meaning if strict positiveness assumption on $W$ is relaxed and instead we only require that $W$ be well-conditioned. Indeed, $t\log(t) \to 0$ as $t\to 0^+$ so $H(p_W)$ can be evaluated using the convention $0\log(0):= 0$. Thus, we ask ourselves if the lower bound $H(p_W) \leq \mathcal{L}_\textrm{NCE}^W$ continues to hold.

The following theorem answers that question in the positive and shows that it is necessarily sharp when $W$ contains zeros. Moreover, we give a necessary and sufficient condition for optimality of the entropic bound, which we characterize in terms of $\cl(\exp(\mathscr S))$, where $\cl({-})$ denotes the topological closure of a set.
\begin{theorem}\label{lemma:well-conditioned}
Let $W$ be a well-conditioned $n \times n$ matrix.
\begin{enumerate}
    \item The bound $H(p_W) \leq \mathcal L_{\mathrm{NCE}}^W(S)$ holds for all $S \in \mathscr S$.
    \item If $W$ contains at least one off-diagonal zero then that bound is strict.
    \item If $(S^{(m)})_{m \in \mathbb N}$ is a sequence of matrices in $\mathscr S$ such that $\exp(S^{(m)}) \to W$ as $m \to \infty$, then $\mathcal L_{\mathrm{NCE}}^W(S^{(m)}) \to H(p_{W}).$
    \item The bound $H(p_W) \leq \mathcal L_{\mathrm{NCE}}^W(S)$ is optimal iff $cW \in \cl(\exp(\mathscr S))$ up to diagonal elements, for some $c > 0$.
\end{enumerate}
\end{theorem}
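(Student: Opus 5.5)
Throughout, I reuse the cross-entropy representation from the proof of \cref{th:infonce_optimum}: $\mathcal L_{\mathrm{NCE}}^W(S)=H(p_W,p_S)$, where $p_W$ and $p_S$ are the row-normalized distributions on off-diagonal pairs. Well-conditioning of $W$ guarantees that $p_W$ is a genuine probability distribution. Since $S$ is real-valued, $p_S$ has full support on off-diagonal pairs, so the cross-entropy is finite.

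\textbf{Part 1.} Gibbs' inequality gives $H(p_W)\le H(p_W,p_S)$ directly, with the convention $0\log 0=0$ on pairs where $w_{ij}=0$.

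\textbf{Part 2.} Equality in Gibbs' inequality requires $p_W=p_S$. But $p_S(i,j)>0$ for every $i\neq j$, whereas $p_W(i,j)=0$ at any off-diagonal zero of $W$. Hence equality is impossible and the bound is strict.

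\textbf{Part 3.} Write $\mathcal L_{\mathrm{NCE}}^W(S^{(m)})=-\frac1n\sum_{i\neq j}p_{ij}\log q^{(m)}_{ij}$, with $p_{ij}=w_{ij}/\sum_{k\neq i}w_{ik}$ and $q^{(m)}_{ij}=e^{s^{(m)}_{ij}}/\sum_{k\ne i}e^{s^{(m)}_{ik}}$. Only terms with $w_{ij}>0$ contribute. For such a pair, $e^{s^{(m)}_{ij}}\to w_{ij}>0$. The denominators converge to $\sum_{k\neq i}w_{ik}>0$ by well-conditioning. So $q^{(m)}_{ij}\to p_{ij}>0$, and continuity of $\log$ on $(0,\infty)$ gives convergence of each of the finitely many terms to $-p_{ij}\log p_{ij}$.

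\textbf{Part 4, ($\Leftarrow$).} Suppose $cW\in\cl(\exp(\mathscr S))$ off the diagonal. Pick $S^{(m)}$ with $\exp(S^{(m)})\to cW$ off-diagonally. The loss is invariant under scaling $W$, and $p_{cW}=p_W$. Part 3 applied to $cW$ (whose proof only uses off-diagonal entries) then gives $\mathcal L\to H(p_W)$, so the bound is the infimum.

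\textbf{Part 4, ($\Rightarrow$).} This is the main obstacle. Suppose $\inf_S\mathcal L(S)=H(p_W)$ and take a minimizing sequence $S^{(m)}$. Because $\exp(\mathscr S)$ is bounded, pass to a subsequence along which $A^{(m)}:=\exp(S^{(m)})\to A$ off-diagonally, with $A\in\cl(\exp(\mathscr S))$ and $A\ge0$.

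The difficulty is that $A$ may have zero rows, so $p_A$ is undefined. I will handle this via row normalization. Set $q^{(m)}_{i\cdot}=A^{(m)}_{i\cdot}/\sum_{k\ne i}A^{(m)}_{ik}$, a point in a compact simplex. Passing to a further subsequence, $q^{(m)}\to q$. Since $H(p_W,q^{(m)})-H(p_W)=\mathrm{KL}(p_W\|q^{(m)})\to0$, Pinsker's inequality gives $q^{(m)}\to p_W$, so $q=p_W$.

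It remains to extract the common constant. I expect this to need the symmetry argument from \cref{th:infonce_optimum} applied in the limit. Let $\sigma_i^{(m)}=\sum_{k\neq i}A^{(m)}_{ik}$. Then $A^{(m)}_{ij}=q^{(m)}_{ij}\sigma^{(m)}_i=q^{(m)}_{ji}\sigma^{(m)}_j$ by symmetry. For $w_{ij}>0$ this gives $\sigma_i^{(m)}/\sigma_j^{(m)}\to p_{ji}/p_{ij}=\sum_{k\neq i}w_{ik}/\sum_{k\neq j}w_{jk}$. Hence $\sigma_i^{(m)}/\sum_{k\neq i}w_{ik}$ is asymptotically constant along edges of the graph of $W$.

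I then need a single scale $\lambda_m$ with $A^{(m)}/\lambda_m\to W$. On a connected support graph of $W$ this follows by setting $\lambda_m$ equal to that common ratio, giving $A^{(m)}_{ij}/\lambda_m\to w_{ij}$ for all $i\ne j$, including zero entries. But $A^{(m)}/\lambda_m$ need not lie in $\cl(\exp(\mathscr S))$ unless $\lambda_m$ stays bounded away from $0$ and $\infty$, and this is the delicate point. If $\lambda_m\to\lambda>0$ we are done with $c=\lambda$. If $\lambda_m\to0$ or $\infty$, I would either use scale-invariance properties of $\mathscr S$ (e.g.\ in the spherical case, additive constant shifts) or replace the condition by the cone generated by $\cl(\exp(\mathscr S))$. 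Similarly, in the disconnected case (SupCon-like $W$), constants could differ per block. I would try to show that boundedness of $\exp(\mathscr S)$ together with the diagonal-free structure forces the per-block constants to agree in the limit. If that fails, I would interpret ``for some $c>0$'' blockwise.
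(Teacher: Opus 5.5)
Parts 1--3 and the ($\Leftarrow$) half of Part 4 are correct and coincide with the paper's argument: Gibbs' inequality, full support of $p_S$, termwise passage to the limit, and $p_{cW}=p_W$. For ($\Rightarrow$) your proposal does not reach the stated conclusion. However, the two obstructions you isolate are real and cannot be argued away: they make ($\Rightarrow$) false as stated. The paper's proof follows the same route as yours (Bolzano--Weierstrass on $V^{(m)}=\exp(S^{(m)})$) but skips both obstructions:
\begin{enumerate}
\item it writes the limit of the loss directly as $H(p_W,p_V)$, which presumes that every row of $V$ has positive sum and that $v_{ij}>0$ whenever $w_{ij}>0$;
\item it then reuses the symmetry argument of Theorem~\ref{th:infonce_optimum}, which divides by $w_{ij}$ and so yields only one constant per connected component of the support graph of $W$.
\end{enumerate}
Your more careful treatment (row-normalize first, then Pinsker) is what exposes this.

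Both failures actually occur. For the vanishing-row failure, stay in the paper's own Euclidean setting $\mathscr S=\mathscr S_{\mathrm{Eucl.}}$ and take $n=3$ with the path $w_{12}=w_{23}=1$, $w_{13}=0$, which is well-conditioned and connected. The collinear points $z_i=it$, $t\to\infty$, give row-normalized edge weights $(1+e^{-3t^2})^{-1}\to1$ in rows $1$ and $3$, and exactly $\tfrac12,\tfrac12$ in row $2$. Hence $\mathcal L_{\mathrm{NCE}}^W\to H(p_W)$ and the bound is optimal. Yet for no $c>0$ and no $q$ is $cW$ in $\cl(\exp(\mathscr S))$ up to the diagonal. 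Indeed, $e^{-d_{12}},e^{-d_{23}}\to c$ together with $\sqrt{d_{13}}\le\sqrt{d_{12}}+\sqrt{d_{23}}$ force $\liminf e^{-d_{13}}\ge c^4>0$. Along this sequence $V=0$, so $p_V$ is $0/0$; this is exactly your case $\lambda_m\to0$, and no scale invariance of $\mathscr S$ rescues it.

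The per-component failure is also realizable under the stated hypothesis, which only requires $\exp(\mathscr S)$ to be bounded. Take $W$ block diagonal with blocks $E_{\ell_1},E_{\ell_2}$, and $\mathscr S=\{S_m\}_{m\ge1}$ with $\exp(S_m)$ equal to $1$ on the first diagonal block, $2$ on the second, and $1/m$ off the blocks. The loss tends to $H(p_W)$, but $\cl(\exp(\mathscr S))$ contains no multiple of $W$.

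What your argument does prove is the correct replacement statement. Your Pinsker step, together with the computation of Part 3, shows that the bound is optimal iff $p_W\in\cl\{p_S : S\in\mathscr S\}$. Your edge-by-edge propagation of $\sigma_i^{(m)}/\sum_{k\neq i}w_{ik}$ then turns this into an equivalent condition. There must exist scalars $\lambda^{(m)}_\Gamma>0$, one per connected component $\Gamma$ of the support graph of $W$, such that $\exp(S^{(m)})_{ij}/\lambda^{(m)}_{\Gamma(i)}\to w_{ij}$ for all $i\neq j$.

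The paper's form $cW\in\cl(\exp(\mathscr S))$ then follows in two situations. The first is when the support graph is connected and $\lambda_m$ stays bounded away from $0$ along some minimizing sequence; this is automatic for $\mathscr S_{\mathrm{sph.}}$, whose exponentiated entries lie in $[e^{-1/\tau},e^{1/\tau}]$. The second, more generally, is whenever the component scales share a positive limit. So restate ($\Rightarrow$) under such a hypothesis, or in the normalized form, rather than searching for a proof of it as written. The rest of the paper uses only Parts 2--3 and the ($\Leftarrow$) direction.
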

\begin{proof}
The first statement follows from Gibbs' inequality, as in the proof of \cref{th:infonce_optimum}. In that proof we saw that the condition for the lower bound being attained amounts to the equality
$$\frac{w_{ij} }{\sum_{i \neq k} w_{ik}} = \frac{\exp{s_{ij}}}{\sum_{k \neq i} \exp{s_{ik}}},$$
for each $i\neq j$. This is plainly impossible if $w_{ij}=0$ (since the RHS is nonzero), in which case the bound is strict.

For the third statement, let $(S^{(m)})_{m\in \mathbb N}$ be a sequence of matrices in $\mathscr S$ such that $W^{(m)} := \exp(S^{(m)}) \to  W$ as $m \to \infty$. Then,
$$\mathcal L_{\textrm{NCE}}^{W} (S^{(m)}) = -\frac{1}{n}\sum_{i\neq j}  \frac{w_{ij}}{\sum_{k\neq i} w_{ik}} \log\left(\frac{w^{(m)}_{ij}}{\sum_{k \neq i} w^{(m)}_{ik}} \right).$$

Denote by $f$ the function defined for $t > 0$ by $f(t)=t\log(t)$ and continuously extended at $t=0$ by setting $f(0)=0$. We want to show that for each $i\neq j$,
$$\frac{w_{ij}}{\sum_{k\neq i} w_{ik}} \log\left(\frac{w^{(m)}_{ij}}{\sum_{k \neq i} w^{(m)}_{ik}}\right) \longrightarrow f\left(\frac{w_{ij}}{\sum_{k\neq i} w_{ik}}\right)\quad \textrm{as } m \to \infty.$$
If $w_{ij}=0$ the statement is trivially true since $f(0)=0$ by definition. On the other hand, if $w_{ij}> 0$ then $\log$ is continuous at $w_{ij}$, so
$$\frac{w_{ij}}{\sum_{k\neq i} w_{ik}} \log\left(\frac{w^{(m)}_{ij}}{\sum_{k \neq i} w^{(m)}_{ik}}\right) \longrightarrow \frac{w_{ij}}{\sum_{k\neq i} w_{ik}} \log\left(\frac{w_{ij}}{\sum_{k \neq i} w_{ik}}\right) = f\left(\frac{w_{ij}}{\sum_{k\neq i} w_{ik}}\right).$$ It follows that
$$\mathcal L_{\textrm{NCE}}^{W} (S^{(m)}) \longrightarrow -\frac{1}{n}\sum_{i\neq j}  f\left(\frac{w_{ij}}{\sum_{k\neq i} w_{ik}} \right)=H(p_W),$$
which proves the third statement.

For the ``if'' part of the fourth statement, let us assume that $cW \in \cl(\exp(\mathscr S))$ up to diagonal elements, for some $c > 0$. Then there is a sequence $(S^{(m)})_{m \in \mathbb N}$ such that $\exp(S^{(m)}) \to cW$, up to diagonal elements. Since $cW$ is well-conditioned, it follows that $\mathcal L_\mathrm{NCE}^W(S^{(m)}) \to H(p_{cW})=H(p_W)$. This proves that the bound is optimal.

Conversely, let us assume that the bound $H(p_W) \leq \mathcal L_\mathrm{NCE}^W(S)$ is optimal, so there exists a sequence $(S^{(m)})_{m \in \mathbb N}$ such that $\mathcal L_\mathrm{NCE}^W(S^{(m)}) \to H(p_W)$. Let $V^{(m)} := \exp(S^{(m)})$. Since $V^{(m)} \in \exp(\mathscr S)$ it is bounded, hence by the Bolzano--Weierstrass theorem there is a convergent subsequence $V^{(m_p)} \to V$. It follows that
\begin{align*}
    H(p_W) &= \lim_{m\to \infty} \mathcal L_\mathrm{NCE}^W(S^{(m)})
    = \lim_{m \to \infty} -\frac{1}{n}\sum_{i \neq j} \frac{w_{ij}}{\sum_{k\neq i} w_{ik}} \log\left(\frac{v^{(m)}_{ij}}{\sum_{k \neq i} v^{(m)}_{ik}} \right)\\
    &=\lim_{p \to \infty} -\frac{1}{n}\sum_{i \neq j} \frac{w_{ij}}{\sum_{k\neq i} w_{ik}} \log\left(\frac{v^{(m_p)}_{ij}}{\sum_{k \neq i} v^{(m_p)}_{ik}} \right)
    =-\frac{1}{n}\sum_{i \neq j} \frac{w_{ij}}{\sum_{k\neq i} w_{ik}} \log\left(\frac{v_{ij}}{\sum_{k \neq i} v_{ik}} \right)\\
    &=H(p_W, p_V),
\end{align*}
where $p_V$ is the 2D discrete sequence defined on $[1..n]^2$ analogously to $p_W$ (see proof of \cref{th:infonce_optimum}). By Gibbs' inequality this implies that $p_W=p_V$, and arguing by symmetry just as we did in \cref{th:infonce_optimum} we see that there exists $c > 0$ such that $V=cW$ up to diagonal elements. Since $V =\lim_{p\to \infty} \exp(V^{(m_p)}) \in \cl(\exp(\mathscr S))$, this completes the proof.
\end{proof}

We now go on to characterize the so-called Euclidean SupCon, i.e. the case where $W$ is a SupCon-like weight matrix (\ref{eq:supcon-like}), and $\mathscr S=\mathscr S_{\mathrm{Eucl.}}$ is the set of negative squared $n \times n$ EDMs of embedding dimension at most $q$. If $W$ encodes $C \geq 2$ classes then it contains zeros, thereby guaranteeing that the lower bound is strict. Moreover, quasi-optima (i.e. configurations of points such that $\mathcal L_{\mathrm{NCE}}^W  \leq H(p_W)+\varepsilon$ for some small $\varepsilon > 0$) can be produced in a wealth of qualitatively distinct geometries. This comes in stark contrast to when we made previously with Soft SupCon, in which case we saw that the optima were usually constrained to a unique configuration, up to Euclidean isometry (\cref{th:mds_solution}).

To see where this non-uniqueness comes from in the Euclidean SupCon case, choose pairwise distinct points $\mu_1, \ldots, \mu_C \in \mathbb R^q$ which will serve as class prototypes. Let $m > 0$ and consider the embedding which assigns each sample $x_i$ to $m\mu_{c_i}$, where $c_i \in [1..C]$ is that sample's class. Denoting by $S^{(m)}$ the negative squared Euclidean distance matrix associated with that embedding, it follows that $\exp(S^{(m)})$ can be expressed in block form as
\begin{equation}
\exp(S^{(m)}) = \begin{pmatrix}
\mathbf{1}_{\ell_1 \times \ell_1} & e^{-m^2 \|\mu_1-\mu_2\|^2} & \cdots & e^{-m^2 \|\mu_1-\mu_C\|^2} \\
e^{-m^2 \|\mu_2-\mu_1\|^2} & \mathbf{1}_{\ell_2 \times \ell_2} & \cdots & e^{-m^2 \|\mu_2-\mu_C\|^2} \\
\vdots & \vdots & \ddots & \vdots \\
e^{-m^2 \|\mu_C-\mu_1\|^2} & e^{-m^2 \|\mu_C-\mu_2\|^2} & \cdots & \mathbf{1}_{\ell_C \times \ell_C}
\end{pmatrix},\end{equation}
where the off-diagonal coefficients represent constant-valued blocks of the appropriate shape. 

Note in particular that $\exp(S^{(m)}) \to W$ as $m \to \infty$, so $\mathcal L_\mathrm{NCE}^W(S^{(m)}) \to H(p_W)$ by \cref{lemma:well-conditioned}. Thus, taking a large enough $m$ yields a near-optimal solution. Yet, the choice of the $\mu_1, \ldots, \mu_C$ was entirely unconstrained, so the class prototypes can be chosen in any number of qualitatively dissimilar configurations. For instance, when $C=3$ the triangle formed by the class prototypes can be chosen to be equilateral, right-angled or scalene and all the while minimize $\mathcal L_{\mathrm{NCE}}^W$ to within, say, $\varepsilon =10^{-10}$ of the theoretical lower bound.

\subsection{Hard SupCon}\label{appdx:supcon}
We now go on to SupCon proper, that is to say the case where $W$ is a SupCon-like binary weight matrix (\ref{eq:supcon-like}) and $\mathscr S = \mathscr S_{\mathrm{sph.}}$ consists of those $n \times n$ matrices of the form $G/\tau$, where $G$ is a cosine matrix of rank at most $q$. Note that $\mathscr S_{\mathrm{sph.}}$ is a compact set so owing to \cref{lemma:well-conditioned}, the bound $H(p_W) \lneq \mathcal L_\mathrm{NCE}^W(S)$ is sharp, and thus \textit{a priori} sub-optimal, since a continuous function on a compact set attains its minimum.

We attack the problem of minimizing $\mathcal L_{\mathrm{NCE}}^W$ by exploiting its convexity as well as the great deal of symmetries of $W$. In what follows we let $\mathscr S_0 \supseteq \mathscr S$ denote the set of $n \times n$ symmetric matrices of the form $G/\tau$ where $G$ is a cosine matrix; in other words it is defined in the same way as $\mathscr S$ except that the rank is condition relaxed. Crucially, $\mathscr S_0$ is a convex set (which in general $\mathscr S$ is not).

We denote by $\mathfrak{S}_n$ the group of permutations of $[1..n]$. Then $\mathfrak S_n$ acts on the space of $n\times n$ matrices by permutation of lines and columns, i.e. if $\sigma  \in \mathfrak S_n$ and $X \in \mathbb R^{n \times n}$ then $\sigma X$ is the $n \times n$ matrix defined by $(\sigma  X)_{ij} = X_{\sigma(i)\sigma(j)}$. Note that $\mathscr S_0$ is closed under this group action, i.e. if $S \in \mathscr S_0$ then $\sigma S \in \mathscr S_0$. Let $\mathfrak S_X$ denote the \textit{stabilizer} of a given matrix $X \in \mathbb R^{n \times n}$, i.e. the subgroup of $\mathfrak S_n$ consisting of those permutations $\sigma$ such that $\sigma X=X$.

We say that $S$ is \textit{$W$-symmetric} if $\mathfrak S_W \subseteq \mathfrak S_S$, i.e. any symmetry of $W$ is also a symmetry of $S$.

\begin{lemma}\label{lemma:symmetrization}
For any $S \in \mathscr S_0$, there exists $S' \in \mathscr S_0$ which is $W$-symmetric such that $\mathcal L_{\mathrm{NCE}}^W(S') \leq \mathcal L_{\mathrm{NCE}}^W(S)$.
\end{lemma}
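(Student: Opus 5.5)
The plan is to symmetrize by averaging over the stabilizer. Given $S \in \mathscr S_0$, set
\[
S' := \frac{1}{|\mathfrak S_W|}\sum_{\sigma \in \mathfrak S_W} \sigma S .
\]
The argument then has three ingredients: (i) $S' \in \mathscr S_0$; (ii) $S'$ is $W$-symmetric; (iii) $\mathcal L_{\mathrm{NCE}}^W(S') \le \mathcal L_{\mathrm{NCE}}^W(S)$.

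\textbf{Membership and symmetry.} For (i), note that $\mathscr S_0$ is closed under the action of $\mathfrak S_n$: if $G$ is a cosine matrix, i.e. PSD with unit diagonal, then so is $\sigma G = P_\sigma G P_\sigma^T$. Since $\mathscr S_0$ is convex, it follows that the convex combination $S'$ lies in $\mathscr S_0$. For (ii), take any $\rho \in \mathfrak S_W$. The map $\sigma \mapsto \sigma\rho$ (or $\rho\sigma$, depending on the convention for composition of the action) is a bijection of the subgroup $\mathfrak S_W$. Hence $\rho S'$ is the same average with its terms reindexed, so $\rho S' = S'$.

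\textbf{Invariance of the loss.} For (iii) I first check that $\mathcal L_{\mathrm{NCE}}^W(\sigma S) = \mathcal L_{\mathrm{NCE}}^W(S)$ whenever $\sigma \in \mathfrak S_W$. Write
\[
\mathcal L_{\mathrm{NCE}}^W(S) = -\frac1n \sum_i \sum_{j\ne i} \frac{w_{ij}}{\sum_{k\ne i} w_{ik}}\Big(s_{ij} - \LSE_{k\ne i}(s_{ik})\Big),
\]
and substitute $i \to \sigma(i)$, $j\to\sigma(j)$, $k \to \sigma(k)$. The substitution preserves the condition $i \ne j$, and since $w_{\sigma(i)\sigma(j)} = w_{ij}$ the weights are unchanged. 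The sum is therefore simply re-indexed.

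\textbf{Convexity, the main step.} Next I show that $S \mapsto \mathcal L_{\mathrm{NCE}}^W(S)$ is convex on the whole space of symmetric matrices. Each term is a nonnegative coefficient (the $w_{ij}\ge 0$, and the normalizations are positive by well-conditioning) times $-s_{ij} + \LSE_{k\ne i}(s_{ik})$. This is a linear function plus the log-sum-exp of a linear map of $S$, and log-sum-exp is convex. A nonnegative combination of convex functions is convex. Jensen's inequality combined with the invariance step then gives
\[
\mathcal L_{\mathrm{NCE}}^W(S') \le \frac{1}{|\mathfrak S_W|}\sum_{\sigma\in\mathfrak S_W} \mathcal L_{\mathrm{NCE}}^W(\sigma S) = \mathcal L_{\mathrm{NCE}}^W(S).
\]

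I expect this convexity step to be the main obstacle. Care is needed that the loss is viewed as a function of $S$ directly, not of the embedding $Z$, and that $\mathscr S_0$ rather than the rank-constrained $\mathscr S$ is used, because only $\mathscr S_0$ is convex.

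Two bookkeeping points remain. First, check that the left-action convention $(\sigma X)_{ij} = X_{\sigma(i)\sigma(j)}$ composes consistently, so that the reindexing in (ii) is legitimate. Second, note that the diagonal of $S$ does not enter the loss, so no issue arises there.
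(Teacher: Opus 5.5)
Your proposal is correct and follows the paper's proof essentially step for step. It averages over the stabilizer $\mathfrak S_W$, uses convexity of $\mathcal L_{\mathrm{NCE}}^W$ (log-sum-exp terms minus linear terms), shows invariance of the loss under $\mathfrak S_W$ by reindexing, and applies Jensen's inequality. Your caution about the composition convention is well placed: with $(\sigma X)_{ij}=X_{\sigma(i)\sigma(j)}$ one actually gets $\pi(\sigma S)=(\sigma\circ\pi)S$, not the $(\pi\sigma)S$ written in the paper, but as you note either map is a bijection of $\mathfrak S_W$, so the reindexing goes through.
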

\begin{proof}
Note that $S \mapsto \mathcal L_{\mathrm{NCE}}^W(S)$ is a convex function on the convex set $\mathscr S_0$; indeed this can be seen by expressing it as a nonnegatively-weighted sum of convex functions:
$$\mathcal L_{\mathrm{NCE}}^W(S)= \sum_{i\neq j} \frac{w_{ij}}{n\sum_{k\neq i} w_{ik}} \left(  \mathrm{LSE}_{k\neq i} (s_{ik}) - s_{ij}\right),$$
where $\mathrm{LSE}$ denotes the convex log-sum-exp operation, i.e. $\mathrm{LSE}_k(x_k) = \log(\sum_k \exp(x_k))$. Now, we define the symmetrized version of $S$ as
$$S' := \frac{1}{|\mathfrak{S}_W|}\sum_{\sigma \in \mathfrak{S}_W} \sigma S \in \mathscr S_0.$$ It follows from the convexity of $\mathcal L_\mathrm{NCE}^W$ that
\begin{align}\label{eq:group_avg}
\mathcal L_\mathrm{NCE}^W(S') \leq \frac{1}{|\mathfrak{S}_W|}\sum_{\sigma \in \mathfrak{S}_W} \mathcal L_\mathrm{NCE}^W(\sigma S).
\end{align}
Note that the RHS is none else than $\mathcal L_\mathrm{NCE}^W(S)$. Indeed, it is an exercise in sum notation to see that $\mathcal L_\mathrm{NCE}^W(\sigma S) = \mathcal L_\mathrm{NCE}^{\sigma^{-1}W}(S)$ holds for any $\sigma \in \mathfrak S_n$. If $\sigma \in \mathfrak S_W$ then $\sigma^{-1} \in \mathfrak S_W$, i.e. $\sigma^{-1}W=W$, so $\mathcal L_\mathrm{NCE}^W(\sigma S) = \mathcal L_\mathrm{NCE}^{W}(S)$. Plugging that into (\ref{eq:group_avg}) yields $\mathcal L_\mathrm{NCE}^W(S') \leq \mathcal L_\mathrm{NCE}^W(S)$. 

It remains to show that $S'$ is $W$-symmetric. To this end, note that for any $\pi \in \mathfrak{S}_W$,
$$\pi S' = \frac{1}{|\mathfrak S_W|}\sum_{\sigma \in \mathfrak{S}_W} \pi (\sigma S) = \frac{1}{|\mathfrak S_W|}\sum_{\sigma \in \mathfrak{S}_W} (\pi \sigma) S = \frac{1}{|\mathfrak S_W|}\sum_{\sigma' \in \mathfrak{S}_W} \sigma' S = S'$$
where we re-indexed the sum in $\sigma' = \pi \sigma$, using the fact that $\mathfrak{S}_W$ is a subgroup of $\mathfrak{S}_n$.  Thus, $\pi \in \mathfrak{S}_{S'}$, as required.
\end{proof}
The following lemma elucidates the structure of $W$-symmetric matrices.
\begin{lemma}\label{lemma:blockwise}
A $W$-symmetric matrix $S$ is constant on the blocks induced by the class partition, up to diagonal elements. Any two off-diagonal blocks with the same shape are equal. If two diagonal blocks have the same shape then their off-diagonal values are equal.
\end{lemma}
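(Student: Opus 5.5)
The plan is to describe the stabilizer $\mathfrak S_W$ explicitly and then read off the constraints that invariance under it places on $S$.

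\textbf{Step 1: the stabilizer.} Since $W$ is the block matrix (\ref{eq:supcon-like}), $w_{\sigma(i)\sigma(j)}=w_{ij}$ holds exactly when $\sigma$ maps same-class pairs to same-class pairs and different-class pairs to different-class pairs. Hence $\mathfrak S_W$ is the group of permutations that map each class onto a class. Concretely, it consists of:
\begin{itemize}
\item arbitrary permutations within each class;
\item permutations that swap two classes of equal size $\ell_c=\ell_{c'}$ via a bijection between their index sets, composed with the above.
\end{itemize}
We will not need the whole group. It is enough that it contains these two kinds of elements, namely within-class transpositions and swaps of equal-sized classes.

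\textbf{Step 2: constancy on blocks.} Let $S$ be $W$-symmetric, so $s_{\sigma(i)\sigma(j)}=s_{ij}$ for every $\sigma\in\mathfrak S_W$.
\begin{itemize}
\item \emph{Off-diagonal blocks.} Take $i,i'$ in class $c$ and $j,j'$ in class $c'\neq c$. Let $\sigma$ be the product of the transposition $(i\,i')$ and the transposition $(j\,j')$. These act on disjoint classes, so $\sigma\in\mathfrak S_W$, and $s_{i'j'}=s_{ij}$. The block $(c,c')$ is therefore constant.
\item \emph{Diagonal blocks.} Take distinct $i,j$ and distinct $i',j'$, all in class $c$. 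There is a permutation of class $c$ sending $(i,j)$ to $(i',j')$: it exists because $(i,j)$ and $(i',j')$ are both ordered pairs of distinct elements, and the full symmetric group on a class acts transitively on such pairs. Hence $s_{i'j'}=s_{ij}$. The diagonal block is thus constant off its diagonal.
\end{itemize}
The only care needed is in the diagonal case, where the transitivity argument requires $i\neq j$. This is why the diagonal entries are excluded. Symmetry of $S$ is assumed throughout, so no ordering issues arise.

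\textbf{Step 3: equal shapes give equal values.} Let $\pi$ be a class-swapping element built from bijections between classes of equal size.
\begin{itemize}
\item \emph{Diagonal blocks.} If $\ell_c=\ell_{c'}$, let $\pi$ swap classes $c$ and $c'$. It maps an off-diagonal entry of block $(c,c)$ to one of block $(c',c')$, so the two constants coincide.
\item \emph{Off-diagonal blocks.} Suppose blocks $(c,d)$ and $(c',d')$ have the same shape, i.e.\ $\ell_c=\ell_{c'}$ and $\ell_d=\ell_{d'}$. We need an element of $\mathfrak S_W$ that maps class $c$ to $c'$ and $d$ to $d'$. We build it as a permutation of the classes, extended to indices by size-preserving bijections.
\end{itemize}
This construction is the main obstacle, because the class indices can overlap. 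For example, $c'=d$ is possible, with $\ell_c=\ell_d=\ell_{d'}$. We handle this by composing transpositions of equal-size classes. Among classes of a fixed size the induced action is the full symmetric group, which acts transitively on ordered pairs of distinct classes. The required class permutation therefore exists whenever $\ell_c=\ell_{c'}$, $\ell_d=\ell_{d'}$, $c\ne d$ and $c'\ne d'$. Such a map also sends block $(d,c)$ to block $(d',c')$, which is consistent with the symmetry of $S$. Applying this element gives the equality of the two off-diagonal constants.
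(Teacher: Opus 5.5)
Your proof is correct and follows the same route as the paper: 2-transitivity of the symmetric group on a class for the diagonal blocks, a product of two within-class transpositions $(i\,i')(j\,j')$ for the off-diagonal blocks, and class-permuting elements of $\mathfrak S_W$ built from size-preserving bijections for blocks of equal shape (your transpositions are the right ones; the paper's text writes $(i\ j)$ and $(i'\ j')$, which is a slip). The only difference is how overlapping class indices are handled: the paper matches the remaining classes by size via a multiset count, while you use transitivity of the size-preserving class permutations on ordered pairs of distinct classes, which amounts to the same construction.
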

\begin{proof}
Let $i \neq j$ and $i' \neq j'$ be two pairs of distinct indices in the same block, that is to say such that the pairs $i,i'$ and $j,j'$ are both in the same class. Suppose first that $i$ and $j$ belong to the same class $I \subseteq [1..n]$. Since $\mathfrak S_I$ acts 2-transitively on $I$, there exists $\sigma \in \mathfrak S_I$ such that $\sigma(i) = i'$ and $\sigma(j) = j'$. We extend $\sigma$ to $\mathfrak{S}_n$ such that $\sigma$ acts like the identity on $[1..n]\setminus I$. It follows that $\sigma \in \mathfrak S_W$. It follows that $s_{ij} = s_{\sigma(i)\sigma(j)} = s_{i'j'}$ since $S$ is $W$-symmetric. On the other hand, if $i$ and $j$ belong to distinct classes $I$ and $J$, the transpositions $(i\ j)$ and $(i'\ j')$ have disjoint support and composing them yields a permutation $\sigma \in \mathfrak{S}_W$, and again we get $s_{ij}=s_{\sigma(i)\sigma(j)} = s_{i'j'}$ since $S$ is $W$-symmetric. This shows that off-diagonal elements in the same block are equal.

Let $(I,J)$ and $(I',J')$ be pairs of classes representing off-diagonal blocks of the same shape, i.e. $|I|=|I'|$ and $|J|=|J'|$. Then there are permutations $\sigma_1 : I \to I'$ and $\sigma_2 : J \to J'$. Let $I_3, \ldots, I_{C}$ denote the classes not equal to $I$ or $J$, and let $I_3', \ldots, I_{C}'$ those not equal to $I'$ or $J'$. The latter may be chosen in such an order that $|I_i| = |I'_i|$ for $3 \leq i \leq C$ so there exist bijections $\sigma_i : I_i \to I_i'$. The bijections $\sigma_1, \sigma_2, \sigma_3, \ldots, \sigma_C$, whose domains and codomains are all disjoint, combine into a single permutation $\sigma : I \sqcup J \sqcup I_3 \sqcup \cdots \sqcup I_c =[1..n] \to I' \sqcup J' \sqcup I_3' \sqcup \cdots \sqcup I_c' = [1..n]$ which permutes classes, and therefore belongs to $\mathfrak S_W$. It follows that $\sigma S = S$, hence $S_{IJ} = S_{\sigma(I) \sigma(J)} = S_{I'J'}$. This proves that the value of $S$ at an off-diagonal blocks depends only on its shape, as required.

Finally, if $(I,I)$ and $(J,J)$ are diagonal blocks such that $|I|=|J|$, a bijection $\sigma : I \to J$ can be extended using the same idea as above to produce $\sigma \in \mathfrak S_W$ such that $\sigma(I)=J$. Hence for $i, i' \in I$, if $i\neq i'$ then $s_{II} = s_{ii'} = s_{\sigma(i)\sigma(i')}= s_{JJ}$.
\end{proof}
\begin{lemma}\label{lemma:lse-strict}
    Let $\mathbf a, \mathbf v\in \mathbb R^m$. Then the restriction of the log-sum-exp function $\LSE: \mathbb R^m \to \mathbb R$ to the line $\mathbf a+\mathbb R\mathbf v \subseteq \mathbb R^m$ is convex, and it is strictly convex iff the coefficients of $\mathbf v$ are not identical.
\end{lemma}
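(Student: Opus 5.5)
We reduce to the one-variable function $g(t) := \LSE(\mathbf a + t\mathbf v) = \log \sum_{k=1}^m \exp(a_k + t v_k)$, since convexity (resp. strict convexity) of a restriction to a line is exactly convexity (resp. strict convexity) of $g$ on $\mathbb R$. Convexity of $g$ follows at once from the convexity of $\LSE$ on $\mathbb R^m$, or equivalently from $g''\ge 0$, which we obtain anyway in the next step. So the content of the lemma is the characterization of strict convexity.

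We differentiate twice. Set $p_k(t) := \exp(a_k+tv_k)/\sum_{l}\exp(a_l+tv_l)$; these are strictly positive weights summing to one. A routine computation gives $g'(t) = \sum_k p_k(t) v_k$ and
\begin{equation*}
g''(t) = \sum_k p_k(t) v_k^2 - \Bigl(\sum_k p_k(t) v_k\Bigr)^2 = \sum_k p_k(t)\bigl(v_k - \bar v(t)\bigr)^2,
\end{equation*}
with $\bar v(t) := \sum_k p_k(t)v_k$. Thus $g''(t)$ is the variance of $v$ under the probability vector $p(t)$, and $g''\ge 0$.

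If the coefficients of $\mathbf v$ are not all identical, then, because every $p_k(t)>0$, this variance is strictly positive for every $t$. Hence $g''>0$ on $\mathbb R$ and $g$ is strictly convex. Conversely, if $v_1=\dots=v_m=\lambda$, then $g(t) = \LSE(\mathbf a) + \lambda t$ is affine, and therefore not strictly convex. This settles both directions.

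The only point needing care is the strict positivity of the weights $p_k(t)$: it is what turns ``not all $v_k$ equal'' into a strictly positive variance. The remaining step, $g''>0$ everywhere implies strict convexity, is standard. There is no real obstacle here, since the lemma is elementary. Its role later is to make the symmetrized SupCon loss strictly convex along the relevant directions in $\mathscr S_0$, which yields the uniqueness in \cref{th:supcon-minimum}.
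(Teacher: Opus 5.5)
Your proof is correct and follows the route the paper indicates: the paper omits the proof, saying only that it follows from elementary calculus and the Cauchy--Schwarz inequality. Your identity $g''(t)=\sum_k p_k(t)\bigl(v_k-\bar v(t)\bigr)^2$ is exactly the weighted Cauchy--Schwarz bound $\bigl(\sum_k p_k v_k\bigr)^2\le\sum_k p_k v_k^2$, with equality iff $\mathbf v$ is constant because every $p_k(t)>0$, and your converse (affine $g$ when $\mathbf v$ is constant) completes the argument.
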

We omit the proof which can easily be derived using elementary calculus and the Cauchy--Schwarz inequality.

We say that a matrix in $\mathscr S_0$ is \textit{$W$-collapsed} if it is $W$-symmetric and the diagonal entries agree with their respective diagonal blocks, which is to say that the values of the diagonal blocks are identically and maximally equal to $1/\tau$. A $W$-collapsed matrix can be expressed in block form as $(\beta_{cc'} 
\cdot \mathbf 1_{\ell_c \times \ell_{c'}})_{c,c'=1}^C$ where $\beta_{cc}=1/\tau$ for each $c\in [1..C]$. \cref{lemma:blockwise} tells us that if $c\neq c'$ then $\beta_{cc'}$ depends only on the shape of the block, i.e. $(\ell_c, \ell_{c'})$.

\begin{lemma}\label{th:minima_symmetric}
The function $\mathcal L_{\mathrm{NCE}}^W$ has a unique global minimum over $\mathscr S_0$. Moreover, that minimum is $W$-collapsed.
\end{lemma}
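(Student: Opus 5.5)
We plan to prove \cref{th:minima_symmetric} in three steps: existence, reduction to $W$-collapsed matrices, and uniqueness via strict convexity. \emph{Existence}: $\mathscr S_0$ (matrices $G/\tau$ with $G$ a cosine matrix, i.e.\ PSD with unit diagonal) is compact, and $\mathcal L_{\mathrm{NCE}}^W$ is continuous on it. Hence a global minimum is attained.

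\emph{Reduction to $W$-collapsed matrices.} By \cref{lemma:symmetrization}, any minimizer can be replaced by a $W$-symmetric one with no larger loss. \cref{lemma:blockwise} then says a $W$-symmetric $S$ is blockwise constant off the diagonal: $\alpha_c/\tau$ on the diagonal block of class $c$ (off-diagonal entries) and $\beta_{cc'}/\tau$ on block $(c,c')$. We next show that raising every $\alpha_c$ to $1$ strictly lowers the loss. For $i$ in class $c$, the contribution of row $i$ is
\[
\LSE_{k\neq i}(s_{ik}) - \alpha_c/\tau,
\]
since $w_{ij}=1$ exactly for $j$ in class $c$ and all such $s_{ij}$ equal $\alpha_c/\tau$. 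Writing this as $\log\big((\ell_c-1)+\sum_{c'\neq c}\ell_{c'}e^{(\beta_{cc'}-\alpha_c)/\tau}\big)$, we see it is strictly decreasing in $\alpha_c$, because the off-class sum is nonempty when $C\ge 2$. It remains to check that the move stays in $\mathscr S_0$. Take a Gram realization $S=\widetilde Z\widetilde Z^T/\tau$ and replace each $\widetilde z_i$ by its class mean $\bar z_c$ renormalized. The resulting Gram matrix has diagonal blocks equal to $1$, but the cross terms become $\beta_{cc'}/\|\bar z_c\|\|\bar z_{c'}\|$ rather than staying fixed. Directly changing $\alpha_c$ while keeping the $\beta$'s fixed is therefore not immediate, and this PSD-feasibility question is the main obstacle.

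To handle it, we plan to replace the fixed-$\beta$ monotonicity argument with a joint perturbation: move $S$ along the segment towards its collapsed version and use convexity plus first-order decrease. Concretely, for a $W$-symmetric $S$ we use the decomposition $G=\sum_c(1-\alpha_c)P_c+\widehat G$, where $P_c$ is the projector-like block $I_{\ell_c}$ restricted to class $c$, and $\widehat G$ is the PSD blockwise-constant part with diagonal blocks $\alpha_c E_{\ell_c}$ and off-diagonal blocks $\beta_{cc'}$. The diagonal-block part $(1-\alpha_c)I_{\ell_c}$ can be shown to be PSD-removable. Since the $W$-symmetric structure gives $G=\widehat G+\mathrm{diag}((1-\alpha_c) I_{\ell_c})$, we get $\widehat G\succeq 0$ exactly when the class-level matrix $(\ell_c$-weighted$)$ is PSD. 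Rescaling by $D=\mathrm{diag}(\alpha_c^{-1/2})$ blockwise, the matrix $D\widehat G D$ is PSD, has unit diagonal, and is $W$-collapsed with $\beta'_{cc'}=\beta_{cc'}/\sqrt{\alpha_c\alpha_{c'}}$. When $\alpha_c<0$ we instead first take the class-mean construction. We then compare losses and argue the loss decreases. If a clean comparison fails, the fallback is to show directly that any minimizer has $\alpha_c=1$ via KKT on the PSD cone.

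\emph{Uniqueness.} Suppose $S_1\neq S_2$ are two minimizers. Their average $S'$ is also a minimizer, because $\mathcal L$ is convex and $\mathscr S_0$ is convex. By the reduction step, both may be taken to be $W$-collapsed; the symmetrized average of the minimizer set, which is convex, contains a $W$-collapsed point, and we argue that all minimizers are collapsed. Along the segment direction $V=S_2-S_1$, each row term $\LSE_{k\neq i}(s_{ik})$ is strictly convex unless $V_{i\cdot}$ is constant off the diagonal, by \cref{lemma:lse-strict}. Linearity of the $-s_{ij}$ terms then forces each row of $V$ to be constant. Symmetry of $V$ gives a single constant, and since the diagonal blocks are fixed at $1/\tau$, that constant is $0$. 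Hence $S_1=S_2$, which proves uniqueness and shows the minimum is $W$-collapsed.
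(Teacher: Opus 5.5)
Your skeleton matches the paper's: compactness, symmetrization, block structure, monotonicity of each row term in the intra-class cosine $\alpha_c$ at fixed $\beta_{cc'}$, then $\LSE$-rigidity for uniqueness. Your monotonicity computation is correct. The gap is the step you flag yourself and never close. You must show that if $G$ is a $W$-symmetric cosine matrix with intra-class values $\alpha_c$ and inter-class values $\beta_{cc'}$, then the blockwise-constant matrix with $1$ on the diagonal blocks and the \emph{same} $\beta_{cc'}$ elsewhere is still a cosine matrix. Without this, nothing shows that a non-collapsed $W$-symmetric minimizer can be strictly improved. So the collapse claim is unproved, and so is your uniqueness paragraph, which uses that the diagonal blocks of both minimizers equal $1/\tau$.

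Your workaround does not supply this step. The claim that $\widehat G$ is PSD whenever $G$ is, is false even with all $\alpha_c>0$. Take $\ell_1=\ell_2=2$, $\alpha_1=\alpha_2=0.1$, $\beta_{12}=0.5$. Then $G$ has eigenvalues $0.9,0.9,2.1,0.1$, so it is a cosine matrix. But $\widehat G\succeq 0$ iff $A=\begin{pmatrix}0.1&0.5\\0.5&0.1\end{pmatrix}\succeq 0$, and $A$ has eigenvalue $-0.4$. Also, the rescaling $\beta_{cc'}\mapsto\beta_{cc'}/\sqrt{\alpha_c\alpha_{c'}}$ changes the inter-class values, so your fixed-$\beta$ monotonicity no longer applies. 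The loss comparison and the KKT fallback are left undone.

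The missing idea is a small repair of your own class-mean construction: pad the means instead of renormalizing them. If $G=\widetilde Z\widetilde Z^\top$, the class means satisfy $\langle\bar z_c,\bar z_{c'}\rangle=\beta_{cc'}$ for $c\neq c'$ and $\|\bar z_c\|^2=(1+(\ell_c-1)\alpha_c)/\ell_c\le 1$. Set $\hat z_c=(\bar z_c,\sqrt{1-\|\bar z_c\|^2}\,e_c)$ with $e_1,\dots,e_C$ orthonormal in fresh coordinates; this is allowed because $\mathscr S_0$ carries no rank constraint. Assigning $\hat z_c$ to every sample of class $c$ gives a cosine matrix with intra-class entries $1$ and unchanged $\beta_{cc'}$.

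Equivalently, let $A$ be the $C\times C$ matrix with diagonal $\alpha_c$ and off-diagonal $\beta_{cc'}$. Then $A+\mathrm{diag}(1-\alpha_c)\succeq A+\mathrm{diag}((1-\alpha_c)/\ell_c)\succeq 0$, since the last matrix is the Gram matrix of the means. This is the feasibility the paper asserts in one line (``because of its block structure''). With it, your monotonicity gives a strict decrease unless every $\alpha_c=1$.

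Finally, order the argument as the paper does; your phrase ``both may be taken to be $W$-collapsed'' skips the first step.
\begin{enumerate}
\item Apply your $\LSE$-rigidity to an arbitrary minimizer $S$ and its symmetrization $S'$, which is also a minimizer. This gives $S=S'+c(E_n-I_n)$, so every minimizer is $W$-symmetric.
\item The collapse step then makes every minimizer $W$-collapsed.
\item Only then does rigidity between two minimizers force $c=0$ and give uniqueness.
\end{enumerate}
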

\begin{proof}
Let $S \in \mathscr S_0$ be a global minimum, which exists by compactness. By \cref{lemma:symmetrization}, there is a $W$-symmetric $S' \in \mathscr S_0$ such that $\mathcal L_{\mathrm{NCE}}^W(S') \leq \mathcal L_{\mathrm{NCE}}^W(S)$. But since $S$ is a global minimum over $\mathscr S_0$, so must be $S'$. Since the minimizers of a convex function over a convex set form a convex set, $\mathcal L_{\mathrm{NCE}}^W$ is identically equal to its minimum on the line segment $[S,S'] \subseteq \mathscr S_0$, so for any $t \in [0,1]$ it holds that $\mathcal L_{\mathrm{NCE}}^W(tS+(1-t)S') = t\mathcal L_{\mathrm{NCE}}^W(S) + (1-t)\mathcal L_{\mathrm{NCE}}^W(S')$. Expressing $\mathcal L_{\mathrm{NCE}}^W$ in terms of the $\LSE$ as at the start of the proof of \cref{lemma:symmetrization}, this implies that:
\begin{equation}\label{eq:lse-convex}
    \sum_{i\neq j} \frac{w_{ij}}{\sum_{k \neq i}w_{ik}} \LSE_{k \neq i}(ts_{ik}+ (1-t)s'_{ik}) = \sum_{i\neq j} \frac{w_{ij}}{\sum_{k \neq i}w_{ik}}\left(t \LSE_{k \neq i}(s_{ik}) + (1-t)\LSE_{k\neq i} (s'_{ik})\right).
\end{equation}
For each $i \in [1..n]$, the family $(s_{ik} - s'_{ik})_{k \neq i}$ must be a constant $c_i$, else by \cref{lemma:lse-strict} the strict convexity of $\LSE$ restricted to a line would cause (\ref{eq:lse-convex}) to be a strict inequality. Since $S$ and $S'$ are symmetric, $c_i=c$ does not depend on $i$. It follows that $S=S'+c(E_n - I_n)$ which is $W$-symmetric since $S', E_n$ and $I_n$  all are. By \cref{lemma:blockwise}, $S$ can be expressed simply in terms of the blocks induced by the class partition. It remains to be seen that $S$ is collapsed, i.e. the values of the diagonal blocks are identical and maximally equal to $1/\tau$. Let $(\alpha_c)_{c=1}^C$ denote the values of the diagonal blocks, i.e. \textit{intra}-class similarities, and $(\beta_{cc'})_{c\neq c'}^C$ the \textit{inter}-class similarities. Let $\mathcal I_c$ denote the set of indices belonging to class $c$, and $\ell_c = |\mathcal I_c|$ its size. Then,
\begin{align*}
    \mathcal L_{\mathrm {NCE}}^W(S) = -\frac{1}{n} \sum_{c=1}^C \frac{1}{\ell_c -1} \sum_{\substack{i,j \in \mathcal I_c\\ i \neq j}} \left(\alpha_c- \log\left((\ell_c-1)e^{\alpha_c} + \sum_{c'\neq c}\ell_{c'}e^{\beta_{cc'}}\right)\right).
\end{align*}
For each $c$, this is a strictly decreasing function of $\alpha_c$. Indeed,
$$\frac{\partial \mathcal L_{\mathrm {NCE}}}{\partial \alpha_c} = -\frac{1}{n} \frac{1}{\ell_c -1} \sum_{\substack{i,j \in \mathcal I_c\\ i \neq j}} \frac{\sum_{c'\neq c}\ell_{c'}e^{\beta_{cc'}}}{(\ell_c-1)e^{\alpha_c} + \sum_{c'\neq c}\ell_{c'}e^{\beta_{cc'}}}<0.$$
Because of its block structure, replacing the diagonal entries of $S$ with $1/\tau$ yields a valid matrix in $\mathscr S_0$.
By minimality of $\mathcal L_{\mathrm{NCE}}^W(S)$, it follows that $S$ must have the maximum value of $\alpha_c=1/\tau$ on the diagonal blocks, which shows that $S$ is $W$-collapsed.

We have shown that any global minimum is $W$-collapsed. As to uniqueness, note that if $\widetilde S$ is another global minimum then the same reasoning we carried out on $S'$ shows that $S=\widetilde S+c(E_n-I_n)$ for some constant $c$, which implies that $c=0$ as both $S$ and $\widetilde S$ are $W$-collapsed.
\end{proof}

\StatementSupconMinimum*

\begin{proof}
By \cref{th:minima_symmetric}, $\mathcal L_{\mathrm {NCE}}^W$ has a unique global minimum of $S$ over $\mathscr S_0$. Moreover, $S$ is $W$-collapsed, so in particular it is expressible in block form in terms of $C \times C$ constant blocks with shapes $\ell_c \times \ell_{c'}$ for $c,c' \in [1..C]$. Because of its constant block structure, $\rank(S) \leq C \leq q$, so $S \in \mathscr S$ and it is \textit{a fortiori} the unique global minimum of $\mathcal L_{\mathrm {NCE}}^W$ over $\mathscr S$. The existence of a geometric realization in terms of $C$ class prototypes follows from its constant block structure. Their essential uniqueness follows from the fact that points on a sphere are determined up to linear isometry by their cosine matrix. Finally, 3. follows from \cref{lemma:blockwise}.
\end{proof}
The upshot of \cref{th:supcon-minimum} is that regularity in class sizes can be exploited to greatly reduce the number of parameters in the optimization of the SupCon loss. For instance, we easily recover a following result from~\cite{graf2021dissecting} in the case of balanced classes. Recall that a vertices $\mu_1, \ldots, \mu_C \in \mathbb S^{q-1}$ are said to form a \textit{regular simplex} if $\mu_1+\cdots+\mu_C=\mathbf 0$ and the value of $\cos(\mu_c, \mu_{c'})$ is the same for all $c\neq c'$. This is equivalent to the condition $\cos(\mu_c, \mu_{c'})=1/(C-1)$ for all $c \neq c'$.
\begin{theorem}[Graf et al, 2021]\label{th:graf2021}
    If $C>1$ and $\ell_1=\cdots=\ell_C$ then class prototypes $\mu_1, \ldots, \mu_C \in \mathbb S^{q-1}$ which minimize the SupCon loss form the vertices of a regular simplex.
\end{theorem}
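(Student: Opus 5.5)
By \cref{th:supcon-minimum} and \cref{th:minima_symmetric}, the unique global minimum $S$ of $\mathcal L_{\mathrm{NCE}}^W$ over $\mathscr S_0$ is $W$-collapsed. So I would write it in block form as $(\beta_{cc'}\mathbf 1_{\ell_c\times\ell_{c'}})$ with $\beta_{cc}=1/\tau$. The first step exploits balance. When $\ell_1=\cdots=\ell_C=\ell$, all off-diagonal blocks have the same shape $\ell\times\ell$. The second statement of \cref{lemma:blockwise} then forces $\beta_{cc'}=\beta$ for all $c\neq c'$. Hence the minimization reduces to a single scalar parameter $\beta$, or equivalently $g:=\tau\beta=\cos(\mu_c,\mu_{c'})$.

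Next I would write the loss explicitly as a function of $\beta$, using the block expression computed in the proof of \cref{th:minima_symmetric} with $\alpha_c=1/\tau$:
\begin{equation*}
\mathcal L_{\mathrm{NCE}}^W = -\frac{1}{n}\sum_{c=1}^C \ell\left(\frac1\tau-\log\bigl((\ell-1)e^{1/\tau}+(n-\ell)e^{\beta}\bigr)\right).
\end{equation*}
Since $C>1$ gives $n-\ell>0$, this is strictly increasing in $\beta$. The minimum is therefore attained at the smallest $\beta$ for which the collapsed matrix still belongs to $\mathscr S_0$, i.e.\ for which $\tau S$ is a cosine matrix.

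The key step, which I expect to require the most care, is identifying this feasibility constraint. Write $\tau S=PGP^T$, where $P\in\{0,1\}^{n\times C}$ is the class-indicator matrix and $G=(1-g)I_C+gE_C$ is the prototype cosine matrix. Since $P$ has full column rank, $\tau S$ is PSD with unit diagonal iff $G$ is PSD. The eigenvalues of $G$ are $1-g$ (multiplicity $C-1$) and $1+(C-1)g$. So feasibility holds iff $g\ge -1/(C-1)$, and the minimizer has $\cos(\mu_c,\mu_{c'})=-1/(C-1)$ for all $c\neq c'$. This $G$ has rank $C-1\le q$, so the constraint is compatible with the rank condition defining $\mathscr S$, and prototypes exist in $\mathbb S^{q-1}$. (Note the paper's equivalence statement should read $-1/(C-1)$ rather than $1/(C-1)$.)

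Finally, I would check the regular-simplex conditions. The cosines are uniform by the previous step. For the centroid condition,
\begin{equation*}
\|\mu_1+\cdots+\mu_C\|^2=C+C(C-1)\cdot\Bigl(-\tfrac{1}{C-1}\Bigr)=0,
\end{equation*}
so $\mu_1+\cdots+\mu_C=\mathbf 0$. The prototypes are unique up to linear isometry by \cref{th:supcon-minimum}, which completes the argument.
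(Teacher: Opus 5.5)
Your proposal is correct and follows essentially the paper's own argument: for balanced classes, block symmetry reduces everything to a single inter-class value, the loss is strictly increasing in that value, and the optimum is the smallest value keeping $I_C+\beta(E_C-I_C)$ PSD. The paper checks that PSD condition with a quadratic form and Cauchy--Schwarz where you use eigenvalues, and your additions are sound: the explicit factorization $\tau S = PGP^T$, the rank remark, the centroid computation, and the sign correction to $-1/(C-1)$ in the paper's definition of a regular simplex.
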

\begin{proof}
    Let $\mu_1, \ldots, \mu_C$ be the optimal class prototypes. Per \cref{th:supcon-minimum} and since there is only one possible class size, $\cos(\mu_c, \mu_{c'})=\beta$ for some constant $\beta$, for every $c \neq c'$. It remains to show that $\beta=-1/(C-1)$. Expressing the minimal value of the loss in terms of $\beta$ at the optimum $S^*=(\cos(\mu_c, \mu_{c'})/\tau \cdot \mathbf 1_{\ell_c \times \ell_{c'}})_{c,c'=1}^C$, we see that
    \begin{align}\label{eq:supcon-beta}
        \mathcal L_{\mathrm{NCE}}^W(S^*)=-\frac1n \sum_{c=1}^C\frac{1}{\ell-1} \sum_{\substack{i,j \in \mathcal I_c\\ i \neq j}}\left(\frac{1}{\tau}-\log\left((\ell-1)e^{1/\tau}+(C-1)\ell e^{\beta/\tau}\right)\right),
        \end{align}
    where $\ell$ denotes the size of every class and $\mathcal I_c \subset [1..n]$ the set of indices belonging to class $c$. In particular we see that the RHS of (\ref{eq:supcon-beta}) is a strictly increasing function of $\beta$ so the value of $\beta$ must be as small as possible subject to the constraint that $S^*$ be PSD. Owing to the block structure of $S^*$, this amounts to the condition that $A_\beta:=I_C +\beta (E_C-I_C)$ is PSD. We can express the quadratic form associated with $A_\beta$ as follows:
    $$\mathbf x^\top A_\beta \mathbf x=(1-\beta)\sum_{c=1}^C x_c^2 + \beta \left(\sum_{c=1}^C  x_c\right)^2.$$ On the one hand, if $\beta=-1/(C-1)$ then $\mathbf x^\top A_\beta \mathbf x \geq 0$ by the Cauchy--Schwarz inequality, so $A_\beta$ is PSD. On the other hand, if $\beta <-1/(C-1)$ and $\mathbf x=(1,1,\ldots,1)$ then $\mathbf x^\top A_\beta \mathbf x=(1-\beta)C+\beta C^2=C(1+(C-1)\beta)<0$ so $A$ is not PSD.
    
    Thus, the optimal inter-class similarity is $\beta=-1/(C-1)$, as desired.
    %  . It follows that
    % $$\left\|\sum_{c=1}^C \mu_c \right\|^2=\sum_{c=1}^C \|\mu_c\|^2 + \sum_{c \neq c'} \langle \mu_c, \mu_{c'}\rangle=C+C(C-1)\beta=0,$$
    % hence $\mu_1+\cdots+\mu_C=\mathbf 0$ so the $\mu_c$ form the vertices of a regular simplex, as desired.
\end{proof}

\section{$\mathbb{X}$-CLR geometry}
\label{ss:cifar}

\begin{figure}[h]
\centering
\begin{minipage}[c]{0.48\textwidth}
    \centering
    \includegraphics[width=\linewidth]{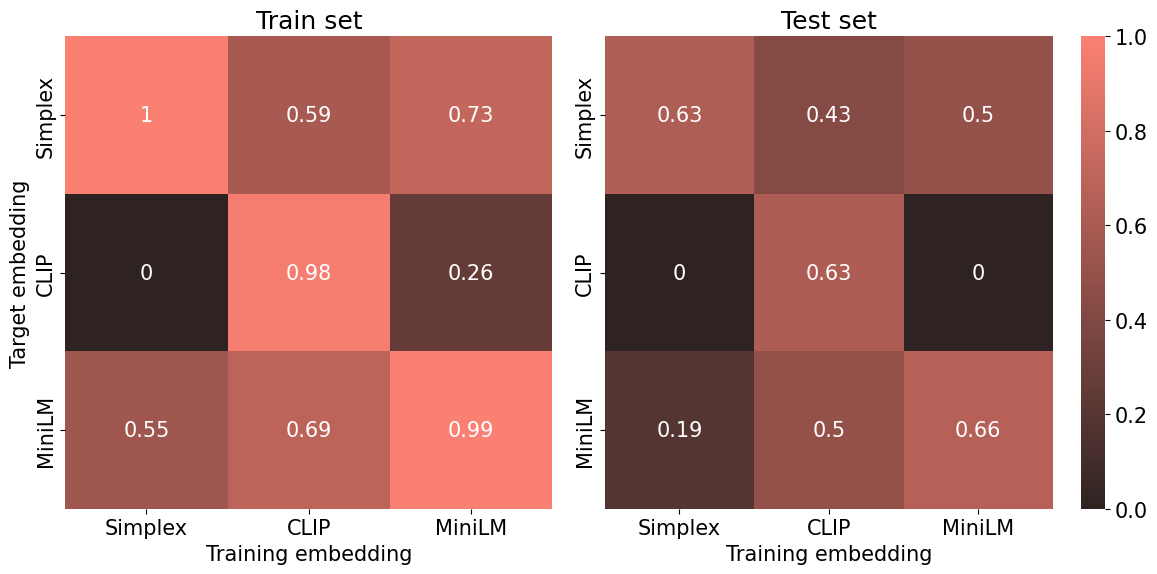}
    \caption{Alignment between learned and target label geometries. Heatmap of $\max(0, r^2_{\mathrm{Proc}}(Z, Z'))$ where columns correspond to the geometry used for training and rows to evaluation geometries. Diagonal entries indicate successful recovery of the target geometry, while off-diagonal values highlight geometric mismatch.}
    \label{fig:cifar}
\end{minipage}
\hfill
\begin{minipage}[c]{0.48\textwidth}
    \centering
    \resizebox{\linewidth}{!}{
    \begin{tabular}{lcccc}
    \toprule
    Target & $r^{2, \mathrm{test}}_{\text{Proc}}$ & $\Delta_W^{\mathrm{test}}$ (\%) & Top-1 (\%) & Top-3 (\%)\\
    \hline
    Simplex & 0.63 & 0.083 & 79.68 & 89.66 \\
    CLIP    & 0.63 & 0.090 & 73.38 & 86.67 \\
    MiniLM  & 0.66 & 0.038 & 78.37 & 89.15 \\
    \bottomrule
    \end{tabular}
    }
    \captionof{table}{Generalization statistics for $\mathbb{X}$-CLR models trained on CIFAR-100. Top-$k$ accuracy is computed on the test set using a cross-validated ridge regression fitted on training representations.}
    \label{tab:cifar}
\end{minipage}
\end{figure}

We evaluate the $\mathbb{X}$-CLR contrastive loss \cite{sobal2024} on CIFAR-100~\cite{cifar100} using a ResNet50 encoder with a $q=512$-dimensional latent space. Labels $y \in \mathbb{R}^\ell$ are encoded in three ways: (i) one-hot class vectors (simplex, $\ell=100$), (ii) CLIP ViT-B/32 text embeddings of the prompt “a photo of \texttt{<class>}” ($\ell=512$), and (iii) MiniLM\footnote{\url{https://huggingface.co/sentence-transformers/all-MiniLM-L6-v2}} embeddings of the same prompt ($\ell=384$).

For each model, we compute $r^2_{\mathrm{Proc}}(Z, Z^*)$, where $Z$ are the learned embeddings and $Z^*$ the target label embeddings used during training (zero-padded if needed). As a control, we also evaluate $r^2_{\mathrm{Proc}}(Z, Z')$ with alternative label geometries $Z'$. Results in~\cref{fig:cifar} show that the models recover their target geometry and that the different label geometries are mutually distinct.

Finally, we tested whether our proposed metrics could explain better the generalization properties of the pre-trained models. To do so, we computed the Procrustes and loss error for the three previous models pre-trained on three different labels geometries and we report them in~\cref{tab:cifar} along with their top-1 and top-3 accuracies for predicting the class labels of CIFAR-100 (seen as generalization error). Strikingly, the $\mathbb{X}$-CLR pre-trained with MiniLM geometry generalizes very well while having the best Procrustes and loss error. It suggests that strong performance on our metrics translates into better model generalization on new tasks.

\section{Connection with Kernel PCA and SNE}
\subsection{Kernel PCA}

We state a special case that gives a precise connection between Kernel PCA and $w$-InfoNCE:

\begin{restatable}[Kernel $w$-InfoNCE]{theorem}{StatementKernelInfoNCE}
    \label{th:infonce_eq_kpca}
    Let $K=(k_{ij})$ be an $n\times n$ positive semi-definite  kernel matrix inducing the squared kernel distance $d_{ij}:=k_{ii} + k_{jj} - 2k_{ij}$. Assume the rank of the centered kernel $\widetilde K$ is at most $q$. The optimal embedding minimizing $w$-InfoNCE with $w_{ij}=\exp(-d_{ij})$ and $s_{ij}=-\|z_i-z_j\|^2$ corresponds to the kernel principal components of $K$, up to Euclidean isometry.
\end{restatable}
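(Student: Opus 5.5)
The plan is to reduce the statement to \cref{th:mds_solution} by showing that the kernel distance matrix $D=(d_{ij})$ is an EDM whose embedding dimension equals $\rank(\widetilde K)\le q$. We then identify the kernel principal components as one explicit realization of $D$; essential uniqueness will do the rest.

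\textbf{Step 1: $D$ is an EDM.} Since $K$ is PSD, factor $K=\Phi\Phi^\top$ with rows $\phi_i$. Then
\[
d_{ij}=k_{ii}+k_{jj}-2k_{ij}=\|\phi_i-\phi_j\|^2,
\]
so $D$ is an EDM with zero diagonal. Write $D=\mathrm{diag}(K)\mathbf 1^\top+\mathbf 1\,\mathrm{diag}(K)^\top-2K$. Because $J\mathbf 1=0$, the centered Gram matrix of \cref{th:alfakih} is $B=-\tfrac12 JDJ=JKJ=\widetilde K$. By \cref{th:alfakih}, the embedding dimension is $r=\rank(\widetilde K)\le q$, and $\rank(D)\le r+2\le q+2$. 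This verifies the rank hypothesis of \cref{th:mds_solution}.

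\textbf{Step 2: invoke \cref{th:mds_solution}.} Assuming $n>2q+3$ (the standing assumption of \S\ref{sec:sup_regression}, which I will state explicitly), the theorem gives that the entropic lower bound is attained. It also gives that every minimizer $Z^*$ satisfies $\|z_i^*-z_j^*\|^2=d_{ij}$ and is unique up to Euclidean isometry.

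\textbf{Step 3: kernel PCA realizes $D$.} Take the eigendecomposition $\widetilde K=U\Lambda U^\top$ with $r$ nonzero eigenvalues. The kernel principal component scores are $Y=U_r\Lambda_r^{1/2}\in\mathbb R^{n\times r}$, zero-padded to $\mathbb R^{n\times q}$. Then $YY^\top=\widetilde K=J\Phi(J\Phi)^\top$, so the rows of $Y$ have the same Gram matrix as the centered features $J\Phi$. Their pairwise squared distances are therefore $\|\phi_i-\phi_j\|^2=d_{ij}$, since centering does not change differences. Hence $Y$ attains the bound, and by the uniqueness in Step 2 any optimal $Z^*$ differs from $Y$ by a Euclidean isometry.

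The main obstacle is Step 1, specifically the identity $-\tfrac12 JDJ=JKJ$, which must be checked carefully so that the rank bound on $D$ required by \cref{th:mds_solution} follows. Everything else is a direct application of earlier results.
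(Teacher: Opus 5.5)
Your proposal is correct and follows the paper's proof: show that $D$ is an EDM with embedding dimension $\rank(\widetilde K)\le q$ (so $\rank(D)\le q+2$), exhibit the kernel principal components as one realization, and conclude by the essential uniqueness in \cref{th:mds_solution}. You are right to state $n>2q+3$ explicitly. The paper's statement omits it, but its proof relies on \cref{th:mds_solution}, which needs it for uniqueness (the paper cites \cref{th:alfakih} at that point, where \cref{th:mds_solution} is evidently meant), and your check that $-\tfrac12 JDJ=JKJ=\widetilde K$ fills in a step the paper leaves implicit.
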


\begin{proof}
    Since $K$ is a symmetric positive-definite kernel, $D=(d_{ij})$ is an EDM with embedding dimension $r=\rank(\widetilde{K})\le q$. It follows that $\rank(D)\le q+2$ and one realization of this EDM is exactly the principal components of $K$, i.e. the eigenvectors of $\widetilde{K}$ scaled by the square root of its eigenvalues as per the to the characterization of EDMs in~\cite{alfakih2018euclidean}. We then conclude with~\cref{th:alfakih}. 
\end{proof}

\subsection{SNE}

Stochastic Neighbor Embedding (SNE)~\cite{hinton2002stochastic} is the predecessor of t-SNE and was originally proposed for data visualization. It learns an embedding \(Z\) of high-dimensional data \(X\) by minimizing a $w$-InfoNCE objective with weights $w_{ij}=\exp\!\left(-\|x_i-x_j\|^2/(2\sigma_i^2)\right)$, and similarities $s_{ij}=-\|z_i-z_j\|^2$. This objective violates Assumption~\ref{hyp:symmetry}, since in general \(w_{ij}\neq w_{ji}\). The asymmetry arises from non-uniform local densities in $X$, which induce point-dependent variance $\sigma_i$. If instead one assumes a constant variance $\sigma_i=\sigma$, the SNE objective enforces
$\|z_i-z_j\|\approx\|x_i-x_j\|/\sigma$, as shown in Corollary~\ref{corr:infonce_eq_mds}. 

In this setting, SNE is equivalent to $w$-InfoNCE and recovers an approximation of the principal components of $X$. This approximation becomes exact when the rank of the covariance matrix of $X$ is smaller than the number of components (i.e. embedding dimension), as shown in~\cref{th:infonce_eq_kpca}. 

Below, we show that in that setting, PCA and SNE/$w$-InfoNCE representations are indeed quasi-isometric.

\begin{figure}[h]
    \centering
    \includegraphics[width=.6\linewidth]{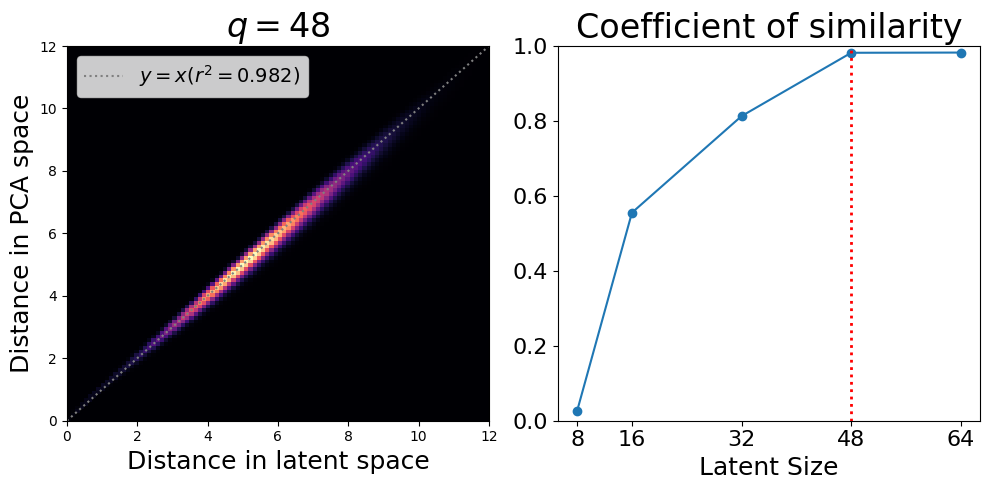}
    \caption{PCA against weighted InfoNCE representations of MNIST with $q$ latent dimensions (or \textit{components}) using Euclidean input distance as dissimilarity measure.}
    \label{fig:pca_vs_winfonce}
\end{figure}

We train a ConvNet on MNIST with latent dimension $q\in \{8, 16, 32, 48,64\}$ and $\sigma=10^3$. In~\cref{fig:pca_vs_winfonce} we report the coefficient of similarity $r^2_{s\textrm{-sim}}$ between $w$-InfoNCE representations and PCA of the test set with the same latent size $q$. We observe that $q=48$ gives an almost perfect score $r^2_{s\textrm{-sim}}=0.98$, which is also confirmed visually when plotting the distances in the ConvNet latent space against PCA space. When $q=48$, PCA explains 90\% of variance, suggesting that $r \approx q$ in this case, confirming our theory.

\section{Implementation details}\label{appdx:exp_details}

The experiments in this paper were carried out in Python using the PyTorch package and a single NVIDIA RTX~4500 Graphics Processing Unit (GPU) with 24~GB of memory.

The code required to reproduce all the experiments in this paper can be found in the anonymized repository with the following URL:

\begin{center}
\url{https://anonymous.4open.science/r/neurips26_geometric_cl-5C16/}
\end{center}

Key details and parameter values are summarized in the following table.
\begin{table}[ht]
\centering
\scriptsize
\setlength{\tabcolsep}{4pt}
\renewcommand{\arraystretch}{1.2}
\begin{tabular}{c|ccccc}
\textbf{\S} & \ref{sec:w_infonce_mnist} & \ref{sec:class_imbalance} & \ref{sec:y_aware} & Appendix~\ref{ss:cifar} \\
\toprule
Dataset        & MNIST & N/A & MNIST & CIFAR-100 \\\hline
File(s) & \makecell[l]{\texttt{mnist\_1d.ipynb}\\ \texttt{mnist\_nd.ipynb}} & \texttt{imbalanced\_supcon.py} & \texttt{mnist\_1d.ipynb} & \texttt{experiments/cifar100/} \\\hline
Architecture        & 3-layer ConvNet & N/A & 3-layer ConvNet & ResNet-50 \\\hline
Loss           & Eucl.\ $w$-InfoNCE & Hard/Soft SupCon & $y$-Aware & $\mathbb X$-CLR \\\hline
Optimizer      & Adam & Adam & Adam & Adam \\\hline
LR             & \texttt{1e-3} & \texttt{5e-2} & \texttt{1e-3} & \texttt{1e-4} \\\hline
WD             & \texttt{2e-6} & N/A & \texttt{2e-6} & \texttt{1e-6} \\\hline
Epochs         & $20$--$50$ & $<20\mathrm{k}$ & $50$ & 150 \\\hline
Batch size     & 512 & full & 512 & 512 \\\hline
Key parameters
  & \makecell[l]{$d=2$ or $d \in [5..12]$,\\
 }
  & \makecell[l]{Hard: $\tau\!=\!0.1$
  \\Soft: $\varepsilon\!=\!e^{-1}$, $\tau=\tau^*$\\$d=C\!=\!10$}
  & \makecell[l]{$d\!=\!2$, $\tau=0.1$}
  & \makecell[l]{$q\!=\!512$,\\$\ell\!\in\!\{100,512,384\}$\\ $\tau=0.1$\\} \\
\vspace{1cm}
\end{tabular}
\caption{%
  Implementation details.
}
\end{table}

%%%%%%%%%%%%%%%%%%%%%%%%%%%%%%%%%%%%%%%%%%%%%%%%%%%%%%%%%%%%
% CHECKLIST

%\newpage
%\input{checklist.tex}

\end{document}